\newcommand{\NN}[1]{{\N}_{#1}}
\DeclareMathOperator*{\argmax}{\arg\!\max} 
\DeclareMathOperator*{\argmin}{\arg\!\min} 
\DeclareMathOperator{\prox}{prox} 
\DeclareMathOperator{\supp}{supp}
\let\inf\relax \DeclareMathOperator*\inf{\vphantom{p}inf}
\let\sup\relax \DeclareMathOperator*\sup{\vphantom{p}sup}
\newcommand{\trans}{^{\scriptscriptstyle \top}}
\newcommand{\tr}{{\rm tr}}
\newcommand{\R}{{\mathbb R}}
\newcommand{\G}{{\cal G}}
\newcommand{\WW}{{V}}
\newcommand{\ww}{{v}}
\newcommand{\card}{{\rm card}}
\newcommand{\N}{{\mathbb N}}
\newcommand{\ind}{{\ell}}
\newcommand{\beq}{\begin{equation}}
\newcommand{\eeq}{\end{equation}} 
\newcommand{\bea}{\begin{eqnarray}}
\newcommand{\eea}{\end{eqnarray}}
\newcommand{\E}{{\mathbb{E}}}
\newcommand{\lam}{{\lambda}} 
 \newcommand{\calL}{{\cal L}}
\newcommand{\lb}{{\langle}}
\newcommand{\rb}{{\rangle}}
\def\boldf#1{\hbox{\rlap{$#1$}\kern.4pt{$#1$}}}
\newcommand{\calG}{{\cal G}}
\newcommand{\rank}{{\rm rank}}
\begin{document}

\newtheorem{fact}[theorem]{Fact}

\title{New Perspectives on $k$-Support and Cluster Norms}

\author{\name Andrew M. McDonald \email a.mcdonald@cs.ucl.ac.uk\\
 \addr Department of Computer Science \\
       University College London \\
       Gower Street, London WC1E 6BT, UK        
\AND
		\name Massimiliano Pontil \email m.pontil@cs.ucl.ac.uk\\
		\addr 
		Istituto Italiano di Tecnologia \\ 
		Via Morego, 30, 16163 Genoa, Italy~ \\
		Department of Computer Science \\ University College London \\
       Gower Street, London WC1E 6BT, UK  
\AND 
       \name Dimitris Stamos \email d.stamos@cs.ucl.ac.uk \\
       \addr Department of Computer Science \\
       University College London \\
       Gower Street, London WC1E 6BT, UK
       }

\editor{}

\maketitle

\begin{abstract}
We study a regularizer which is defined as a parameterized infimum of quadratics, and which we call the box-norm.  
We show that the $k$-support norm, a regularizer proposed by \cite{Argyriou2012} for sparse vector prediction problems, belongs to this family, and the box-norm can be generated as a perturbation of the former.  
We derive an improved algorithm to compute the proximity operator of the squared box-norm, and we provide a method to compute the norm. 
We extend the norms to matrices, introducing the spectral $k$-support norm and spectral box-norm. 
We note that the spectral box-norm is essentially equivalent to the cluster norm, a multitask learning regularizer introduced by \citet{Jacob2009-CLUSTER}, and which in turn can be interpreted as a perturbation of the spectral $k$-support norm.  Centering the norm is important for multitask learning and we also provide a method to use centered versions of the norms as regularizers. 
Numerical experiments indicate that the spectral $k$-support and box-norms and their centered variants
provide state of the art performance in matrix completion and multitask learning problems respectively.

\end{abstract}

\vspace{.2truecm}
\begin{keywords}
Convex Optimization, Matrix Completion, Multitask Learning, Spectral Regularization, Structured Sparsity.
\end{keywords}

\section{Introduction}
\label{sec:intro}
We continue the study of a family of norms which are obtained by taking the infimum of a class of quadratic functions. 
These norms can be used as a regularizer in linear regression learning problems, where the parameter set can be tailored to assumptions on the underlying regression model. 
This family of norms is sufficiently rich to encompass regularizers such as the $\ell_p$ norms, the group Lasso with overlap \citep{Jacob2009-GL} and the norm of \cite{Micchelli2013}. 
In this paper we focus on a particular norm in this framework -- the box-norm -- in which the parameter set involves box constraints and a linear constraint. 
We study the norm in detail and show that it can be generated as a perturbation of the $k$-support norm introduced by \citet{Argyriou2012} for sparse vector estimation, which hence can be seen as a special case of the box-norm. 
Furthermore, our variational framework allows us to study efficient algorithms to compute the norms and the proximity operator of the square of the norms.

Another main goal of this paper is to extend the $k$-support and box-norms to a matrix setting. 
We observe that both norms are symmetric gauge functions, hence by applying them to the spectrum of a matrix we obtain two orthogonally invariant matrix norms. 
In addition, we observe that the spectral box-norm is essentially equivalent to the cluster norm introduced by \cite{Jacob2009-CLUSTER} for multitask clustering, which in turn can be interpreted as a perturbation of the spectral $k$-support norm.

The characteristic properties of the vector norms translate in a natural manner to matrices. In particular, the unit ball of spectral $k$-support norm is the convex hull of the set of matrices of rank 
no greater than $k$, and Frobenius norm bounded by one. 
In numerical experiments we present empirical evidence on the strong performance of the spectral $k$-support norm in low rank matrix completion and multitask learning problems.

Moreover, our computation of the vector box-norm and its proximity operator extends naturally to the spectral case, which allows us to use proximal gradient methods to solve regularization problems using the cluster norm. Finally, we provide a method to use the centered versions of the penalties, which are important in applications \citep[see e.g.][]{Evgeniou2007,Jacob2009-CLUSTER}.

\subsection{Related Work}
Our work builds upon a recent line of papers which considered convex regularizers defined as an infimum problem over a parametric family of quadratics, as well as related infimal convolution problems 
\citep[see][and references therein]{Jacob2009-GL,Bach2011,Maurer2012,Micchelli2005,Obozinski2012}.
Related variational formulations for the Lasso have also been discussed in \citep{Grandvalet1998} and further studied in \citep{Grandvalet2007}.

To our knowledge, the box-norm was first suggested by \cite{Jacob2009-CLUSTER} and used as a symmetric gauge function in matrix learning problems. 
The induced orthogonally invariant matrix norm is named the {\em cluster norm} in \citep{Jacob2009-CLUSTER} and was motivated as a convex relaxation of a multitask clustering problem. Here we formally prove that the cluster norm is indeed an orthogonal invariant norm. More importantly, we explicitly compute the norm and its proximity operator. 

A key observation of this paper is the link between the box-norm and the $k$-support norm and in turn the link between the cluster norm and the spectral $k$-support norm. 
The $k$-support norm was proposed in \citep{Argyriou2012} for sparse vector prediction and was shown to empirically outperform the Lasso \citep{Tibshirani1996} and Elastic Net \citep{Zou2005} penalties. 
See also \cite{Gkirtzou2013} for further empirical results. 

In recent years there has been a great deal of interest in the problem of learning a low rank matrix from a set of linear measurements. 
A widely studied and successful instance of this problem arises in the context of matrix completion or collaborative filtering, in which we want to recover a low rank (or approximately low rank) matrix from a small sample of its entries, see e.g. \citet{Srebro2005,Abernethy2009} and references therein. 
One prominent method of solving this problem is trace norm regularization: we look for a matrix which closely fits the observed entries and has a small trace norm (sum of singular values) \citep{Jaggi2010,Toh2011,Mazumder2010}. In our numerical experiments we consider the spectral $k$-support norm and spectral box-norm as alternatives to the trace norm and compare their performance.

Another application of matrix learning is multitask learning. 
In this framework a number of tasks, such as classifiers or regressors, are learned by taking advantage of commonalities between them.  This can improve upon learning the tasks separately, for instance when insufficient data is available to solve each task in isolation \citep[see e.g.][]{Evgeniou2005, Argyriou2006, Argyriou2008, Jacob2009-CLUSTER, Cavallanti, Maurer2006, Maurer2008}.
An approach which has been successful is the use of spectral regularizers such as the trace norm to learn a matrix where the columns represent the individual tasks, and in this paper we compare the performance of the spectral $k$-support and box-norms as penalties in multitask learning problems.  

Finally, we note that this is a longer version of the conference paper \citep{McDonald2014a} and includes new theoretical and experimental results.   


\subsection{Contributions}
We summarise the main contributions of this paper.
\begin{itemize}
\item 
We show that the vector $k$-support norm is a special case of the more general \emph{box-norm}, which in turn can be seen as a perturbation of the former. 
The box-norm can be written as a parameterized infimum of quadratics, and this framework is instrumental in deriving a fast algorithm to compute the norm and the proximity operator of the squared norm in $\mathcal{O}(d \log d)$ time. 
Apart from improving on the $\mathcal{O}(d(k + \log d))$ algorithm for the proximity operator in \citet{Argyriou2012}, this method allows one to use optimal first order optimization algorithms \citep{Nesterov2007} for the box-norm\footnote{We note that recently \cite{Chatterjee2014} showed that the proximity operator of the vector $k$-support norm can be computed in $O(d\log d)$. Here we directly follow \cite{Argyriou2012} and consider the squared $k$-support norm.}. 
\item 
We extend the $k$-support and box-norms to orthogonally invariant matrix norms. We note that the spectral box-norm is essentially equivalent to the cluster norm, which in turn can be interpreted as a perturbation of the spectral $k$-support norm in the sense of the Moreau envelope. 
Our computation of the vector box-norm and its proximity operator also extends naturally to the spectral case. 
This allows us to use proximal gradient methods for the cluster norm. 
Furthermore, we provide a method to apply the centered versions of the penalties, which are important in applications.
\item
We present extensive numerical experiments on both synthetic and real matrix learning datasets. 
Our findings indicate that regularization with the spectral $k$-support and box-norms produces state-of-the art results on a number of popular matrix completion benchmarks and centered variants of the norms show a significant improvement in performance over the centered trace norm and the matrix elastic net on multitask learning benchmarks. 
\end{itemize}

\subsection{Notation}
\label{sec:notation}

We use $\NN{n}$ for the set of integers from $1$ up to and including $n$. 
We let $\R^d$ be the $d$ dimensional real vector space, whose elements are denoted by lower case letters. We let $\R^d_+$ and $\R^d_{++}$ be the subsets of vectors with nonnegative and strictly positive components, respectively. 
We denote by $\Delta^{d}$ the unit $d$-simplex, $\Delta^{d} = \{\lambda \in \R^{d+1}: \sum_{i=1}^{d+1} \lambda_i = 1\}$. 
For any vector $w\in \R^d$, its {\em support} is defined as $\textrm{supp}(w) = \{i: w_i \neq 0\} \subseteq \NN{d}$. 
We use $1$ to denote either the scalar or a vector of all ones, whose dimension is determined by its context.  
Given a subset $g$ of $\NN{d}$, the $d$-dimensional vector $1_g$ has ones on the support $g$, and zeros elsewhere. 
We let $\R^{d \times T}$ be the space of $d \times T$ real matrices and write $W=[w_1, \ldots, w_T]$ to denote the matrix whose columns are formed by the vectors
$w_1, \ldots, w_T \in \R^d$. For a vector $\sigma \in \R^d$, we denote by $\textrm{diag}(\sigma)$ the $d \times d$ diagonal matrix having elements $\sigma_i$ on the diagonal. 
We say matrix $W \in \R^{d \times T}$ is diagonal if $W_{ij}=0$ whenever $i\ne j$. 
We denote the trace of a matrix $W$ by $\tr (W)$, and its rank by $\textrm{rank}(W)$. 
We let $\sigma(W) \in \R_+^r$ be the vector formed by the singular values of $W$, where $r=\min(d,T)$, and where we assume that the singular values are ordered nonincreasing, i.e. 
$\sigma_1(W) \geq \ldots \geq \sigma_r(W) \geq 0$.
We use ${\bf S}^d$ to denote the set of real $d \times d$ symmetric matrices, and ${\bf S}^d_+$ to denote the subset of positive semidefinite matrices. 
We use $\succeq$ to denote the positive semidefinite ordering on ${\bf S}^d$. 
The notation $\langle \cdot, \cdot \rangle$ denotes the standard inner products on $\R^d$ and $\R^{d \times T}$, that is $\langle x,y \rangle = \sum_{i=1}^d x_i y_i$ for $x,y \in \R^d$, and $\langle X, Y \rangle = \tr (X\trans Y)$, for $X,Y \in \R^{d \times T}$. 
Given a norm $\Vert \cdot \Vert$ on $\R^d$ or $\R^{d \times T}$, $\Vert \cdot \Vert_*$ denotes the corresponding dual norm, given by $\Vert u \Vert_* = \sup \{ \langle u,w \rangle : \Vert w \Vert \leq 1  \}$. 
On $\R^d$ we denote by $\Vert \cdot \Vert_2$ the Euclidean norm, and on $\R^{d \times T}$ we denote by $\Vert \cdot \Vert_{\rm F}$ the Frobenius norm and by $\Vert \cdot \Vert_{\rm tr}$ the trace norm, that is the sum of singular values.  

\subsection{Organization}
The paper is organized as follows. 
In Section \ref{sec:vector-norms}, we review a general class of norms and characterize their unit ball. 
In Section \ref{sec:box}, we specialize these norms to the box-norm, which we show is a perturbation of the $k$-support norm. 
We study the properties of the norms and we describe the geometry of the unit balls. 
In Section \ref{sec:computation-of-norm-and-prox}, we compute the box-norm and we provide an efficient method to compute the proximity operator of the squared norm. 
In Section \ref{sec:matrix-norms}, we extend the norms to orthogonally invariant matrix norms -- the spectral $k$-support and spectral box-norms -- 
and we show that these exhibit a number of properties which relate to the vector properties in a natural manner. 
In Section \ref{sec:MTL}, we review the clustered multitask learning setting, we recall the cluster norm introduced by \citet{Jacob2009-CLUSTER} and we show that the cluster norm corresponds to the spectral box-norm.  
We also provide a method for solving the resulting matrix regularization problem using ``centered" norms. 
In Section \ref{sec:numerics}, we apply the norms to matrix learning problems on a number of simulated and real datasets and report on their performance.  
In Section \ref{sec:extensions}, we discuss extensions to the framework and suggest directions for future research.  Finally, in Section \ref{sec:conclusion}, we conclude. 

\section{Preliminaries}
\label{sec:vector-norms}
In this section we review a family of norms parameterized by a set $\Theta$, and which we call the $\Theta$-norms. 
They are closely related to the norms considered in \citet{Micchelli2010,Micchelli2013}. Similar norms are also discussed in \citet[][Sect.~1.4.2]{Bach2011} where they are called $H$-norms.  We first recall the definition of the norm. 

\begin{definition}
\label{def:theta-norms}
Let $\Theta$ be a convex bounded subset of the open positive orthant.  For $w \in \R^d$ the $\Theta$-norm is defined as
\begin{align}
\Vert w \Vert_{\Theta} = 
\sqrt{ \inf_{\theta \in \Theta} \hspace{.05truecm} \sum_{i=1}^d \frac{w_i^2}{\theta_i} }. \label{eqn:theta-primal}
\end{align}
\end{definition}
Note that the function $(w,\theta) \mapsto  \sum_{i=1}^d \frac{w_i^2}{\theta_i}$ is strictly convex on $\R^d \times \R_{++}^d$, hence every minimizing sequence converges to the same point. 
The infimum is, however, not attained in general because a minimizing sequence may converge to a point on the boundary of $\Theta$. 
For instance, if $\Theta = \{\theta \in \R^d_{++}: \sum_{i=1}^d \theta_i \leq 1\}$, then $\|w\|_\Theta =\|w\|_1$ and the minimizing sequence converges to the point $(\frac{|w_1|~}{\Vert w \Vert_1},\dots,\frac{|w_d|~}{\Vert w \Vert_1})$, which belongs to $\Theta$ only if all the components of $w$ are different from zero.

\begin{proposition}
\label{prop:theta-is-norm}
The $\Theta$-norm is well defined and the dual norm is given, for $u \in \R^d$, by 
\begin{align}
\Vert u \Vert_{*,\Theta} = \sqrt{ \sup\limits_{\theta \in \Theta}  \hspace{.05truecm} \sum\limits_{i=1}^d \theta_i u_i^2} \label{eqn:theta-dual}.
\end{align}
\end{proposition}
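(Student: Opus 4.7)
My plan is to establish the two statements of the proposition in two parts: first verify that the right-hand side of \eqref{eqn:theta-dual} defines a norm on $\R^d$, and then show that this norm is the dual of $\|\cdot\|_\Theta$ (which simultaneously forces the latter to be a norm).

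For the first part, I would note that for each fixed $\theta \in \Theta \subset \R_{++}^d$, the map $u \mapsto \sqrt{\sum_{i=1}^d \theta_i u_i^2} = \|\mathrm{diag}(\sqrt{\theta})\, u\|_2$ is a weighted Euclidean norm, since $\mathrm{diag}(\sqrt{\theta})$ is invertible. The boundedness of $\Theta$ ensures that the pointwise supremum of this family over $\theta \in \Theta$ is finite for every $u$, and the supremum of a family of norms (when finite) is again a norm: absolute homogeneity and the triangle inequality pass through the supremum, while definiteness follows from the fact that $\Theta$ is nonempty and contained in the strictly positive orthant, so a single $\theta \in \Theta$ already yields a norm. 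Hence $\|\cdot\|_{*,\Theta}$ is a well-defined norm.

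For the second part, I would adopt a polar-duality viewpoint. The function $f(w) = \inf_{\theta \in \Theta} \sum_i w_i^2/\theta_i$ is convex in $w$: each term $w_i^2/\theta_i$ is the perspective of $w_i^2$, hence jointly convex on $\R \times \R_{++}$, and the infimum over $\theta$ in a convex set of a jointly convex function is convex in $w$. It is also positively two-homogeneous and strictly positive for $w \neq 0$ (bounding $\theta_i \leq M$ for some $M$ gives $f(w) \geq \|w\|_2^2/M$). Consequently the sublevel set $B = \{w : f(w) \leq 1\}$ is convex, balanced, bounded, and absorbs the origin, and its Minkowski functional is precisely $\sqrt{f(w)} = \|w\|_\Theta$, which is therefore a norm. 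Next, I would observe that
\begin{align*}
B \;=\; \bigcup_{\theta \in \Theta} B_\theta, \qquad \text{where } B_\theta = \Bigl\{w : \sum_i w_i^2/\theta_i \leq 1\Bigr\}
\end{align*}
is a union of ellipsoidal unit balls whose polars are $B_\theta^\circ = \{u : \sum_i \theta_i u_i^2 \leq 1\}$. Since the polar of a union is the intersection of the polars, I would conclude that
\begin{align*}
B^\circ \;=\; \bigcap_{\theta \in \Theta} B_\theta^\circ \;=\; \Bigl\{u : \sup_{\theta \in \Theta} \sum_i \theta_i u_i^2 \leq 1\Bigr\},
\end{align*}
which is the unit ball of $\|\cdot\|_{*,\Theta}$. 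The bipolar theorem then identifies $\|\cdot\|_{*,\Theta}$ as the dual norm of $\|\cdot\|_\Theta$.

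The main obstacle is the technical point that the infimum defining $\|w\|_\Theta$ need not be attained because $\Theta$ is only assumed convex and bounded, not closed. This affects whether $B$ is closed and whether the bipolar theorem applies directly. I would handle this by passing to $\bar\Theta$ (extending $w_i^2/\theta_i$ by $+\infty$ when $\theta_i = 0$ and $w_i \neq 0$): the extended function is jointly lower semicontinuous, so the infimum is attained on the compact $\bar\Theta$ and $f$ is lsc, making $B$ closed. A straightforward approximation argument shows that both $\sup_\theta \sum \theta_i u_i^2$ and $\inf_\theta \sum w_i^2/\theta_i$ are unchanged when $\Theta$ is replaced by $\bar\Theta$, so the identification of the dual norm proceeds as above.
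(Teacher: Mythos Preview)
Your proof is correct but follows a genuinely different route from the paper's. The paper argues via Fenchel duality and a minimax swap: it first notes (as you do) that $\|\cdot\|_{*,\Theta}$ is a norm as a supremum of norms, then computes the biconjugate of $\tfrac12\|\cdot\|_{*,\Theta}^2$ by writing $\sup_u\{\langle w,u\rangle - \tfrac12\sup_\theta \sum_i \theta_i u_i^2\}$, interchanging the order of $\sup_u$ and $\inf_\theta$ via a von Neumann--type minimax argument, and solving the resulting quadratic in $u$ explicitly to obtain $\tfrac12\sum_i w_i^2/\theta_i$. Your approach is instead geometric: you establish directly that $\sqrt{f}$ is a norm via convexity of the perspective and the Minkowski functional, then identify the unit balls and compute the polar of the union $\bigcup_\theta B_\theta$ as the intersection $\bigcap_\theta B_\theta^\circ$. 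Both are valid; the paper's method is shorter and avoids any discussion of closedness of $B$ or attainment of the infimum (the minimax step absorbs these technicalities), while your polar-set argument is more transparent about the underlying geometry and in fact anticipates the picture the paper develops later, in Proposition~\ref{prop:geo2} and equation~\eqref{eqn:dual-unit-ball}, where the dual unit ball is described as an intersection of ellipsoids.
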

\begin{proof}
Consider the expression for the dual norm. The function $\|\cdot\|_{*,\Theta}$ is a norm since it is a supremum of norms. Recall that the Fenchel conjugate $h^*$ of a function $h:\mathbb{R}^d\rightarrow\mathbb{R}$ is defined for every $u \in \R^d$ as $h^*(u) = \sup \left\{\langle u,w \rangle-h(w) : w \in \R^d\right\}$. 
It is a standard result from convex analysis that for any norm $\Vert \cdot \Vert$, the Fenchel conjugate of the function $h := \frac{1}{2}\Vert \cdot \Vert^2$ satisfies $h^* = \frac{1}{2}\Vert \cdot \Vert_*^2$, where $\Vert \cdot \Vert_*$ is the corresponding dual norm \citep[see, e.g.][]{Lewis1995}. By the same result, for any norm the biconjugate is equal to the norm, that is $(\Vert \cdot \Vert^{*})^* = \Vert \cdot \Vert$. 
Applying this to the dual norm we have, for every $w \in \R^d$, that
\begin{align*}
h(w) = \sup_{u\in \R^d } \left\{ \langle w,u \rangle - h^*(u)\right\} = 
 \sup_{u \in \R^d} \inf_{\theta \in \Theta }  \left\{ \sum_{i=1}^d \left( w_i u_i - \frac{1}{2} \theta_i u_i^2\right)  \right\}.	
\end{align*}
This is a minimax problem in the sense of von Neumann  \citep[see e.g. Prop. 2.6.3 in][]{Bertsekas2003}, and we can exchange the order of the $\inf$ and the $\sup$, and solve the latter (which is in fact a maximum) componentwise.  The gradient with respect to $u_i$ is zero for $u_i =\frac{w_i}{\theta_i}$, and substituting this into the objective function we obtain $h(w)= \frac{1}{2} \|w\|_\Theta^2$. It follows that the expression in \eqref{eqn:theta-primal} defines a norm, and its dual norm is defined by \eqref{eqn:theta-dual}, as required.
\end{proof}

The $\Theta$-norm \eqref{eqn:theta-primal} encompasses a number of well known norms.  
For instance, for $p\in [1,\infty)$ the $\ell_p$ norm is defined, for every $w \in \R^d$, as $\|w\|_p = \big(\sum_{i=1}^d |w_i|^p\big)^{\frac{1}{p}}$, if $p \in [1,\infty)$~and $\|w\|_\infty = \max_{i=1}^d |w_i|$. 
For $p \in [1,2)$, one can show \citep[][Lemma 26]{Micchelli2005}, that $\Vert w \Vert_p = \Vert w\Vert_{\Theta_p}$, where we have defined $\Theta_p= \big\{\theta \in \R^d_{++} : \sum_{i=1}^d \theta_i^{\frac{p}{2-p}} \leq 1\big\}$. For $p=1$ this confirms the set $\Theta$ corresponding to the $\ell_1$ norm as claimed above. Similarly, for $p\in (2, \infty]$ we have that $\Vert w \Vert_p = \Vert w \Vert_{*,\Theta_q}$, where $\frac{1}{p}+\frac{1}{q}=1$. The $\ell_2$-norm is obtained as both a primal and dual $\Theta$-norm in the limit as $p$ tends to 2. 
See also \citet{Aflalo2011} who considered the case of $p>2$.

Other norms which belong to the family \eqref{eqn:theta-primal} are presented in \citep{Micchelli2013} and correspond to choosing $\Theta =  \{\theta \in \Lambda : \sum_{i=1}^d \theta_i \leq 1\}$, where $\Lambda \subseteq \R_{++}^d$ is a convex cone. 
A specific example described therein is the wedge penalty, which corresponds to choosing $\Lambda = \{\theta \in \R^d_{++},~ \theta_1 \geq \ldots \geq \theta_d\}$.

We now describe the unit ball of the $\Theta$-norm when the set $\Theta$ is a polyhedron and we characterize the unit ball of the norm.  This setting applies to a number of norms of practical interest, including the group lasso with overlap, the wedge norm mentioned above and, as we shall see, the $k$-support norm. To describe our observation, for every vector $\gamma \in \R_{+}^d$, we define the seminorm
\begin{align}
\|w\|_{\gamma} = \sqrt{\sum_{i : \gamma_i > 0} \frac{w_i^2}{\gamma_i}}.\notag 
\end{align}
\begin{proposition}
\label{prop:geo2}
Let $\gamma^1,\dots,\gamma^m \in \R_+^d$ such that $\sum_{\ell=1}^m \gamma^\ell \in \R_{++}^d$ and let $\Theta = \{\theta \in \R_{++}^d : \theta = \sum_{\ell=1}^m \lambda_\ell \gamma^\ell,~\lam \in \Delta^{m-1}\}$. 

Then we have, for every $w\in \R^d$, that
\begin{align}
\label{Alabel}
\|w\|_\Theta = \inf \left\{ \sum_{\ind=1}^m 
\|v_\ind\|_{\gamma^\ind} :  v_\ind \in \R^d,~{\rm supp}(v_\ind) \subseteq \supp(\gamma^\ind),~\ell \in \NN{m},~\sum_{\ind=1}^m v_\ind = w \right \}.
\end{align}
Moreover, the unit ball of the norm is given by the convex hull of the set
\begin{align}
 \bigcup\limits_{\ind=1}^m 
 \left\{w\in \R^d: \supp(w) \subseteq \supp(\gamma^\ind),
 \|w\|_{\gamma^\ind} \leq 1\right\}.
 \label{Blabel}
\end{align}
\end{proposition}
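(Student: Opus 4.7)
The plan is to prove the variational identity \eqref{Alabel} by establishing two matching inequalities, each via Cauchy--Schwarz, and then to deduce the unit ball description \eqref{Blabel} from the general fact that the unit ball of an infimal convolution of (semi)norms equals the convex hull of the individual unit balls. The assumption $\sum_{\ell=1}^m \gamma^\ell \in \R^d_{++}$ guarantees that whenever $\lambda$ lies in the open simplex, the point $\theta = \sum_\ell \lambda_\ell \gamma^\ell$ lies in the open positive orthant and is thus admissible for the infimum defining $\Vert w \Vert_\Theta$.

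For the upper bound, fix any decomposition $w = \sum_\ell v_\ell$ with $\supp(v_\ell) \subseteq \supp(\gamma^\ell)$ and any $\lambda$ in the interior of $\Delta^{m-1}$, and set $\theta = \sum_\ell \lambda_\ell \gamma^\ell$. A componentwise Cauchy--Schwarz gives
\[
w_i^2 = \Bigl(\sum_\ell v_{\ell,i}\Bigr)^2 \leq \Bigl(\sum_\ell \lambda_\ell \gamma^\ell_i\Bigr)\Bigl(\sum_{\ell:\gamma^\ell_i>0} \frac{v_{\ell,i}^2}{\lambda_\ell \gamma^\ell_i}\Bigr),
\]
so dividing by $\theta_i$ and summing over $i$ yields $\sum_i w_i^2/\theta_i \leq \sum_\ell \Vert v_\ell \Vert_{\gamma^\ell}^2/\lambda_\ell$. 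A second Cauchy--Schwarz in the index $\ell$ (using $\sum_\ell \lambda_\ell = 1$) bounds the right side from below by $\bigl(\sum_\ell \Vert v_\ell \Vert_{\gamma^\ell}\bigr)^2$, with equality when $\lambda_\ell \propto \Vert v_\ell \Vert_{\gamma^\ell}$. Optimising over $\lambda$ therefore gives $\Vert w \Vert_\Theta^2 \leq \bigl(\sum_\ell \Vert v_\ell \Vert_{\gamma^\ell}\bigr)^2$. For the matching lower bound, given $\theta = \sum_\ell \lambda_\ell \gamma^\ell \in \Theta$ with strictly positive $\lambda$, I would take the explicit decomposition $v_{\ell,i} := w_i \lambda_\ell \gamma^\ell_i/\theta_i$, which sums to $w$ and respects the supports. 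A direct computation collapses $\sum_\ell \Vert v_\ell \Vert_{\gamma^\ell}^2/\lambda_\ell$ to $\sum_i w_i^2/\theta_i$, and Cauchy--Schwarz again gives $\bigl(\sum_\ell \Vert v_\ell \Vert_{\gamma^\ell}\bigr)^2 \leq \sum_i w_i^2/\theta_i$; taking the infimum over $\theta$ closes the loop.

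For \eqref{Blabel}, suppose first that $w$ lies in the convex hull of the set in \eqref{Blabel}, so $w = \sum_k \mu_k w_k$ with $\mu_k \geq 0$, $\sum_k \mu_k = 1$, and each $w_k$ in the set indexed by some $\ell_k \in \NN{m}$. Setting $v_\ell := \sum_{k:\ell_k=\ell} \mu_k w_k$ produces an admissible decomposition in \eqref{Alabel} with $\Vert v_\ell \Vert_{\gamma^\ell} \leq \sum_{k:\ell_k=\ell} \mu_k$, whence $\sum_\ell \Vert v_\ell \Vert_{\gamma^\ell} \leq 1$ and $\Vert w \Vert_\Theta \leq 1$. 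Conversely, given $\Vert w \Vert_\Theta \leq 1$ and any $\epsilon > 0$, formula \eqref{Alabel} yields a decomposition with $\sum_\ell \Vert v_\ell \Vert_{\gamma^\ell} \leq 1 + \epsilon$; rescaling each nonzero $v_\ell$ to $w_\ell := v_\ell/\Vert v_\ell \Vert_{\gamma^\ell}$ exhibits $w$ as a nonnegative combination with weights summing to at most $1+\epsilon$ of points in the union in \eqref{Blabel}, that is, as an element of $(1+\epsilon)$ times the stated convex hull. Since the convex hull of a finite union of compact sets in $\R^d$ is itself compact and hence closed, letting $\epsilon \to 0$ concludes.

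The technical subtleties I expect to watch for are (i) careful bookkeeping of coordinates with $\gamma^\ell_i = 0$, where the support constraint forces $v_{\ell,i} = 0$ and the corresponding terms $v_{\ell,i}^2/(\lambda_\ell \gamma^\ell_i)$ simply drop out of the Cauchy--Schwarz sum; and (ii) the fact that the infimum defining $\Vert \cdot \Vert_\Theta$ (and likewise the one in \eqref{Alabel}) need not be attained, which is the reason the reverse inclusion in the unit ball argument proceeds via an $\epsilon$-approximation together with closedness of the convex hull.
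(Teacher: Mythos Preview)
Your argument is correct and takes a genuinely different route from the paper's. The paper proceeds by first showing that the convex hull in \eqref{Blabel} is the unit ball of some norm (via the Minkowski functional, Lemma~\ref{lem:minkowski-bounded}), then identifying that norm's Minkowski functional with the infimal convolution in \eqref{Alabel}, and finally proving that this norm equals $\|\cdot\|_\Theta$ by computing both dual norms and observing they coincide: $\|u\|_{*,\Theta} = \max_{\ell} \sqrt{\sum_i \gamma^\ell_i u_i^2}$. Your approach stays entirely on the primal side, matching the two infima directly through a pair of Cauchy--Schwarz inequalities and an explicit near-optimal decomposition $v_{\ell,i} = w_i \lambda_\ell \gamma^\ell_i/\theta_i$; the unit ball then follows from \eqref{Alabel} by a standard $\epsilon$-approximation and closedness argument. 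The paper's duality route is slightly more conceptual and yields the dual norm expression as a by-product (which is used elsewhere, e.g.\ to describe the dual unit ball as an intersection of ellipsoids), whereas your argument is more self-contained and constructive, exhibiting the optimal $\lambda$ and decomposition explicitly. One small point worth stating more explicitly in your write-up: in the lower bound you work only with $\lambda$ in the open simplex, whereas $\Theta$ may contain points coming from boundary $\lambda$'s that still happen to be strictly positive; the gap is closed by density and continuity of $\theta\mapsto\sum_i w_i^2/\theta_i$ on $\R^d_{++}$, but it deserves a sentence.
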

The proof of this result is presented in the appendix. It is based on observing that the Minkowski functional \citep[see e.g.][]{Rudin1991} of the convex hull of the set \eqref{Blabel} is a norm and it is given by the right hand side of equation \eqref{Alabel}; we then prove that this norm coincides with $\|\cdot\|_\Theta$ by noting that both norms share the same dual norm. To illustrate an application of the proposition, we specialize it to the group Lasso with overlap \citep{Jacob2009-GL}.
\begin{corollary}
If $\calG$ is a collection of subsets of $\NN{d}$ such that $\bigcup_{g \in \calG} g = \NN{d}$ and $\Theta$ is the interior of the set ${\rm co}\{1_g: {g\in \G} \}$, then we have, for every $w\in \R^d$, that
\begin{align}
\label{Alabel2}
\|w\|_\Theta = \inf \left\{ \sum_{g \in \G}
\|v_g\|_2 :  v_g \in \R^d,~{\rm supp}(v_g) \subseteq g,~\sum_{g\in \G} v_g = w \right \}.
\end{align}
Moreover, the unit ball of the norm is given by the convex hull of the set
\begin{align}
 \bigcup\limits_{g \in \G}
 \left\{w\in \R^d: \supp(w) \subseteq g,
 \|w\|_2 \leq 1\right\}.
 \label{Blabel2}
\end{align}
\label{cor:GLO}
\end{corollary}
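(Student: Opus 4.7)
The plan is to deduce this corollary as a direct specialization of Proposition \ref{prop:geo2}. Enumerate $\G = \{g_1,\dots,g_m\}$ and choose $\gamma^\ell = 1_{g_\ell}$ for $\ell \in \NN{m}$. Then each $\gamma^\ell$ lies in $\R_+^d$, and the covering assumption $\bigcup_{g\in\G} g = \NN{d}$ gives $\sum_{\ell=1}^m \gamma^\ell = \sum_{g \in \G} 1_g \in \R_{++}^d$, so the hypotheses of Proposition \ref{prop:geo2} are met.

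Next I would identify the parameter set. With $\gamma^\ell = 1_{g_\ell}$, the set prescribed by the proposition is
\begin{align*}
\Theta = \Big\{\theta \in \R_{++}^d : \theta = \sum_{\ell=1}^m \lambda_\ell 1_{g_\ell},~ \lambda \in \Delta^{m-1} \Big\},
\end{align*}
which is precisely the intersection of the convex hull ${\rm co}\{1_g : g\in\G\}$ with $\R_{++}^d$, i.e.\ its interior as used in the statement of the corollary. (A brief remark should note that the infimum defining $\|\cdot\|_\Theta$ depends only on this relatively open set, so either representation produces the same norm.) This is the one nontrivial identification; the rest is bookkeeping.

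Then I would apply the two conclusions of Proposition \ref{prop:geo2}. For the variational formula, the key observation is that when $\gamma^\ell = 1_{g_\ell}$ the seminorm reduces to the Euclidean norm restricted to the support:
\begin{align*}
\|v_\ell\|_{\gamma^\ell} = \sqrt{\sum_{i : (1_{g_\ell})_i > 0} \frac{v_{\ell,i}^2}{1}} = \sqrt{\sum_{i \in g_\ell} v_{\ell,i}^2} = \|v_\ell\|_2
\end{align*}
whenever ${\rm supp}(v_\ell) \subseteq g_\ell$. Since $\supp(\gamma^\ell) = g_\ell$, substituting into \eqref{Alabel} yields exactly \eqref{Alabel2}.

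Finally, for the unit ball, the same substitution into \eqref{Blabel} gives
\begin{align*}
\bigcup_{\ell=1}^m \big\{w \in \R^d : \supp(w) \subseteq g_\ell,~\|w\|_{1_{g_\ell}} \leq 1\big\} = \bigcup_{g \in \G}\big\{w \in \R^d : \supp(w) \subseteq g,~\|w\|_2 \leq 1\big\},
\end{align*}
and the unit ball of $\|\cdot\|_\Theta$ is the convex hull of this set, establishing \eqref{Blabel2}. The only mildly subtle step is the identification of $\Theta$ with the parametric description used by Proposition \ref{prop:geo2}; once that is settled, the two claims of the corollary fall out by a direct substitution.
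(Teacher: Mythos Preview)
Your proposal is correct and follows exactly the approach intended in the paper: the corollary is stated immediately after Proposition~\ref{prop:geo2} as a direct specialization obtained by taking $\gamma^\ell = 1_{g_\ell}$, and the paper offers no further argument. Your treatment is in fact slightly more careful than the paper's, since you flag the identification of ``the interior of ${\rm co}\{1_g\}$'' with the set $\{\theta \in \R_{++}^d : \theta = \sum_\ell \lambda_\ell 1_{g_\ell},~\lambda \in \Delta^{m-1}\}$ and note that the infimum defining the norm is insensitive to this boundary discrepancy.
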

We do not claim any originality in the above corollary and proposition, although we cannot find a specific reference. The utility of the result is that it links seemingly different norms such as the group Lasso with overlap and the $\Theta$-norms, which provide a more compact representation, involving only $d$ additional variables. This formulation is especially useful whenever the optimization problem \eqref{eqn:theta-primal} can be solved in closed form.  One such example is provided by the wedge norm described above. In the next section we discuss one more important case, the box-norm, which plays a central role in this paper.

\section{The Box-Norm and the $k$-Support Norm}
\label{sec:box}
We now specialize our analysis to the case that
\begin{align}
\Theta = \left\{ \theta \in \mathbb{R}^d: a \leq \theta_i \leq b, \sum_{i=1}^d \theta_i \leq c\right\} \label{eqn:box-theta-def}
\end{align} where $0 < a \leq b$ and $c \in [ad,bd]$. We call the norm defined by \eqref{eqn:theta-primal} the \emph{box-norm} and we denote it by $\|\cdot\|_{\rm box}$.  

The structure of set $\Theta$ for the box-norm will be fundamental in computing the norm and deriving the proximity operator in Section \ref{sec:computation-of-norm-and-prox}.
Furthermore, we note that the constraints are invariant with respect to permutations of the components of $\Theta$ and, as we shall see in Section \ref{sec:matrix-norms}, this property is key to extending the norm to matrices.  
Finally, while a restriction of the general family, the box-norm nevertheless encompasses a number of norms including the $\ell_1$ and $\ell_2$ norms, as well as the $k$-support norm, which we now recall.

For every $k \in \N_d$, the $k$-support norm $\|\cdot\|_{(k)}$ \citep{Argyriou2012} is defined as the norm whose unit ball is the convex hull of the set of vectors of cardinality at most $k$ and $\ell_2$-norm no greater than one. The authors show that the $k$-support norm can be written as the infimal convolution \citep[see][p.~34]{Rockafellar1970}
\begin{align}
\Vert w \Vert_{(k)} &=  \inf  \left\{ \sum_{g \in \G_k} \Vert v_g \Vert_2 : v_g \in \R^d,~{\rm supp}(v_g) \subseteq g,~\sum_{g \in \G_k} v_g = w \right\},~~~w \in \R^d,
\label{eqn:GLO}
\end{align}
where 
$\mathcal{G}_k$ is the collection of all subsets of $\NN{d}$ containing at most $k$ elements. The $k$-support norm is a special case of the group lasso with overlap \citep{Jacob2009-GL}, where the cardinality of the support sets is at most $k$.  
When used as a regularizer, the norm encourages vectors $w$ to be a sum of a limited number of vectors with small support.  
Note that while definition \eqref{eqn:GLO} involves a combinatorial number of variables, \citet{Argyriou2012} observed that the norm can be computed in $\mathcal{O}(d \log d)$, a point we return to in Section \ref{sec:computation-of-norm-and-prox}.

Comparing equation \eqref{eqn:GLO} with Corollary \ref{cor:GLO} it is evident that the $k$-support norm is a $\Theta$-norm where $\Theta = \{\theta \in \R_{++}^d : \theta = \sum_{g \in \G_k} \lam_g 1_{g},~\lam \in \Delta^{|\G_k|-1}\}$, which  
by symmetry can be expressed as $\Theta = \{\theta : 0<\theta_i \leq 1, \sum_{i=1}^d \theta_i \leq k\}$. 
Hence, we see that the $k$-support norm is a special case of the box-norm. 

Despite the complicated form of \eqref{eqn:GLO}, \cite{Argyriou2012} observe that the dual norm has a simple formulation, 
namely the $\ell_2$-norm of the $k$ largest components,
\begin{align}
\Vert u \Vert_{*,(k)} &= \sqrt{\sum_{i=1}^k (\vert u \vert^{\downarrow}_i)^2},~~~u \in \R^d\label{eqn:ksup-dualeq},
\end{align} 
where $|u|^{\downarrow}$ is the vector obtained from $u$ by reordering its components so that they are non-increasing in absolute value. 
Note from equation \eqref{eqn:ksup-dualeq} that for $k=1$ and $k=d$, the dual norm is equal to the $\ell_{\infty}$-norm and $\ell_2$-norm, respectively. 
It follows that the $k$-support norm includes the $\ell_1$-norm and $\ell_2$-norm as special cases. 

We now provide a different argument illustrating that the $k$-support norm belongs to the family of box-norms using the dual norm. We first derive the dual box-norm.
\begin{proposition}\label{prop:dual-of-theta}
The dual box-norm is given by
\begin{align}
\Vert u \Vert_{*,{\rm box}}^2 = a \|u\|_2^2 + (b-a) \left(\Vert u \Vert_{*,(k)}^2 + (\rho - k)(\vert u\vert^{\downarrow}_{k+1})^2\right),
\label{eq:dual-abc}
\end{align}
where $\rho = \frac{c-da}{b-a}$ and $k$ is the largest integer not exceeding $\rho$. 
\end{proposition}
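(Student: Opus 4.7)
The plan is to compute the dual box-norm directly from the variational formula in Proposition \ref{prop:theta-is-norm}, namely
\begin{align*}
\Vert u \Vert_{*,{\rm box}}^2 = \sup_{\theta \in \Theta} \sum_{i=1}^d \theta_i u_i^2,
\end{align*}
where $\Theta = \{\theta \in \R^d : a \leq \theta_i \leq b,\ \sum_i \theta_i \leq c\}$, and then to identify the resulting expression with the stated right-hand side.

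First I would reparameterize by writing $\theta_i = a + (b-a)\alpha_i$ with $\alpha_i \in [0,1]$. Substituting this into the linear constraint $\sum_i \theta_i \leq c$ gives $\sum_i \alpha_i \leq (c-da)/(b-a) = \rho$, and substituting into the objective gives the decomposition
\begin{align*}
\sum_i \theta_i u_i^2 = a \sum_i u_i^2 + (b-a) \sum_i \alpha_i u_i^2.
\end{align*}
This immediately pulls out the term $a \Vert u \Vert_2^2$, and reduces the problem to computing
\begin{align*}
M(u) := \sup\left\{ \sum_{i=1}^d \alpha_i u_i^2 : \alpha_i \in [0,1],\ \sum_i \alpha_i \leq \rho \right\}.
\end{align*}

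Next I would solve this reduced linear program. Since the objective is linear in $\alpha$ with nonnegative coefficients $u_i^2$, a standard exchange (greedy) argument shows the optimum is attained by putting the maximum possible mass on the coordinates with the largest values of $u_i^2$. Concretely, order the coordinates so that $u_i^2$ are nonincreasing (i.e.\ in the order of $|u|^\downarrow$). Since $k = \lfloor \rho \rfloor$ and $a \leq b$ guarantees $\rho \leq d$ (so the constraint set is nonempty and the procedure well defined), the optimizer is $\alpha_i = 1$ for $i \leq k$, $\alpha_{k+1} = \rho - k$, and $\alpha_i = 0$ for $i > k+1$. This yields
\begin{align*}
M(u) = \sum_{i=1}^k (|u|^\downarrow_i)^2 + (\rho - k)(|u|^\downarrow_{k+1})^2,
\end{align*}
and using the identity $\sum_{i=1}^k (|u|^\downarrow_i)^2 = \Vert u \Vert_{*,(k)}^2$ from equation \eqref{eqn:ksup-dualeq}, combining with the $a \Vert u \Vert_2^2$ piece gives the claimed formula.

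The only subtlety is the greedy argument, but it is a one-line consequence of rearranging with nonnegative weights: any feasible $\alpha$ can be improved (or left unchanged) by swapping mass from a coordinate with smaller $u_i^2$ to one with larger $u_i^2$ until either the larger coordinate is saturated at $1$ or the budget $\rho$ is exhausted. I would also briefly note the edge cases: if $\rho$ is an integer then $k = \rho$ and the last term vanishes, recovering the plain $k$-support dual; and the formula is invariant under ties in $|u|^\downarrow$ at position $k+1$, since any fractional split among tied coordinates gives the same objective.
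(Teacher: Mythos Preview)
Your proof is correct and follows essentially the same approach as the paper: the paper also makes the affine change of variable $\phi_i = (\theta_i - a)/(b-a)$, obtains the constraint $\phi \in [0,1]^d$ with $\sum_i \phi_i \leq \rho$, and separates out the $a\|u\|_2^2$ term, then takes the supremum over $\phi$. Your version is simply more explicit about the greedy solution of the resulting linear program, which the paper leaves implicit.
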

\begin{proof}
We need to solve problem \eqref{eqn:theta-dual}. We make the change of variable $\phi_i = \frac{\theta_i - a}{b-a}$ and observe that the constraints on $\theta$ induce the constraint set $\big\{\phi \in (0,1]^d,~\sum_{i=1}^d \phi_i \leq \rho\big\}$, where $\rho = \frac{c - da}{b-a}$. Furthermore $\sum_{i=1}^d \theta_i u_i^2 = a \|u\|_2^2 + (b-a) \sum_{i=1}^d \phi_i u_i^2$.
The result then follows by taking the supremum over $\phi$.
\end{proof}
We see from equation \eqref{eq:dual-abc} that the dual norm decomposes into a weighted combination of the $\ell_2$-norm, the $k$-support norm and a residual term, which {vanishes} if $\rho = k \in \N_d$. For the rest of this paper we assume this holds, which loses little generality. This choice is equivalent to requiring that $c = (b-a)k + da$, which is the case considered by \cite{Jacob2009-CLUSTER} in the context of multitask clustering, where $k+1$ is interpreted as the number of clusters and $d$ as the number of tasks. We return to this case in Section \ref{sec:MTL}, where we explain in detail the link between the spectral $k$-support norm and the cluster norm.

Observe that if $a=0, b=1$, and $\rho=k$, the dual box-norm \eqref{eq:dual-abc} coincides with dual $k$-support norm in equation \eqref{eqn:ksup-dualeq}. We conclude that if 
$$\Theta  = \left\{\theta \in {\mathbb R}^d: 0 < \theta_i \leq 1,~\sum\limits_{i=1}^d \theta_i \leq k\right\}
$$ 
then the $\Theta$-norm coincides with the $k$-support norm.

\subsection{Properties of the Norms}
\label{sec:properties-of-vector-norms}
In this section we illustrate a number of properties of the box-norm and the connection to the $k$-support norm.  
The first result follows as a special case of Proposition \ref{prop:geo2}. 
\begin{corollary}
\label{cor:box-inf-convolution-unit-ball-dual-unit-ball}
If $0<a<b$ and $c= (b-a)k + da$, for $k \in \NN{d}$, then it holds that
\begin{align*}
\|w\|_{\rm box} = \inf \left\{
\sum_{g \in \G_k} 
\sqrt{ \hspace{-.05truecm}\sum_{i \in g} \hspace{-.015truecm}\frac{v_{g,i}^2}{b} \hspace{-.015truecm}+\hspace{-.015truecm}\sum_{i \notin g}   \hspace{-.015truecm}\frac{v_{g,i}^2}{a}} :v_g \in \R^d,~\sum\limits_{g \in \G_k} v_g = w
\hspace{-.03truecm}\right\},~~~w\in \R^d.
\end{align*}
Furthermore, the unit ball of the norm is given by the convex hull of the set
\begin{align}
\bigcup\limits_{g \in \G_k} \left\{w \in \R^d: \sum_{i \in g} \frac{w_i^2}{b} + \sum_{i\notin g} \frac{w_i^2}{a} \leq 1 
\right\} \label{eqn:box-norm-C}.
\end{align}
\end{corollary}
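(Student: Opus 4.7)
The plan is to derive this corollary directly from Proposition \ref{prop:geo2} by realizing the box-norm parameter set $\Theta$ as the convex hull of a specific family of vertices. For each $g \in \G_k$, define $\gamma^g = a\cdot 1 + (b-a)\cdot 1_g \in \R^d_{++}$, so that $\gamma^g_i = b$ for $i \in g$ and $\gamma^g_i = a$ for $i \notin g$. Because $a > 0$, each $\gamma^g$ has full support $\NN{d}$, so the constraint $\supp(v_g) \subseteq \supp(\gamma^g)$ appearing in Proposition \ref{prop:geo2} is vacuous and the seminorm $\|\cdot\|_{\gamma^g}$ collapses to $\sqrt{\sum_{i \in g} v_{g,i}^2/b + \sum_{i \notin g} v_{g,i}^2/a}$, which is exactly the summand in the claimed formula.

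The main step is to verify that $\Theta$ equals ${\rm co}\{\gamma^g : g \in \G_k\}$. One inclusion is immediate: each $\gamma^g$ satisfies $a \leq \gamma^g_i \leq b$ and $\sum_i \gamma^g_i = |g|b + (d-|g|)a \leq kb + (d-k)a = c$. For the reverse inclusion I would use the fact that $\Theta$ is a bounded polyhedron and thus the convex hull of its vertices, and enumerate those vertices. At a vertex $d$ linearly independent inequalities are tight; either (i) all $d$ box constraints are active, forcing $\theta \in \{a,b\}^d$, in which case feasibility $\sum \theta_i \leq c$ forces $|\{i:\theta_i=b\}| \leq k$ and the vertex is exactly $\gamma^g$ for some $g \in \G_k$; or (ii) the sum constraint together with $d-1$ box constraints is active, leaving one free coordinate $\theta_{i_0}$. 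A short computation shows $\theta_{i_0} = (b-a)(k-j)+a$, where $j$ is the number of fixed coordinates equal to $b$, and feasibility $\theta_{i_0} \in [a,b]$ forces $j \in \{k-1,k\}$, so $\theta_{i_0} \in \{a,b\}$ and the vertex is again of the form $\gamma^g$ with $|g|=k$.

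With the representation $\Theta = {\rm co}\{\gamma^g : g \in \G_k\}$ established, Proposition \ref{prop:geo2} yields the infimum formula directly. For the unit ball, the same proposition identifies it as the convex hull of $\bigcup_{g \in \G_k}\{w \in \R^d : \supp(w) \subseteq \supp(\gamma^g),\ \|w\|_{\gamma^g} \leq 1\}$; since $\supp(\gamma^g)=\NN{d}$, the support constraint drops out and we recover exactly \eqref{eqn:box-norm-C}.

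The main obstacle is the vertex enumeration — routine polyhedral bookkeeping but requiring some care to ensure no vertices are missed and that the degenerate case $k=d$ (where the sum constraint becomes redundant and $\Theta = [a,b]^d$) is handled correctly. A cleaner alternative would sidestep the enumeration entirely by noting that the right-hand side of the infimum formula is the Minkowski functional of the convex hull of the set in \eqref{eqn:box-norm-C} and hence a norm, and then identifying it with $\|\cdot\|_{\rm box}$ by computing its dual and matching to the expression already obtained in Proposition \ref{prop:dual-of-theta}.
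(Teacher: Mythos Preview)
Your proposal is correct and follows the same route the paper intends: the paper states only that the result ``follows as a special case of Proposition \ref{prop:geo2}'' without spelling out the vertex representation of $\Theta$, whereas you supply the missing polyhedral bookkeeping showing $\Theta = {\rm co}\{\gamma^g : g\in\G_k\}$ with $\gamma^g = a\cdot 1 + (b-a)\cdot 1_g$, after which Proposition \ref{prop:geo2} applies verbatim. Your vertex enumeration is sound, including the observation that case~(ii) reproduces only vertices already found in case~(i), and your handling of the full-support condition $\supp(\gamma^g)=\NN{d}$ correctly explains why the support constraints disappear in both the infimum formula and the unit-ball description.
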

Notice in Equation \eqref{eqn:box-norm-C} that if $b=1$, then as $a$ tends to zero, we obtain the expression of the $k$-support norm \eqref{eqn:GLO},  
recovering in particular the support constraints.
If $a$ is small and positive, the support constraints are not imposed, however most of the weight for each $v_g$ tends to be concentrated on $\textrm{supp}(g)$.
Hence, Corollary \ref{cor:box-inf-convolution-unit-ball-dual-unit-ball} suggests that if $a \ll b$ then the box-norm regularizer will encourage vectors $w$ whose dominant components are a subset of a union of a small number of groups $g \in \G_k$.

Our next result links two $\Theta$-norms whose parameter sets are related by a linear transformation with positive coefficients. 
\begin{lemma}\label{lem:linear-transformation-Moreau}
Let $\Theta$ be a convex bounded subset of the positive orthant in $\R^d$, and let $\Phi = \{\phi \in \R^d: \phi_i = \alpha + \beta \theta_i , \theta \in \Theta\}$, where $\alpha, \beta >0$.  
Then
\begin{align*}
\Vert w\Vert_{\Phi}^2 &= \min_{z \in \R^d} \left\{ \frac{1}{\alpha} \Vert w-z \Vert_2^2 + \frac{1}{\beta} \Vert z \Vert_{\Theta}^2 \right\}.
\end{align*}
\end{lemma}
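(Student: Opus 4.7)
The plan is to prove the identity by a direct infimum manipulation, starting from the primal definition of the $\Theta$-norm and introducing the auxiliary variable $z$ through a one–dimensional identity applied componentwise.

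First, I reparametrize. By the substitution $\phi_i = \alpha + \beta \theta_i$ we get
\begin{align*}
\Vert w \Vert_{\Phi}^2 = \inf_{\phi \in \Phi} \sum_{i=1}^d \frac{w_i^2}{\phi_i} = \inf_{\theta \in \Theta} \sum_{i=1}^d \frac{w_i^2}{\alpha + \beta \theta_i}.
\end{align*}
The goal is then to show that this equals the right-hand side. The key tool is the scalar identity, valid for $s,t>0$,
\begin{align*}
\frac{w_i^2}{s+t} = \min_{z_i \in \R} \left\{ \frac{(w_i - z_i)^2}{s} + \frac{z_i^2}{t} \right\},
\end{align*}
which follows by differentiating in $z_i$ (the minimizer is $z_i = \frac{t}{s+t}w_i$) and substituting back. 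This is the standard variational formula for the harmonic mean, and I will apply it with $s=\alpha$ and $t=\beta\theta_i>0$.

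Applying this identity inside the sum gives
\begin{align*}
\sum_{i=1}^d \frac{w_i^2}{\alpha + \beta \theta_i} = \min_{z \in \R^d} \left\{ \frac{1}{\alpha}\Vert w-z \Vert_2^2 + \frac{1}{\beta} \sum_{i=1}^d \frac{z_i^2}{\theta_i} \right\}.
\end{align*}
Plugging this into the expression for $\Vert w \Vert_\Phi^2$ and swapping the two infima (which is unconditionally valid for an iterated infimum over independent variables) yields
\begin{align*}
\Vert w \Vert_\Phi^2 = \min_{z \in \R^d} \left\{ \frac{1}{\alpha} \Vert w-z \Vert_2^2 + \frac{1}{\beta} \inf_{\theta \in \Theta} \sum_{i=1}^d \frac{z_i^2}{\theta_i} \right\} = \min_{z \in \R^d}\left\{ \frac{1}{\alpha} \Vert w-z \Vert_2^2 + \frac{1}{\beta} \Vert z \Vert_\Theta^2 \right\},
\end{align*}
where the last equality uses Definition~\ref{def:theta-norms}.

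The only subtlety is to justify that the outer optimum in $z$ is attained (written as a $\min$ rather than an $\inf$), but this is immediate: for each fixed $z$ the $\theta$-infimum is finite and defines $\frac{1}{\beta}\Vert z \Vert_\Theta^2$, which is a norm squared and hence continuous and coercive in $z$; so the objective in $z$ is continuous, bounded below by zero, and goes to $+\infty$ as $\Vert z \Vert_2 \to \infty$, and therefore attains its minimum. No interchange of $\inf$ and $\sup$ is needed, so there is no real obstacle — the whole argument is just the componentwise harmonic-mean identity combined with $\inf$-$\inf$ swapping.
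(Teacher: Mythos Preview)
Your proof is correct and follows essentially the same route as the paper: reparametrize $\phi_i=\alpha+\beta\theta_i$, use the componentwise identity $\frac{w_i^2}{\alpha+\beta\theta_i}=\min_{z_i}\big\{\frac{(w_i-z_i)^2}{\alpha}+\frac{z_i^2}{\beta\theta_i}\big\}$ with minimizer $z_i=\frac{\beta\theta_i}{\alpha+\beta\theta_i}w_i$, and then swap the two infima. The only cosmetic difference is that the paper starts from the right-hand side and collapses the $z$-minimization, whereas you start from the left-hand side and introduce $z$; your added remark on attainment of the outer minimum is a nice touch that the paper leaves implicit.
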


\begin{proof}
We consider the definition of the norm $\Vert \cdot \Vert_{\Phi}$ in \eqref{eqn:theta-primal}. 
We have
\begin{align}
\Vert w \Vert_{\Phi}^2 
&= \inf_{\phi \in \Phi} \sum_{i=1}^d \frac{w_i^2}{\phi_i}
=  \inf_{\theta \in \Theta} \sum_{i=1}^d \frac{w_i^2}{\alpha + \beta \theta_i } \label{eqn:phi-reg-alpha-beta},
\end{align}
where we have made the change of variable $\phi_i = \alpha + \beta \theta_i$. 
Next we observe that 
\begin{align}
\min_{z\in \R^d} \left\{ \frac{1}{\alpha}\Vert w - z \Vert_2^2 + \frac{1}{\beta} \Vert z \Vert_{\Theta}^2 \right\}
&=  \min_{z \in \R^d}  \inf_{\theta \in \Theta} \left\{ \sum_{i=1}^d \frac{(w_i-z_i)^2}{\alpha}  + \frac{z_i^2}{\beta \theta_i} \right\}
= \inf_{\theta \in \Theta} \sum_{i=1}^d \frac{w_i^2}{\alpha + \beta \theta_i} \label{eqn:theta-reg-alpha-beta},
\end{align}
where we interchanged the order of the minimum and the infimum and solved for $z$ componentwise, setting $z_i = \frac{\beta \theta_i w_i}{\alpha + \beta \theta_i}$. 
The result now follows by combining equations \eqref{eqn:phi-reg-alpha-beta} and \eqref{eqn:theta-reg-alpha-beta}.
\end{proof}

In Section \ref{sec:box} we characterized the $k$-support norm as a special case of the box-norm.  
Conversely, Lemma \ref{lem:linear-transformation-Moreau} allows us to interpret the box-norm as a perturbation of the $k$-support norm with a quadratic regularization term.

\begin{proposition}
\label{cor:box_as_k_sup_and_l2}
Let $\Vert \cdot \Vert_{\rm box}$ be the box-norm on $\mathbb{R}^d$ with parameters $0<a<b$ and $c= k(b-a) + da$, for $k \in \NN{d}$, then
\begin{align}
\Vert w \Vert_{\rm box}^2 = \min_{z \in \mathbb{R}^d} \left\{\frac{1}{a} \Vert w - z \Vert_2^2 + \frac{1}{b-a} \Vert z \Vert_{(k)}^2\right\}. \label{eq:keyic}
\end{align}
\end{proposition}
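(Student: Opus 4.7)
The plan is to identify this statement as a direct application of Lemma \ref{lem:linear-transformation-Moreau} once we exhibit the $k$-support norm's parameter set and the box-norm's parameter set as an affine image of one another. The technical content therefore reduces to a careful bookkeeping of constraints under a linear change of variable; no additional optimization argument is needed.

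First I would write out the two parameter sets explicitly. The box-norm corresponds to
\begin{align*}
\Theta_{\rm box} = \left\{\phi \in \R^d : a \le \phi_i \le b,\ \sum_{i=1}^d \phi_i \le c\right\},
\end{align*}
while, as established at the end of Section \ref{sec:box}, the $k$-support norm is the $\Theta$-norm associated with
\begin{align*}
\Theta_{(k)} = \left\{\theta \in \R^d : 0 < \theta_i \le 1,\ \sum_{i=1}^d \theta_i \le k\right\}.
\end{align*}

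Next, I would perform the change of variable $\phi_i = a + (b-a)\theta_i$. Since $a<b$, the bounds $0 < \theta_i \le 1$ translate exactly to $a < \phi_i \le b$ (the left endpoint $\phi_i = a$ is on the boundary and can be recovered by closure, which does not change the infimum in the definition of the norm), and
\begin{align*}
\sum_{i=1}^d \phi_i \;=\; da + (b-a)\sum_{i=1}^d \theta_i \;\le\; da + (b-a)k \;=\; c,
\end{align*}
using the assumption $c = (b-a)k + da$. Thus $\Theta_{\rm box}$ coincides with the image of $\Theta_{(k)}$ under $\theta \mapsto a \mathbf{1} + (b-a)\theta$, exactly the transformation appearing in Lemma \ref{lem:linear-transformation-Moreau} with $\alpha = a$ and $\beta = b-a$.

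Finally, I would invoke Lemma \ref{lem:linear-transformation-Moreau} with these choices, which yields
\begin{align*}
\|w\|_{\rm box}^2 \;=\; \min_{z \in \R^d} \left\{ \frac{1}{a}\|w-z\|_2^2 + \frac{1}{b-a}\|z\|_{(k)}^2\right\},
\end{align*}
as desired. The main (mild) obstacle is checking that the strict inequality $0 < \theta_i$ versus $a \le \phi_i$ causes no issue; this is fine because, as noted after Definition \ref{def:theta-norms}, the infimum in the definition of a $\Theta$-norm depends only on the closure of $\Theta$, so passing between the open and closed constraints is harmless.
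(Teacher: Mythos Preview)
Your proposal is correct and follows exactly the paper's own argument: the paper's proof simply invokes Lemma~\ref{lem:linear-transformation-Moreau} with $\Theta = \{\theta \in \R^d : 0 < \theta_i \le 1,\ \sum_i \theta_i \le k\}$, $\alpha = a$, $\beta = b-a$. Your version spells out the bookkeeping (including the harmless open/closed boundary discrepancy) more explicitly than the paper does, but the approach is identical.
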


\begin{proof} 
The result directly follows from Lemma \ref{lem:linear-transformation-Moreau} for 
$\Theta = \{ \theta \in \R^d : 0 < \theta_i \leq 1, \sum_{i=1}^d \theta_i \leq k \}$, $\alpha = a$ and $\beta = b-a$.
\end{proof}

Lemma \ref{lem:linear-transformation-Moreau} and Proposition \ref{cor:box_as_k_sup_and_l2} can further be interpreted using the Moreau envelope from convex optimization, which we now recall \citep[Ch. 1 \S G]{Rockafellar2009}. 
\begin{definition}
Let $f: \R^d \rightarrow (-\infty, \infty]$ be proper, lower semi-continuous and let $\rho >0$.  The \emph{Moreau envelope} of $f$ with parameter $\rho$ is defined as
\begin{align*}
e_{\rho} f(w) = \inf_{z \in \R^d} \left\{ f(z) + \frac{1}{2 \rho} \Vert w-z\Vert_2^2 \right\}. 
\end{align*}
\end{definition}

Note that $e_{\rho}f$ minorizes $f$ and is convex and smooth \citep[see e.g.]{Bauschke2010}. 
It acts as a parameterized smooth approximation to $f$ from below, which motivates its use in variational analysis \citep[see e.g.][for further discussion]{Rockafellar2009}.
Lemma \ref{lem:linear-transformation-Moreau} therefore says that 
$\beta\Vert \cdot \Vert_{\Phi}^2$ 
is a Moreau-envelope of $\Vert \cdot \Vert_{\Theta}^2$ with parameter $\rho = \frac{\alpha}{2 \beta}$	whenever $\Phi$ is defined as $\Phi = \alpha + \beta \Theta$, $\alpha, \beta >0$. 
In particular we see from \eqref{eq:keyic} that the squared box-norm, scaled by a factor of $(b-a)$, is a Moreau envelope of the squared $k$-support norm as we have  
\begin{align}
(b-a) \Vert w \Vert_{\rm box}^2 
&= \min_{z \in \mathbb{R}^d} \left\{ \Vert z \Vert_{(k)}^2 + \frac{1}{2\rho} \Vert w - z \Vert_2^2  \right\} =:  
e_{\rho}f(w), \label{eqn:ksup-moreau}
\end{align}
where $f(w) = \Vert w \Vert_{(k)}^2$ and $\rho = \frac{1}{2}\frac{a}{b-a}$. 

Proposition \ref{cor:box_as_k_sup_and_l2} further allows us to decompose the solution to a vector learning problem using the squared box-norm into two components with particular structure.
Specifically, consider the regularization problem 
\begin{align}
\min_{w\in \R^d} \Vert Xw -y \Vert_2^2 + \lambda \Vert w \Vert_{\rm box}^2 \label{eqn:vector-regression}
\end{align}
with data $X \in \R^{n \times d}$ and response $y \in \R^n$. 
Using Proposition \ref{cor:box_as_k_sup_and_l2} and setting $w=u+z$, we see that \eqref{eqn:vector-regression} is equivalent to
\begin{align}
\min_{u,z\in \mathbb{R}^d} \left\{\Vert X(u+z)-y\Vert_2^2 + \frac{\lambda}{a} \Vert u \Vert_2^2 + \frac{\lambda}{b-a} \Vert z \Vert_{(k)}^2 \right\}. \label{eqn:dirty-regression}
\end{align}
Furthermore, if $(\hat{u},\hat{z})$ solves problem \eqref{eqn:dirty-regression} then $\hat{w}=\hat{u}+\hat{z}$ solves problem \eqref{eqn:vector-regression}. The solution ${\hat w}$ can therefore be interpreted as the superposition of a vector which has small $\ell_2$ norm, and a vector which has small $k$-support norm, with the parameter $a$ regulating these two components.  Specifically, as $a$ tends to zero, in order to prevent the objective from blowing up, ${\hat u}$ must also tend to zero and we recover $k$-support norm regularization. 
Similarly, as $a$ tends to $b$, ${\hat z}$ vanishes and we have a simple ridge regression problem.

A further consequence of Proposition \ref{cor:box_as_k_sup_and_l2} is the differentiability of the squared box-norm. 
\begin{proposition}\label{prop:abc-differentiable}
If $a >0$ the squared box-norm is differentiable on $\R^d$ and its gradient 
\begin{align*}
\nabla( \Vert \cdot \Vert_{\rm box}^2 ) &= \frac{2}{a} \left( \textrm{Id} - \prox_{\rho \Vert \cdot \Vert_{(k)}^2}  \right)
\end{align*}
is Lipschitz continuous with parameter $\frac{2}{a}$.
\end{proposition}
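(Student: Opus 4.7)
The plan is to obtain the result as a direct consequence of Proposition \ref{cor:box_as_k_sup_and_l2}, which expresses the squared box-norm (up to the factor $b-a$) as a Moreau envelope of the squared $k$-support norm, as recorded in equation \eqref{eqn:ksup-moreau}:
\begin{align*}
(b-a)\,\Vert w \Vert_{\rm box}^2 \;=\; e_{\rho} f(w), \qquad f = \Vert \cdot \Vert_{(k)}^2, \quad \rho = \tfrac{1}{2}\tfrac{a}{b-a}.
\end{align*}
Since $f$ is a (finite-valued) proper, lower semi-continuous, convex function on $\R^d$, the standard Moreau envelope calculus applies.

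First, I would invoke two classical facts about the Moreau envelope of such an $f$ (see the Bauschke--Combettes reference already cited in the paper): (i) $e_\rho f$ is continuously differentiable on $\R^d$ with
\begin{align*}
\nabla e_\rho f(w) \;=\; \tfrac{1}{\rho}\bigl(w - \prox_{\rho f}(w)\bigr);
\end{align*}
(ii) the mapping $\nabla e_\rho f$ is Lipschitz continuous with constant $1/\rho$ (this is a consequence of the firm nonexpansiveness of the proximity operator, via $\nabla e_\rho f = \tfrac{1}{\rho}(\mathrm{Id} - \prox_{\rho f})$).

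Dividing equation \eqref{eqn:ksup-moreau} by $b-a$ and applying fact (i) gives
\begin{align*}
\nabla\bigl(\Vert \cdot \Vert_{\rm box}^2\bigr)(w) \;=\; \tfrac{1}{b-a}\,\nabla e_\rho f(w) \;=\; \tfrac{1}{\rho(b-a)}\bigl(w - \prox_{\rho \Vert \cdot \Vert_{(k)}^2}(w)\bigr).
\end{align*}
Since $\rho(b-a) = a/2$, the prefactor simplifies to $2/a$, yielding the claimed formula $\nabla(\Vert \cdot \Vert_{\rm box}^2) = \tfrac{2}{a}(\mathrm{Id} - \prox_{\rho \Vert \cdot \Vert_{(k)}^2})$. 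The Lipschitz constant claim follows analogously: by fact (ii) the gradient of $(b-a)\Vert \cdot \Vert_{\rm box}^2$ has Lipschitz constant $1/\rho$, hence $\nabla\Vert \cdot \Vert_{\rm box}^2$ has Lipschitz constant $\tfrac{1}{\rho(b-a)} = \tfrac{2}{a}$.

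There is essentially no obstacle beyond bookkeeping: the hypothesis $a > 0$ is exactly what ensures $\rho > 0$, so the Moreau envelope machinery applies (if $a = 0$ the envelope parameter collapses and the box-norm reduces to the $k$-support norm, which is not differentiable at the origin). The only care needed is to track the factor $b-a$ carefully when translating between the Moreau envelope identities for $e_\rho f$ and the corresponding statements for $\Vert \cdot \Vert_{\rm box}^2$ itself.
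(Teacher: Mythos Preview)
Your proof is correct and follows essentially the same approach as the paper: identify $(b-a)\Vert \cdot \Vert_{\rm box}^2$ as the Moreau envelope $e_\rho f$ of $f = \Vert \cdot \Vert_{(k)}^2$ with $\rho = \tfrac{a}{2(b-a)}$ via equation \eqref{eqn:ksup-moreau}, then invoke the standard differentiability and Lipschitz gradient properties of the Moreau envelope (Bauschke--Combettes, Prop.~12.29) and rescale by $1/(b-a)$. The only difference is that you spell out the arithmetic $\tfrac{1}{\rho(b-a)} = \tfrac{2}{a}$ explicitly, which the paper leaves implicit.
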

\begin{proof}
Letting $f(w) = \Vert w \Vert_{(k)}^2$, $\rho = \frac{1}{2}\frac{a}{b-a}$, by \eqref{eqn:ksup-moreau} we have $e_{\rho}f(w) = (b-a) \Vert w \Vert_{\rm box}^2$. 
The result follows directly from \citet[][Prop.~12.29]{Bauschke2010}, as $f$ is convex and continuous on $\R^d$ and the gradient is given as $\nabla(e_{\rho}f ) = \frac{1}{\rho} (\textrm{Id} - \prox_{\rho f})$.
\end{proof}

Proposition \ref{prop:abc-differentiable} establishes that the square of the box-norm is differentiable and its smoothness is controlled by the parameter $a$. Furthermore, the gradient can be determined from the proximity operator, which we compute in Section \ref{sec:computation-of-norm-and-prox}.  

\subsection{Geometry of the Norms}\label{sec:geometry}
In this section, we briefly investigate the geometry of the box-norm.  
Figure \ref{fig:3d-ksup-unit-balls} depicts the unit balls for the $k$-support norm in $\R^3$ for various parameter values, setting $b=1$ throughout. For $k=1$ and $k=3$ we recognize the $\ell_1$ and $\ell_{2}$ balls respectively. 
For $k=2$ the unit ball retains characteristics of both norms, and in particular we note the discontinuities along each of $x$, $y$ and $z$ planes, as in the case of the $\ell_1$ norm.

Figure \ref{fig:3d-box-1-unit-balls} depicts the unit balls for the box-norm for a range of values of $a$ and $k$, with $c=(b-a)k+da$.  
We see that in general the balls increase in volume with each of $a$ and $k$, holding the other parameter fixed. 
Comparing the $k$-support norm ($k=1$), that is the $\ell_1$ norm, and the box-norm ($k=1$, $a=0.15$), we see that the parameter $a$ smooths out the 
{sharp edges} of the $\ell_1$ norm.
This is also visible when comparing the $k$-support ($k=2$) and the box ($k=2$, $a=0.15$).  This illustrates the smoothing effect of the parameter $a$, as suggested by Proposition \ref{prop:abc-differentiable}. 

We can gain further insight into the shape of the unit balls of the box-norm from Corollary \ref{cor:box-inf-convolution-unit-ball-dual-unit-ball}. 
Equation \eqref{eqn:box-norm-C} shows that the primal unit ball is the convex hull of ellipsoids in $\R^d$, where for each group $g$ the semi-principle axis along dimension $i$ has length $\sqrt{b}$ if $i\in g$, and length $\sqrt{a}$ if $i\notin g$.  
Similarly, the unit ball of the dual box-norm is the intersection of ellipsoids in $\R^d$ where for each group $g$ the semi-principle axis in dimension $i$ has length $1/ \sqrt{b}$ if $i\in g$, and length $1/\sqrt{a}$ if $i\notin g$ (see also Equation \ref{eqn:dual-unit-ball} in the appendix). 
It is instructive to further consider the effect of the parameter $a$ on the unit balls for fixed $k$. 
To this end, recall that since $c=(b-a)k + da$, when $k=d$ we have $c=bd$. 
In this case, for all values of $a$ in $(0,b]$, the objective in \eqref{eqn:theta-primal} is attained by setting $\theta_i=b$ for all $i$, and we recover the $\ell_2$-norm, scaled by $1/\sqrt{b}$, for the primal box-norm. 
Similarly in \eqref{eqn:theta-dual}, the dual norm gives rise to the $\ell_2$-norm, scaled by $\sqrt{b}$. 
In the remainder of this section we therefore only consider the cases $k\in \{1,2\}$ in $\R^3$

For $k=1$, $\mathcal{G}_k = \{\{1\},\{2\}, \{3\}\}$. 
The unit ball of the primal box-norm is the convex hull of the ellipsoids defined by
\begin{align}
\frac{w_1^2}{b} + \frac{w_2^2}{a}+\frac{w_3^2}{a} =1,  
\quad \frac{w_1^2}{a} + \frac{w_2^2}{b}+\frac{w_3^2}{a} =1, \quad \text{and}
\quad \frac{w_1^2}{a} + \frac{w_2^2}{a}+\frac{w_3^2}{b} =1, \label{eqn:ellipses-primal-k1}
\end{align} 
and the unit ball of the dual box-norm is the intersection of the ellipsoids defined by
\begin{align}
\frac{w_1^2}{b^{-1}} + \frac{w_2^2}{a^{-1}}+\frac{w_3^2}{a^{-1}} =1,  
\quad \frac{w_1^2}{a^{-1}} + \frac{w_2^2}{b^{-1}}+\frac{w_3^2}{a^{-1}} =1, \quad \text{and}
\quad \frac{w_1^2}{a^{-1}} + \frac{w_2^2}{a^{-1}}+\frac{w_3^2}{b^{-1}} =1. \label{eqn:ellipses-dual-k1}
\end{align} 
For $k=2$, $\mathcal{G}_k =
\{\{1\},\{2\}, \{3\}, \{1,2\}, \{2,3\}, \{1,3\}\}$.  The unit ball of the primal box-norm is the convex hull of the ellipsoids defined by \eqref{eqn:ellipses-primal-k1} in addition to the following
\begin{align}
\frac{w_1^2}{b} + \frac{w_2^2}{b}+\frac{w_3^2}{a} =1,  
\quad \frac{w_1^2}{a} + \frac{w_2^2}{b}+\frac{w_3^2}{b} =1, \quad \text{and}
\quad \frac{w_1^2}{b} + \frac{w_2^2}{a}+\frac{w_3^2}{b} =1, \label{eqn:ellipses-primal-k2}
\end{align} 
and the unit ball of the dual box-norm is the intersection of the ellipsoids defined by \eqref{eqn:ellipses-dual-k1} in addition to the following
\begin{align}
\frac{w_1^2}{b^{-1}} + \frac{w_2^2}{b^{-1}}+\frac{w_3^2}{a^{-1}} =1,  
\quad \frac{w_1^2}{a^{-1}} + \frac{w_2^2}{b^{-1}}+\frac{w_3^2}{b^{-1}} =1, \quad \text{and}
\quad \frac{w_1^2}{b^{-1}} + \frac{w_2^2}{a^{-1}}+\frac{w_3^2}{b^{-1}} =1. \label{eqn:ellipses-dual-k2}
\end{align} 
For the primal norm, note that since $b>a$, each of the ellipsoids in \eqref{eqn:ellipses-primal-k1} is entirely contained within one of those defined by \eqref{eqn:ellipses-primal-k2}, hence when taking the convex hull we need only consider the latter set.
Similarly for the dual norm, since $\frac{1}{b}<\frac{1}{a}$, each of the ellipsoids in 
{\eqref{eqn:ellipses-dual-k1}} is contained within one of those defined by \eqref{eqn:ellipses-dual-k2}, hence when taking the intersection we need only consider the latter set. 

Figures \ref{fig:3dbox-k1-primal-unit-balls} and \ref{fig:3dbox-k2-dual-unit-balls} depict the constituent ellipses for various parameter values for the primal and dual norms. 
As $a$ tends to zero the ellipses become degenerate. 
For $k=1$, taking the convex hull we recover the $\ell_1$ unit ball in the primal norm, and taking the intersection we recover the $\ell_{\infty}$ unit ball in the dual norm. 
As $a$ tends to $1$ we recover the $\ell_2$ norm in both the primal and the dual.

\begin{figure}[th!]
\caption{Unit balls of the $k$-support norm for $k\in \{1,2,3\}$.\label{fig:3d-ksup-unit-balls}}
\centering
\begin{minipage}{0.30\linewidth}
\centering
  \includegraphics[width=0.8\linewidth]{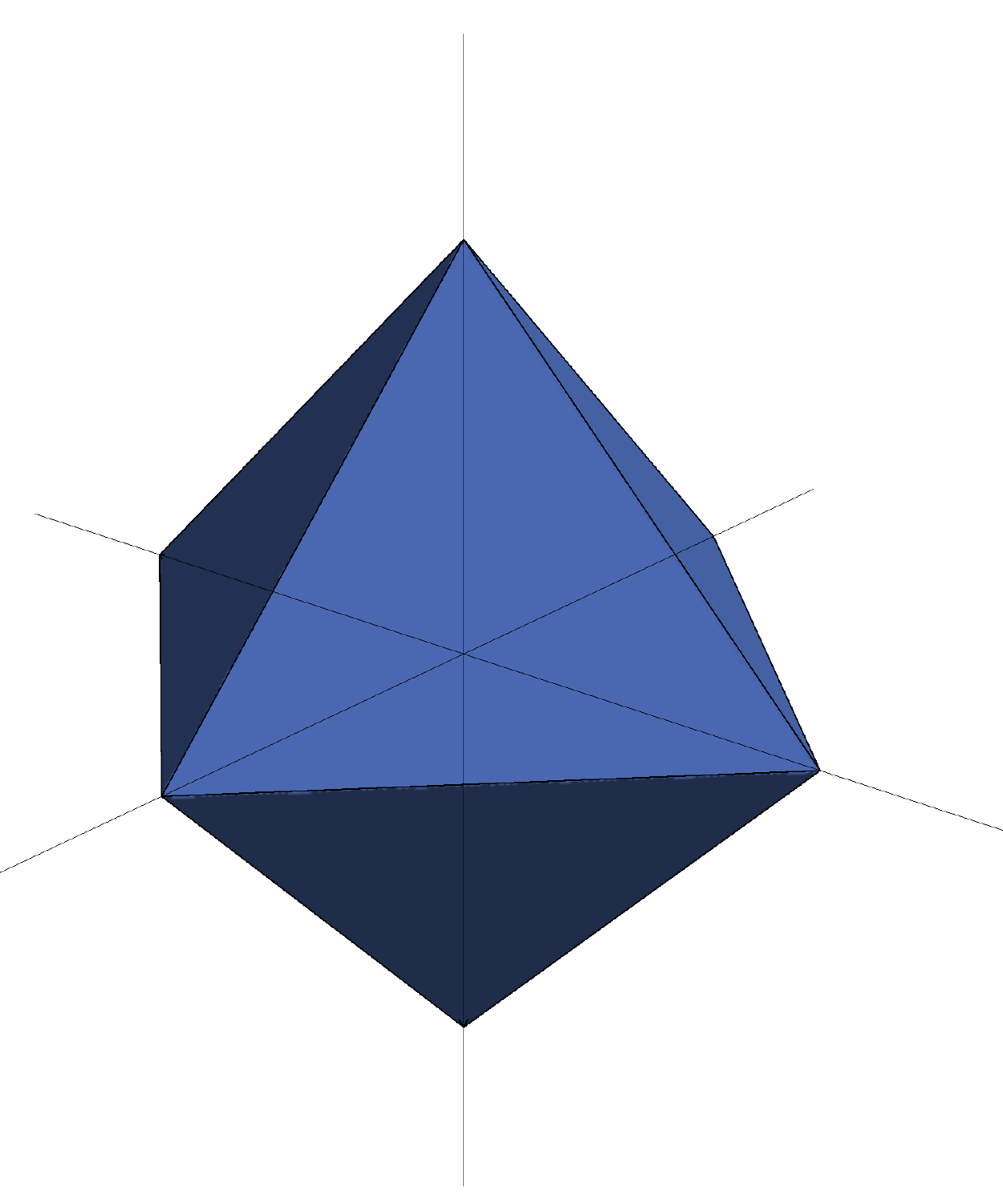}
\end{minipage}
\begin{minipage}{0.30\linewidth}
\centering
  \includegraphics[width=0.8\linewidth]{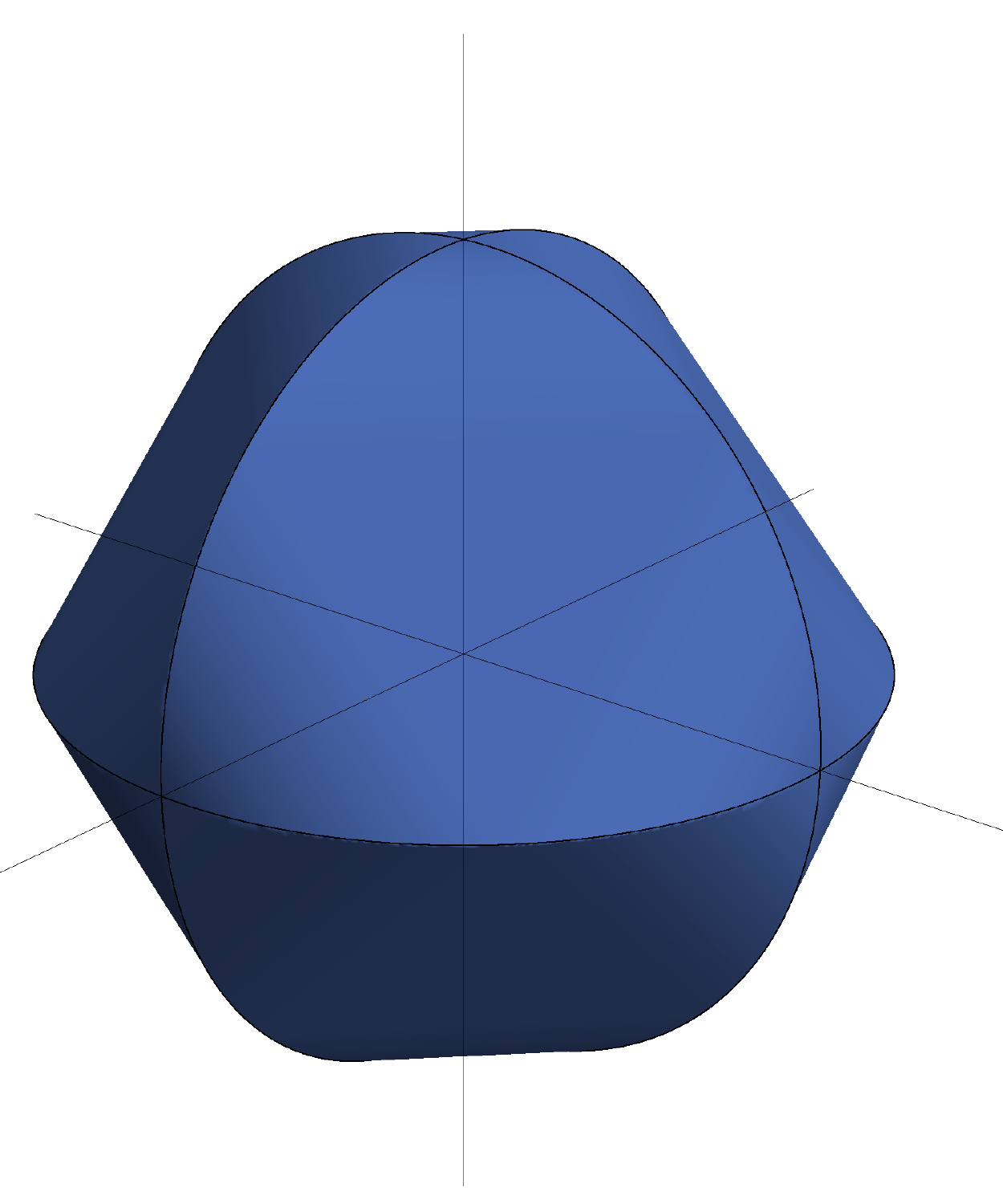}
\end{minipage}
\begin{minipage}{0.30\linewidth}
\centering
  \includegraphics[width=0.8\linewidth]{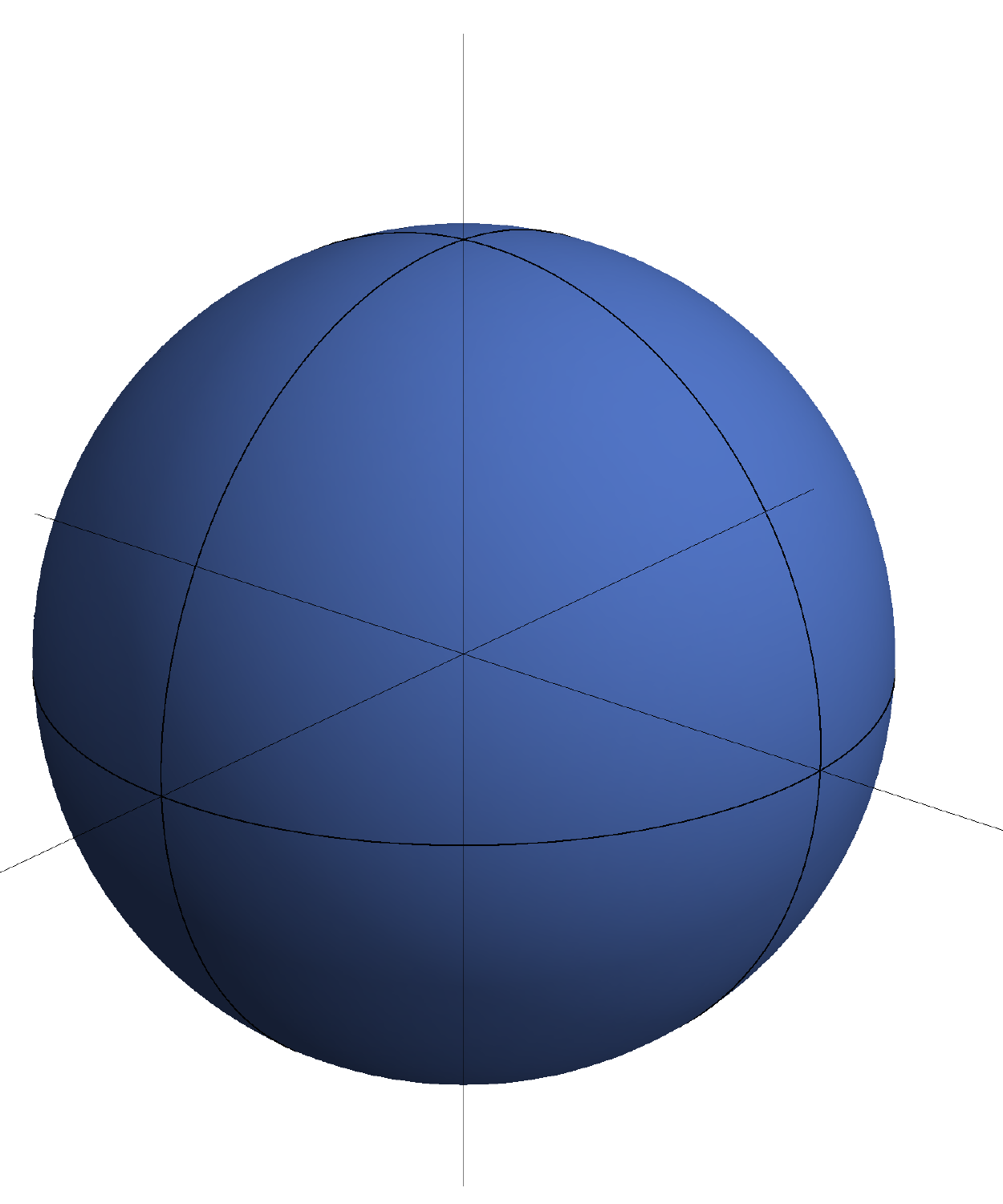}
\end{minipage}
\caption{Unit balls of the box-norm, $(k,a)\in \{(1,0.15), (2,0.15),(2,0.40)\}$.\label{fig:3d-box-1-unit-balls}}
\centering
\begin{minipage}{0.30\linewidth}
\centering
  \includegraphics[width=0.8\linewidth]{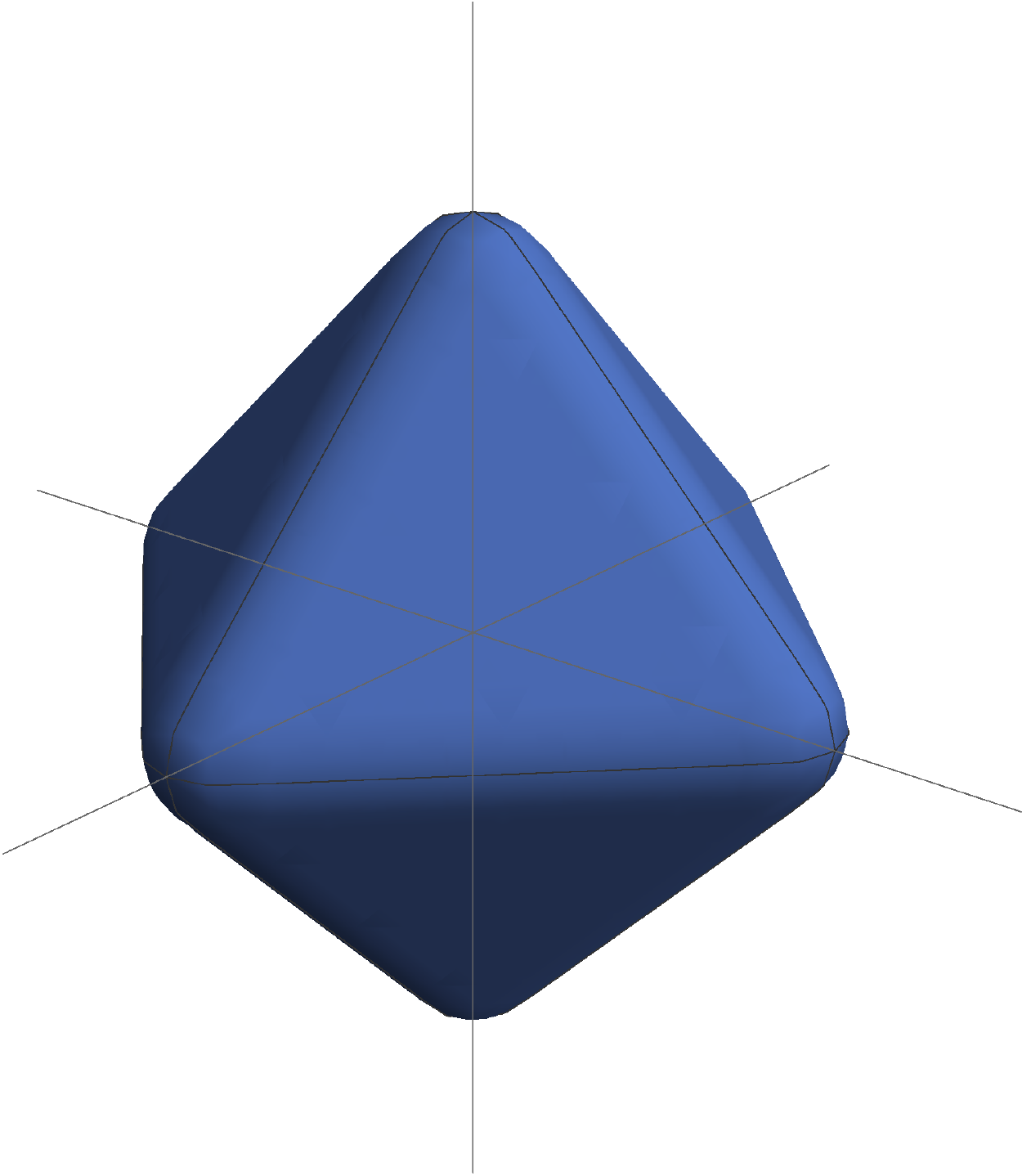}
\end{minipage}
\begin{minipage}{0.30\linewidth}
\centering
  \includegraphics[width=0.8\linewidth]{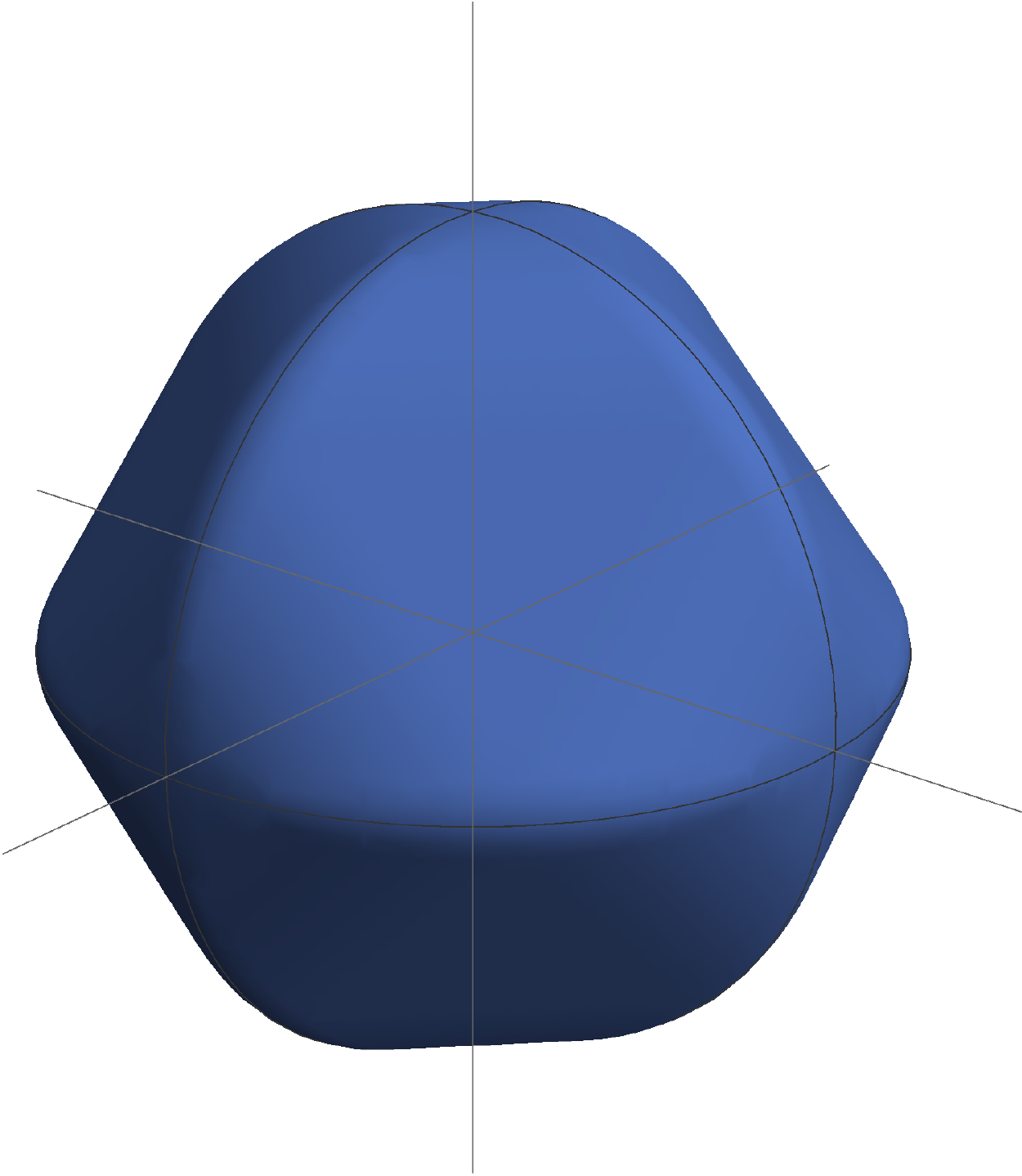}
\end{minipage}
\begin{minipage}{0.30\linewidth}
\centering
  \includegraphics[width=0.8\linewidth]{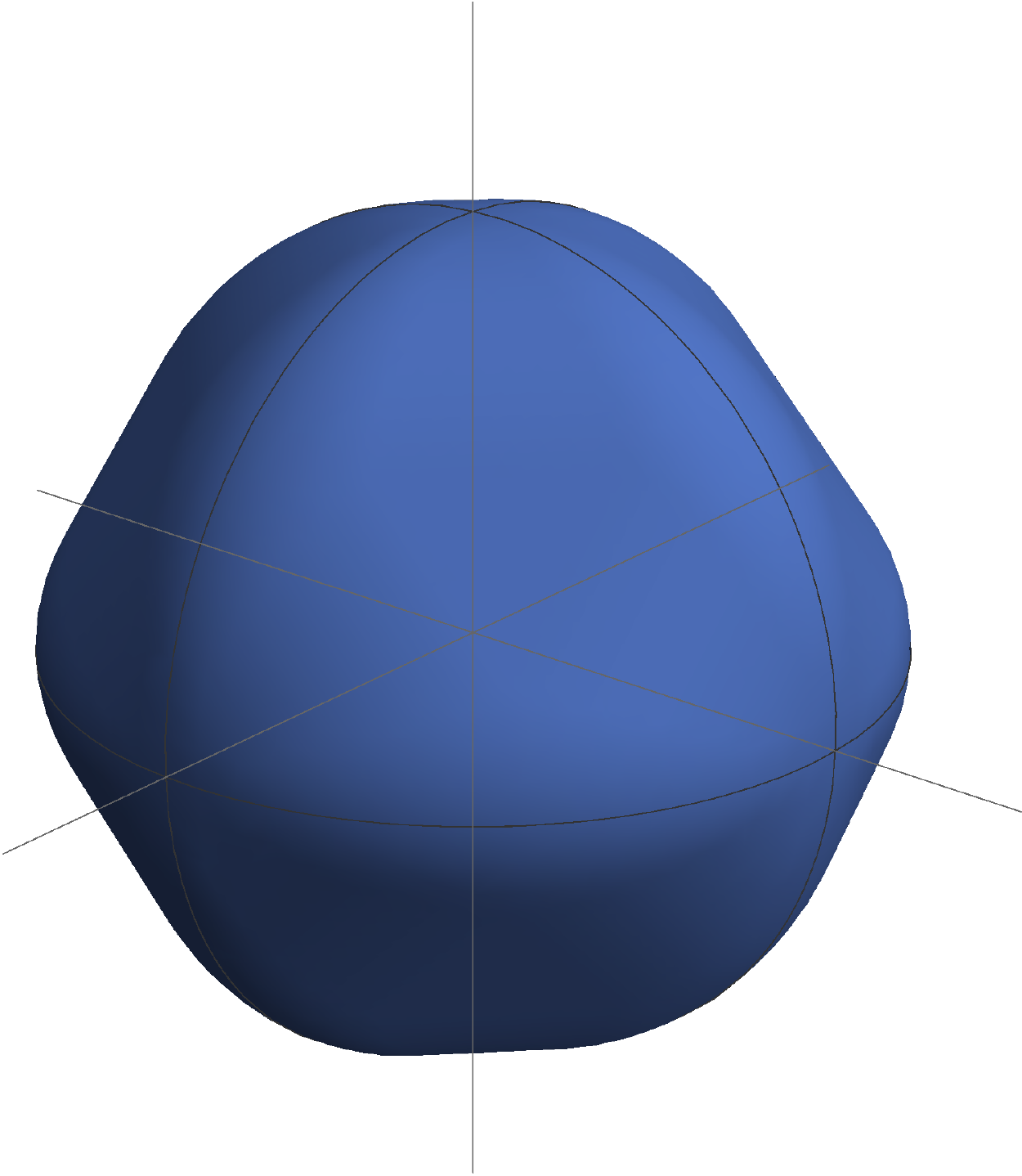}
\end{minipage}
\caption{Primal box-norm component ellipsoids, $(k,a)\in \{(1,0.15), (2,0.15),(2,0.40)\}$.\label{fig:3dbox-k1-primal-unit-balls}}
\centering
\begin{minipage}{0.30\linewidth}
\centering
  \includegraphics[width=0.8\linewidth]{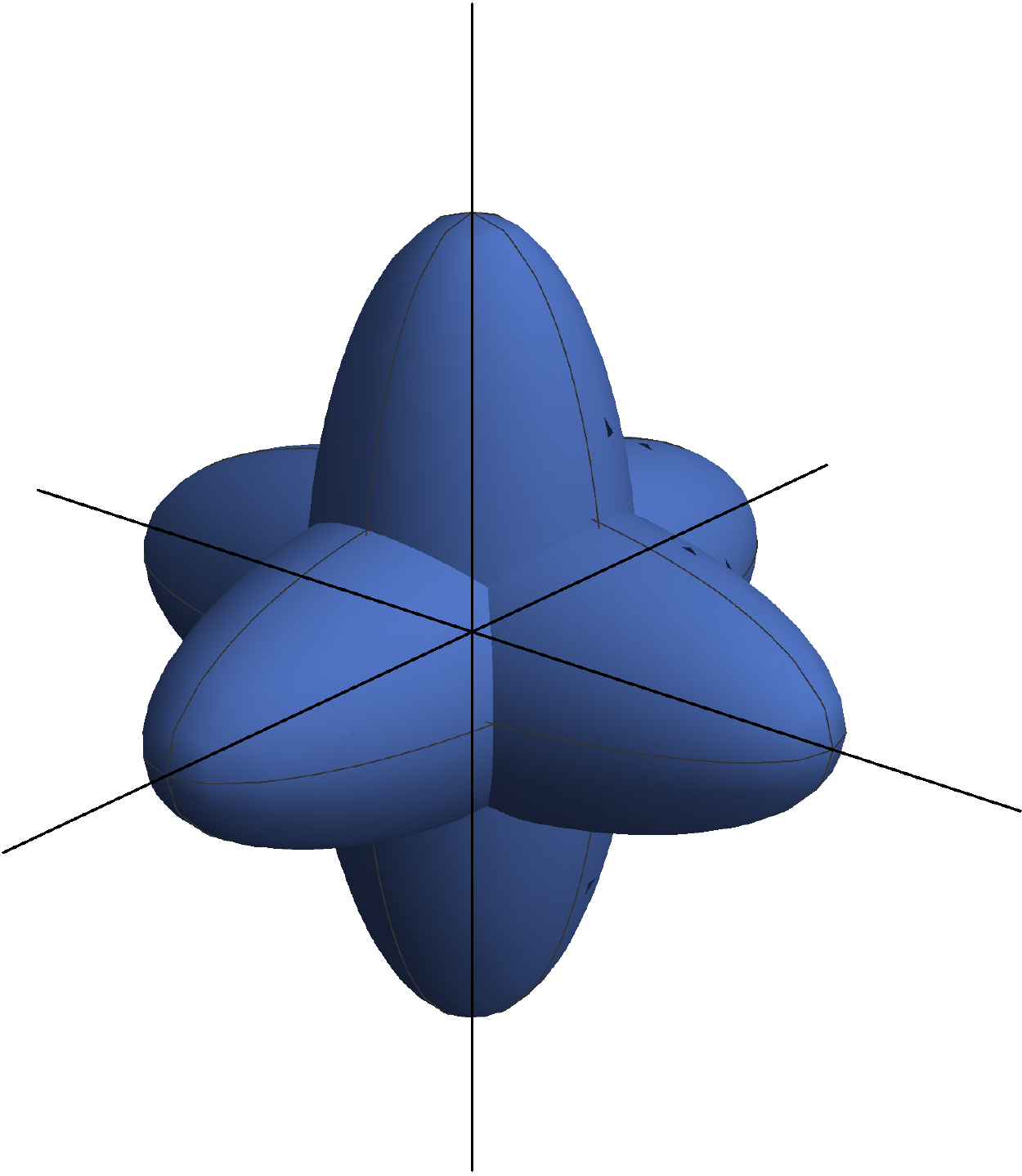}
\end{minipage}
\begin{minipage}{0.30\linewidth}
\centering
  \includegraphics[width=0.8\linewidth]{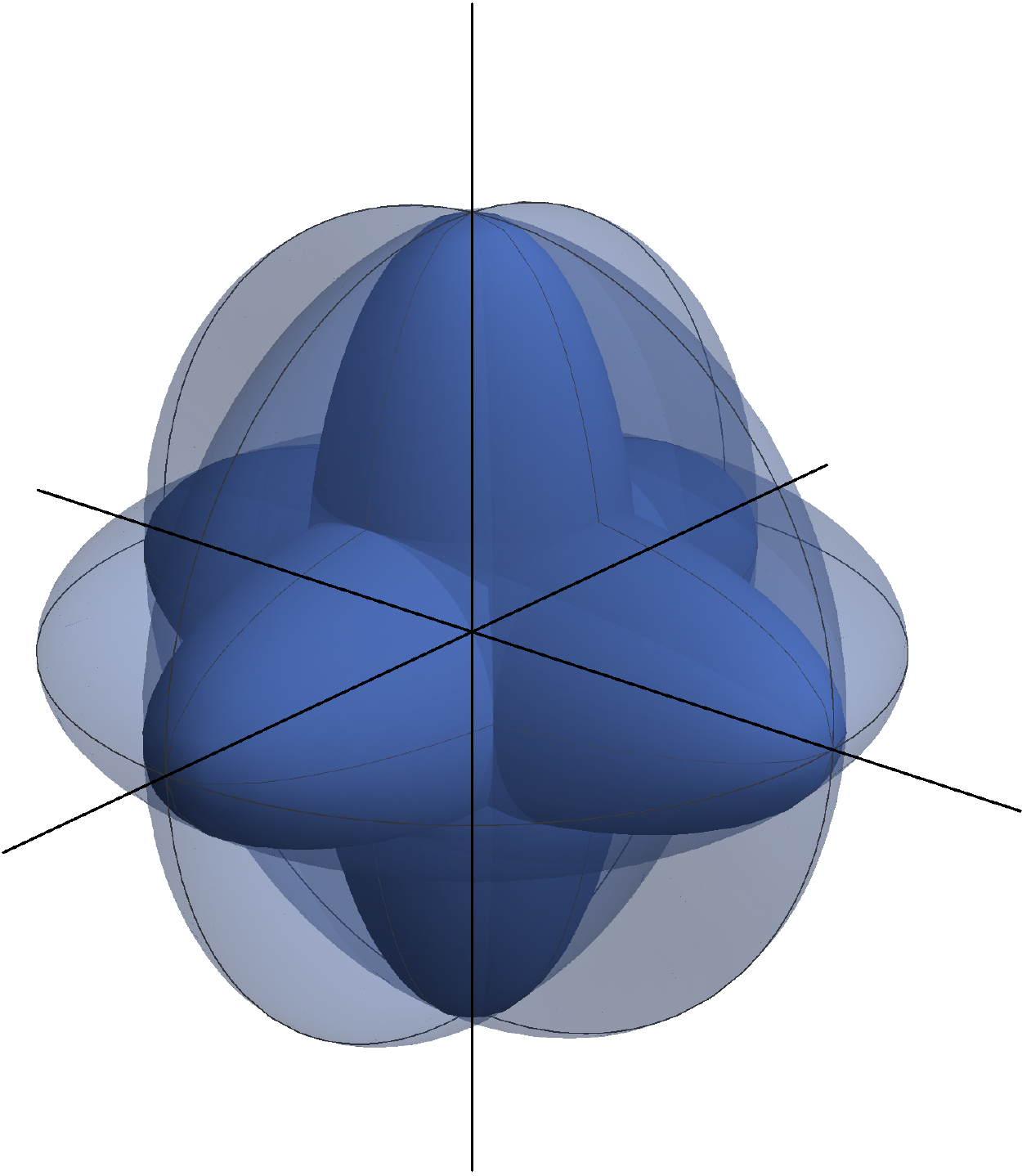}
\end{minipage}
\begin{minipage}{0.30\linewidth}
\centering
  \includegraphics[width=0.8\linewidth]{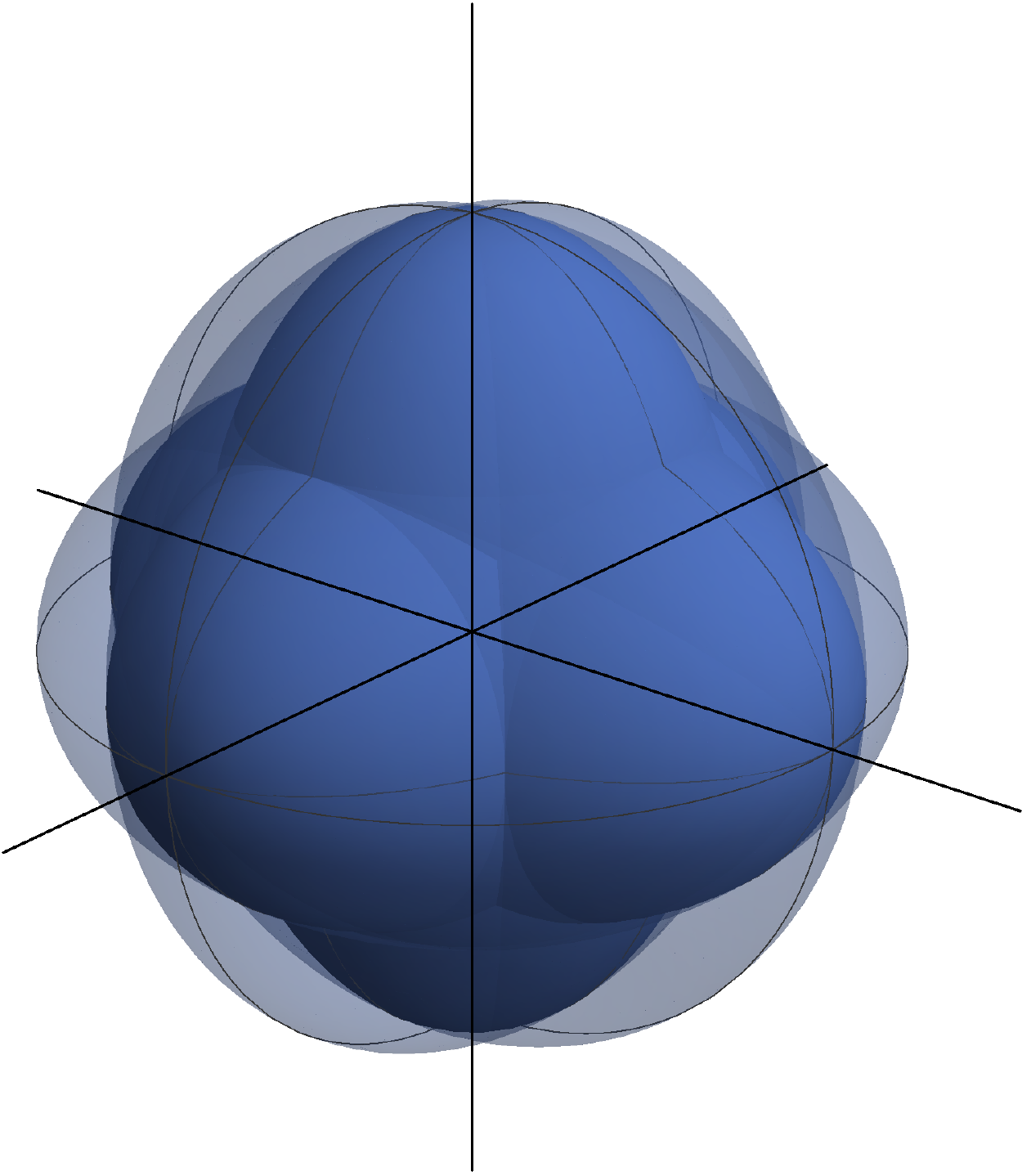}
\end{minipage}
\caption{Dual box-norm unit balls and ellipsoids, $(k,a)\in\{(1,0.15), (2,0.15),(2,0.40)\}$.  For $k=2$, only 3 tightest ellipsoids are shown. \label{fig:3dbox-k2-dual-unit-balls}}
\centering
\begin{minipage}{0.30\linewidth}
\centering
  \includegraphics[width=0.8\linewidth]{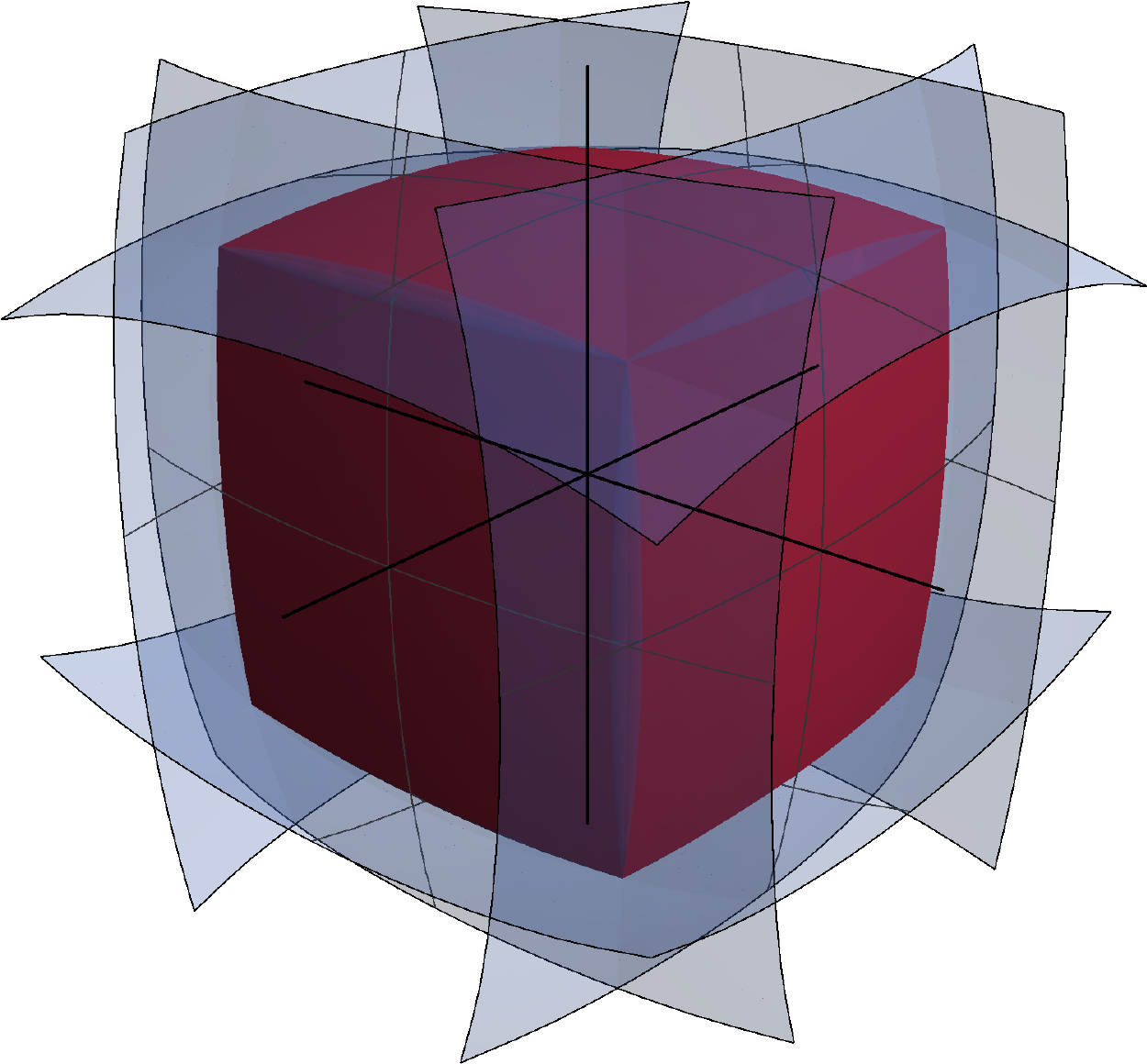}
\end{minipage}
\begin{minipage}{0.30\linewidth}
\centering
  \includegraphics[width=0.8\linewidth]{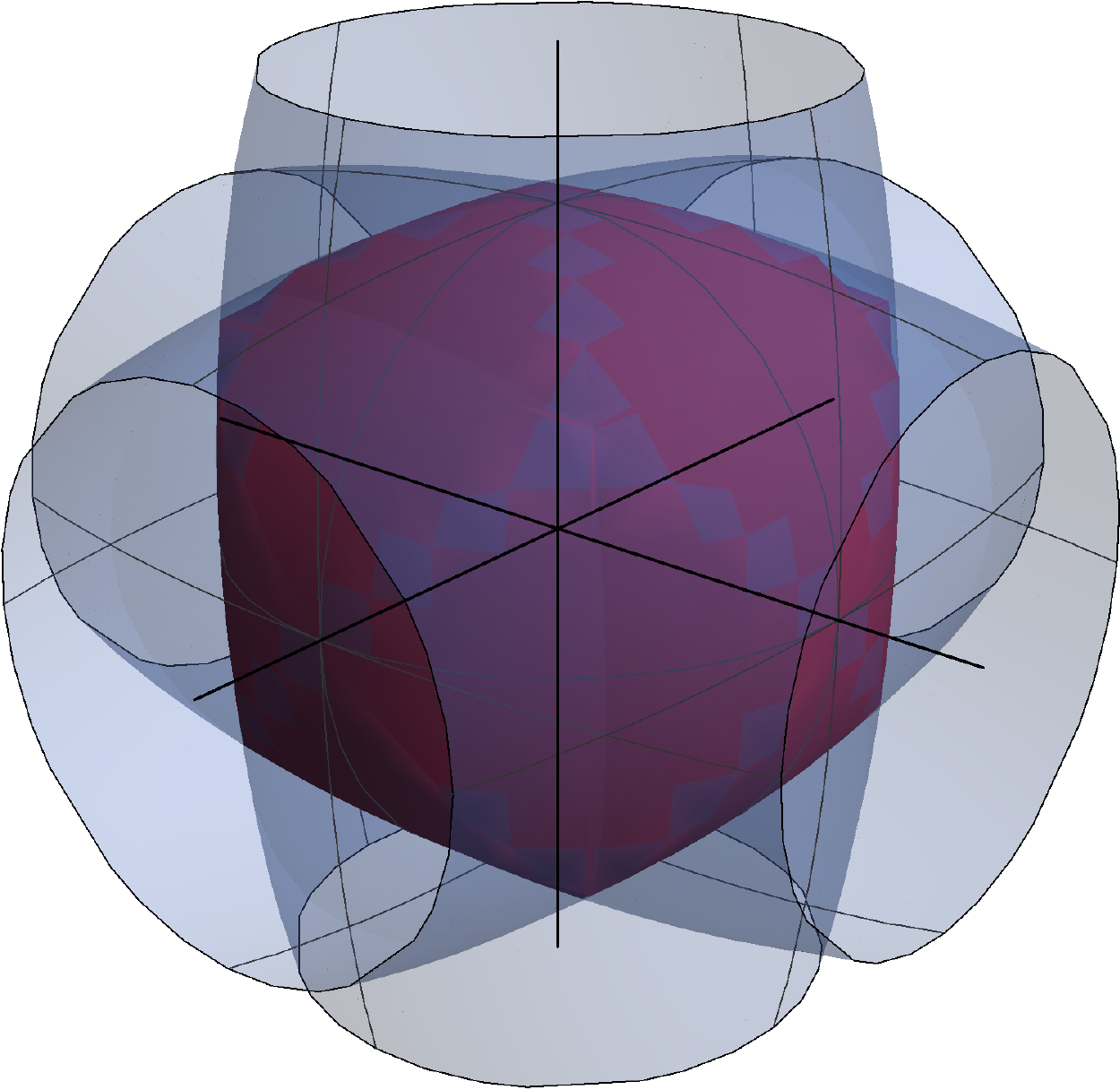}
\end{minipage}
\begin{minipage}{0.30\linewidth}
\centering
  \includegraphics[width=0.8\linewidth]{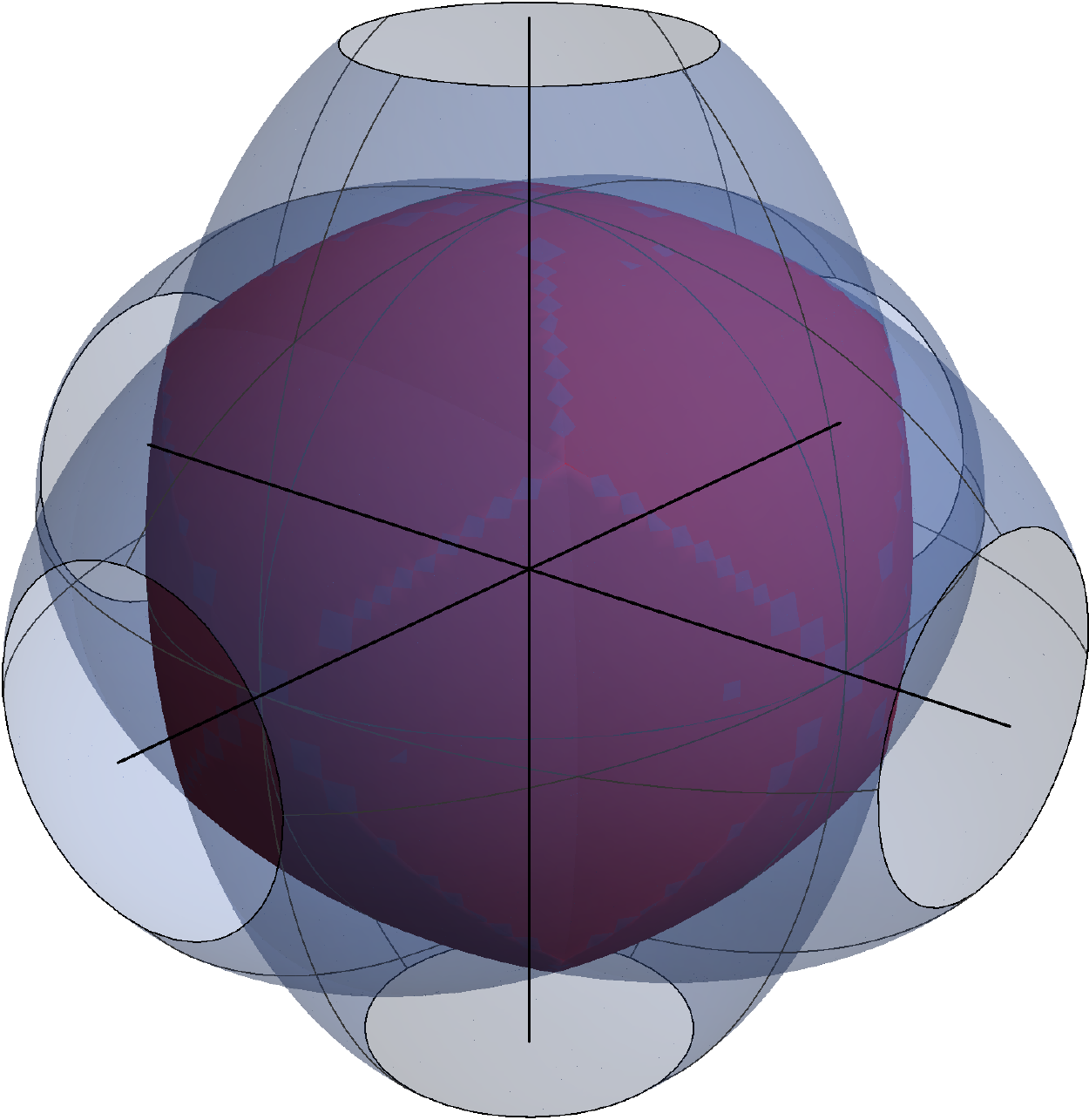}
\end{minipage}
\end{figure}


\section{Computation of the Norm and the Proximity Operator}
\label{sec:computation-of-norm-and-prox}
In this section, we compute the norm and the proximity operator of the squared box-norm by explicitly solving the optimization problem \eqref{eqn:theta-primal}. We also specialize our results to the $k$-support norm and comment on the improvement with respect the method by \cite{Argyriou2012}. Recall that, for every vector $w \in \R^d$, $|w|^{\downarrow}$ denotes the vector obtained from $w$ by reordering its components so that they are non-increasing in absolute value. 
\begin{theorem}\label{thm:computation-of-theta-norm}
For every $w \in \mathbb{R}^d$ it holds that 
\begin{align}
\Vert w \Vert_{\rm box}^2
&= \frac{1}{b}  \Vert w_Q \Vert_2^2  + 
\frac{1}{p} \Vert w_I \Vert_1^2 +
\frac{1}{a} \Vert w_L \Vert_2^2 \label{eqn:solution-of-abc-norm-2},
\end{align}
where 
$w_Q = (\vert w\vert^{\downarrow}_{1}, \ldots, \vert w\vert^{\downarrow}_{q})$, $w_I = (\vert w\vert^{\downarrow}_{q+1}, \ldots, \vert w\vert^{\downarrow}_{d-\ell})$, $w_L = (\vert w\vert^{\downarrow}_{d-\ell+1}, \ldots, \vert w\vert^{\downarrow}_{d})$, $q$ and $\ell$ are the unique integers in $\{0,\ldots, d\}$ that satisfy $q+\ell\leq d$,
\begin{align}
\frac{\vert w_q \vert}{b} \geq \frac{1}{p}\sum_{i=q+1}^{d-\ell}\vert w_i \vert > \frac{\vert w_{q+1} \vert}{b},
\quad ~~~
\frac{\vert w_{d-\ell} \vert}{a} \geq \frac{1}{p}\sum_{i=q+1}^{d-\ell}\vert w_i \vert > \frac{\vert w_{d-\ell+1} \vert}{a}, \label{eqn:optimal-ell-and-q}
\end{align}
$p=c-qb-\ell a$ and we have defined $\vert w_{0} \vert=\infty$ and $\vert w_{d+1} \vert = 0$. Furthermore, the minimizer $\theta$ has the form
\begin{align*}
\theta_i = 
\begin{cases}
b, \quad & \text{if } i \in \{1,\dots,q\}, \\
p \frac{\vert w_i \vert}{\sum_{j=q+1}^{d-\ell}\vert w_j \vert}, \quad &
\text{if } i \in \{q+1,\dots,d-\ell\},\\
a, \quad & \text{otherwise}.
\end{cases}
\end{align*}
\end{theorem}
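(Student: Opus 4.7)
The plan is to solve the minimization problem defining $\|w\|_{\rm box}^2$ in \eqref{eqn:theta-primal} via the Karush--Kuhn--Tucker (KKT) conditions and then read off both the optimal $\theta$ and the stated value of the objective. Observe first that the objective $\theta \mapsto \sum_i w_i^2/\theta_i$ is strictly convex and coordinate-wise decreasing on the open positive orthant, so any minimizer over the box $\{a \le \theta_i \le b\}$ must saturate the linear constraint: $\sum_i \theta_i = c$. Without loss of generality (the objective and constraints are invariant under permutations that reorder $w_i^2$), reorder the coordinates so that $|w_1|\ge|w_2|\ge\cdots\ge|w_d|$; I will show the optimal $\theta$ is also nonincreasing, grouping into a top block at $b$, a middle block strictly inside $(a,b)$, and a bottom block at $a$.

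Introduce a Lagrange multiplier $\mu\ge 0$ for the equality $\sum_i\theta_i = c$, and multipliers $\alpha_i,\beta_i \ge 0$ for the lower and upper bounds. The stationarity condition reads
\begin{equation*}
-\frac{w_i^2}{\theta_i^2} + \mu - \alpha_i + \beta_i = 0.
\end{equation*}
Complementary slackness then gives three exhaustive alternatives: (i) if $a<\theta_i<b$ the multipliers vanish and $\theta_i = |w_i|/\sqrt{\mu}$; (ii) if $\theta_i = b$ then $\beta_i = w_i^2/b^2 - \mu \ge 0$, equivalently $|w_i|/b \ge \sqrt{\mu}$; (iii) if $\theta_i = a$ then $\alpha_i = \mu - w_i^2/a^2 \ge 0$, equivalently $|w_i|/a \le \sqrt{\mu}$. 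Because $|w_i|$ is nonincreasing, these three regimes must partition the indices into an initial run $\{1,\dots,q\}$ (upper active), a middle run $\{q+1,\dots,d-\ell\}$ (interior), and a final run $\{d-\ell+1,\dots,d\}$ (lower active), with the thresholds determined by the two-sided inequalities \eqref{eqn:optimal-ell-and-q}.

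Next, I substitute this structure into the tight constraint $\sum_i\theta_i=c$ to eliminate $\mu$. This gives
\begin{equation*}
qb + \ell a + \frac{1}{\sqrt{\mu}}\sum_{i=q+1}^{d-\ell}|w_i| = c,
\end{equation*}
hence $\sqrt{\mu} = \frac{1}{p}\sum_{i=q+1}^{d-\ell}|w_i|$ with $p = c - qb - \ell a$. Plugging $\sqrt{\mu}$ into the three KKT regimes recovers exactly the inequalities \eqref{eqn:optimal-ell-and-q} and the stated formula for $\theta$. Evaluating the objective block by block, the top block contributes $\frac{1}{b}\|w_Q\|_2^2$, the bottom contributes $\frac{1}{a}\|w_L\|_2^2$, and in the middle $w_i^2/\theta_i = |w_i|\sqrt{\mu}$, so that block contributes $\sqrt{\mu}\sum_{i=q+1}^{d-\ell}|w_i| = \frac{1}{p}\|w_I\|_1^2$, yielding \eqref{eqn:solution-of-abc-norm-2}.

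The one genuine obstacle is the existence and uniqueness of the pair $(q,\ell)\in\{0,\dots,d\}^2$ satisfying \eqref{eqn:optimal-ell-and-q}, together with the edge conventions $|w_0|=\infty$, $|w_{d+1}|=0$ (which absorb the cases in which one of the two extreme blocks is empty). Existence follows because the KKT system above admits a solution by strict convexity of the problem and compactness of the feasible set; positivity $p>0$ is automatic since $c\in(ad,bd)$ and the interior block is nonempty whenever it is needed. For uniqueness one argues monotonically: the map $(q,\ell)\mapsto \sqrt{\mu}$ defined by the constraint is strictly increasing in $q$ and strictly decreasing in $\ell$, so the two-sided bracket $|w_{q+1}|/b < \sqrt{\mu}\le |w_q|/b$ fixes $q$ given any admissible $\sqrt{\mu}$, and similarly for $\ell$; combined with $\sum\theta_i = c$ this pins down the unique pair. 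The remainder of the argument is then bookkeeping that closes the proof and simultaneously yields the closed form for the minimizing $\theta$.
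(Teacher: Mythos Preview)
Your proof is correct and follows essentially the same Lagrangian route as the paper. The paper dualizes only the linear constraint with a multiplier written as $1/\alpha^2$, then minimizes the partial Lagrangian separately over each box $[a,b]$ and adjusts $\alpha$ to make the sum tight; you instead write full KKT conditions with multipliers for all constraints. The two are equivalent, with your $\sqrt{\mu}$ playing the role of the paper's $1/\alpha$.

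One small point: your uniqueness argument via monotonicity of $(q,\ell)\mapsto\sqrt{\mu}$ is not quite right as stated, since the claimed monotonicity holds only conditionally (e.g.\ $\sqrt{\mu}$ increases in $q$ precisely when $\sqrt{\mu}>|w_{q+1}|/b$, which is one of the inequalities you are trying to pin down). The cleaner argument, which you in fact already invoke for existence, is that strict convexity gives a unique minimizer $\theta^*$, and the strict inequalities in \eqref{eqn:optimal-ell-and-q} force every interior $\theta_i$ to lie strictly in $(a,b)$, so the counts $q=\#\{i:\theta_i^*=b\}$ and $\ell=\#\{i:\theta_i^*=a\}$ are unambiguous. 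This is also how the paper (implicitly) handles uniqueness.
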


\begin{proof}
We solve the constrained optimization problem
\begin{align}
\inf \bigg\{ \sum_{i=1}^d \frac{w_i^2}{\theta_i} : a \leq \theta_i \leq b, \sum_{i=1}^d \theta_i \leq c \bigg\}.  
\label{eqn:norm-objective-supp}
\end{align}
To simplify the notation we assume without loss of generality that $w_i$ are positive and ordered nonincreasing, and note that
the optimal $\theta_i$ are ordered non increasing.
To see this, 
let $\theta^* = \argmin_{\theta \in \Theta} \sum_{i=1}^d \frac{w_i^2}{\theta_i}$. 
Now suppose that $\theta^*_i < \theta_j^*$ for some $i < j$ and define $\hat{\theta}$ to be identical to $\theta^*$, except with the $i$ and $j$ elements exchanged.  
The difference in objective values is
\begin{align*}
\sum_{i=1}^d \frac{w_i^2}{{\hat \theta}_i} - \sum_{i=1}^d \frac{w_i^2}{{\theta_i^*}} = 
(w_i^2 - w_j^2)\left(\frac{1}{\theta_j^*} - \frac{1}{\theta_i^*} \right),
\end{align*}
which is negative so $\theta^*$ cannot be a minimizer. 
 
We further assume without loss of generality that $w_i \ne 0$ for all $i$, and $c \leq db$ (see Remark \ref{rem:wi-not-zero} below). 
The objective is continuous and we take the infimum over a closed bounded set, so a solution exists and it is unique by strict convexity.  
Furthermore, since $c \leq db$, the sum constraint will be tight at the optimum. 
Consider the Lagrangian function 
\begin{align}
L(\theta,\alpha) = \sum_{i=1}^d \frac{w_i^2}{\theta_i}  + \frac{1}{\alpha^2} \left( \sum_{i=1}^d \theta_i-c\right),
\label{eq:La}
\end{align}
where $1/\alpha^2$ is a strictly positive multiplier, and $\alpha$ is to be chosen to make the sum constraint tight, call this value $\alpha^*$.
Let $\theta^*$ be the minimizer of $L(\theta, \alpha^*) $ over $\theta$ subject to $a\leq \theta_i \leq b$. 

We claim that $\theta^*$ solves equation \eqref{eqn:norm-objective-supp}.  
Indeed, for any $\theta \in [a,b]^d$, $L(\theta^*,\alpha^*) \leq L(\theta, \alpha^*)$, which implies that
\begin{align*}
\sum_{i=1}^d \frac{w_i^2}{\theta^*_i}   \leq
\sum_{i=1}^d \frac{w_i^2}{\theta_i}  + \frac{1}{(\alpha^*)^2} \left( \sum_{i=1}^d \theta_i-c\right).
\end{align*}
If in addition we impose the constraint $\sum_{i=1}^d \theta_i \leq c$, the second term on the right hand side is at most zero, so we have for all such $\theta$ that
\begin{align*}
\sum_{i=1}^d \frac{w_i^2}{\theta^*_i}   \leq \sum_{i=1}^d \frac{w_i^2}{\theta_i},
\end{align*}
whence it follows that $\theta^*$ is the minimizer of \eqref{eqn:norm-objective-supp}.

We can therefore solve the original problem by minimizing the Lagrangian \eqref{eq:La} over the box constraint.
Due to the coupling effect of the multiplier, the problem is separable, and we can solve the simplified problem componentwise \citep[see][Theorem 3.1]{Micchelli2013}. 
For completeness we repeat the argument here. For every $w_i \in \R$ and $\alpha > 0$, the unique solution to the problem $\min\{ \frac{w_i^2}{\theta} +  \frac{\theta}{\alpha^2} : a\leq \theta \leq b\}$ is given by 
\begin{align}
\theta = 
\begin{cases}
b, \quad &\text{if }  \alpha \vert w_i \vert> b,\\
\alpha \vert w \vert, &\text{if } b \geq \alpha \vert w_i \vert\geq a,\\
a, \quad &\text{if } a> \alpha \vert w_i \vert.
\end{cases}\label{eqn:form-of-optimal-theta-abc}
\end{align}
Indeed, for fixed $w_i$, the objective function is strictly convex on $\mathbb{R}^d_{++}$ and has a unique minimum on $(0,\infty)$ (see Figure 1.b in \citet{Micchelli2013} for an illustration).  
The derivative of the objective function is zero for $\theta = \theta^* := \alpha \vert w_i \vert$, strictly positive below $\theta^*$ and strictly increasing above $\theta^*$.  
Considering these three cases the result follows and $\theta$ is determined by \eqref{eqn:form-of-optimal-theta-abc} where $\alpha$ satisfies $\sum_{i=1}^d \theta_i(\alpha) = c$.

The minimizer then has the form 
\begin{align*}
\theta = (\underbrace{b, \ldots, b}_q, \theta_{q+1}, \ldots, \theta_{d-\ell}, \underbrace{a, \ldots, a}_{\ell}),
\end{align*}
where $q,\ell \in \{0, \ldots, d\}$ are determined by the value of $\alpha$ which satisfies
\begin{align*}
S(\alpha) = \sum_{i=1}^d \theta_i(\alpha) =q b + \sum_{i=q+1}^{d-\ell} \alpha \vert w_i \vert +  \ell a = c,
\end{align*}
i.e. $\alpha = p / \left(\sum_{i=q+1}^{d-\ell}\vert w_i \vert\right)$, where $p=c-qb -\ell a$.

The value of the norm follows by substituting $\theta$ into the objective and we get
\begin{align*}
\Vert w \Vert_{{\rm box}}^2 
&= \sum_{i=1}^{q} \frac{\vert w_i \vert^2}{b} + 
\frac{1}{p} \Big(\sum_{i=q+1}^{d-\ell} \vert w_i\vert\Big)^2 +  \sum_{i=d-\ell+1}^{d} \frac{\vert w_i\vert^2}{a} = \frac{1}{b} \Vert w_Q \Vert_2^2  + 
\frac{1}{p} \Vert w_I\Vert_1^2 +
\frac{1}{a} \Vert w_L \Vert_2^2, 
\end{align*}
as required. 
We can further characterize $q$ and $\ell$ by considering the form of $\theta$.  
By construction we have $\theta_q \geq b >\theta_{q+1}$ and $\theta_{d-\ell} > a \geq \theta_{d-\ell+1}$, or equivalently
\begin{align*}
\frac{\vert w_q \vert}{b} &\geq \frac{1}{p}\sum_{i=q+1}^{d-\ell}\vert w_i \vert > \frac{\vert w_{q+1} \vert}{b} ~~~\textrm{and}~~~ \frac{\vert w_{d-\ell} \vert}{a} \geq \frac{1}{p}\sum_{i=q+1}^{d-\ell}\vert w_i \vert > \frac{\vert w_{d-\ell+1} \vert}{a}.
\end{align*}
The proof is completed.
\end{proof}

\begin{remark}\label{rem:wi-not-zero}
The case where some $w_i$ are zero follows from the case that we have considered in the theorem.  
If $w_i=0$ for $n < i \leq d$, then clearly we must have $\theta_i=a$ for all such $i$.  
We then consider the $n$-dimensional problem of finding $(\theta_1, \ldots, \theta_{n})$ that minimizes $\sum_{i=1}^{n} \frac{w_i^2}{\theta_i}$, subject to $a \leq \theta_i \leq b$, and $\sum_{i=1}^{n} \theta_i \leq c'$, where $c'=c-(d-n)a$.  
As $c \geq da$ by assumption, we also have $c' \geq na$, so a solution exists to the $n$-dimensional problem.  
If $c' \geq bn$, then a solution is trivially given by $\theta_i=b$ for all $i=1,\dots,n$.  
In general, $c' < bn$, and we proceed as per the proof of the theorem. 
Finally, a vector that solves the original $d$-dimensional problem will be given by $(\theta_1, \ldots, \theta_{n}, a, \ldots, a)$. 
\end{remark}

Theorem \ref{thm:computation-of-theta-norm} suggests two methods for computing the box-norm.  
First, we can find $\alpha$ such that $S(\alpha)=c$; this value uniquely determines $\theta$ in \eqref{eqn:form-of-optimal-theta-abc}, and the norm follows by substitution into 
the objective in \eqref{eq:La}.
Alternatively, we identify $q$ and $\ell$ that jointly satisfy \eqref{eqn:optimal-ell-and-q} and we compute the norm using  \eqref{eqn:solution-of-abc-norm-2}.  
Taking advantage of the structure of $\theta$ in the former method leads to a computation time that is $\mathcal{O}( d \log d)$. 
 
\begin{theorem}\label{thm:theta-norm-d-log-d}
The computation of the box-norm can be completed in $\mathcal{O}( d \log d)$ time. 
\end{theorem}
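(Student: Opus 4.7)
The plan is to exploit the structure revealed by the previous theorem: at the optimum, $\theta$ has a plateau-ramp-plateau shape $(b,\dots,b,\alpha|w_{q+1}|,\dots,\alpha|w_{d-\ell}|,a,\dots,a)$, so the entire problem is parameterized by a single scalar $\alpha>0$ via $\theta_i(\alpha)=\min(b,\max(a,\alpha|w_i|))$, and we need only locate the unique $\alpha^*$ for which $S(\alpha^*):=\sum_{i=1}^d\theta_i(\alpha^*)=c$ (assuming the sum constraint is tight; the degenerate case $c\geq bd$ is trivial).

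First I would sort $|w|$ in nonincreasing order; this single step costs $\mathcal O(d\log d)$ and will dominate the complexity. After sorting, each component contributes two candidate transition values of $\alpha$, namely $a/|w_i|$ (below which $\theta_i=a$) and $b/|w_i|$ (above which $\theta_i=b$). Because $|w_i|$ is now sorted, the collection of these $2d$ breakpoints is already sorted (no extra sorting is required), and between consecutive breakpoints $S(\alpha)$ is affine in $\alpha$ and nondecreasing.

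Next I would sweep through the ordered breakpoints in $\mathcal O(d)$ time, maintaining incrementally the current values of $q$, $\ell$, and the partial sum $\Sigma=\sum_{i=q+1}^{d-\ell}|w_i|$. On the current linear piece, $S(\alpha)=qb+\ell a+\alpha\Sigma$, so each update when crossing a breakpoint costs $\mathcal O(1)$. As soon as the running value of $S$ at the next breakpoint would exceed $c$, I stop and solve the linear equation $qb+\ell a+\alpha\Sigma=c$ in closed form for $\alpha^*=p/\Sigma$ with $p=c-qb-\ell a$, recovering simultaneously the optimal triple $(q,\ell,\alpha^*)$. Finally, plugging $(q,\ell,p)$ into \eqref{eqn:solution-of-abc-norm-2} (using precomputed prefix sums of $w_i^2$ and $|w_i|$, which are also $\mathcal O(d)$ to build during the same sweep) yields the value of the norm in constant additional time.

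The only real obstacle is verifying that the sweep is well-defined and the stopping rule is correct. This reduces to two observations: (i) $S$ is continuous, piecewise linear and nondecreasing in $\alpha$, with slope $\Sigma\ge0$ on every piece, hence a unique $\alpha^*$ satisfying $S(\alpha^*)=c$ exists in the interval $[a/|w|^{\downarrow}_1,b/|w|^{\downarrow}_d]$ whenever $ad\le c\le bd$; and (ii) the incrementally maintained $(q,\ell,\Sigma)$ exactly matches the partition prescribed by the optimality conditions \eqref{eqn:optimal-ell-and-q}, which follows from the form \eqref{eqn:form-of-optimal-theta-abc}. Edge cases (zero components, $c=bd$, ties) can be absorbed by Remark \ref{rem:wi-not-zero} or by a conventional tie-breaking rule. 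Summing the costs, sorting is $\mathcal O(d\log d)$ and everything else is $\mathcal O(d)$, giving the advertised bound.
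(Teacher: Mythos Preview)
Your argument is correct and yields the stated bound, with one small slip: after sorting $|w|$ the two individual sequences $(a/|w|^{\downarrow}_i)_i$ and $(b/|w|^{\downarrow}_i)_i$ are each sorted, but their union of $2d$ breakpoints is \emph{not} automatically in sorted order (for $i<j$ one can have $a/|w|^{\downarrow}_j > b/|w|^{\downarrow}_i$ whenever $|w|^{\downarrow}_i/|w|^{\downarrow}_j>b/a$). This is harmless, since merging two sorted length-$d$ lists---or, equivalently, running two pointers---costs $\mathcal O(d)$, so your linear sweep still runs in $\mathcal O(d)$ after the initial sort.

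The paper's proof rests on the same structural observation---that $S(\alpha)=\sum_i\min(b,\max(a,\alpha|w_i|))$ is continuous, nondecreasing and piecewise linear with at most $2d$ breakpoints---but locates the correct piece differently: it sorts the $2d$ breakpoints directly and then performs a \emph{binary search}, evaluating $S$ from scratch in $\mathcal O(d)$ at each of the $\mathcal O(\log d)$ probes, so the search phase is itself $\mathcal O(d\log d)$. Your incremental sweep avoids that redundant recomputation and makes the search phase $\mathcal O(d)$, so that the sort of $|w|$ becomes the sole $\mathcal O(d\log d)$ step. Both routes are valid and asymptotically equivalent; yours has a slightly better constant factor, while the paper's is marginally simpler to state.
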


\begin{proof}
Following Theorem \ref{thm:computation-of-theta-norm}, we need to determine $\alpha^*$ to satisfy the coupling constraint $S(\alpha^*)  = c$.
{Each component $\theta_i$ is a piecewise linear function in the form of a step function with a constant positive slope between the values $a/\vert w_i\vert$ and $b/\vert w_i\vert$.  }
Let $\left\{ \alpha^{i} \right\}_{i=1}^{2d}$ be the set of the $2d$ critical points, where the $\alpha^{i}$ are ordered nondecreasing.  
The function $S(\alpha)$ is a nondecreasing piecewise linear function with at most $2d$ critical points. We can find $\alpha^*$ by first sorting the points $\{ \alpha^i \}$, finding $\alpha^i$ and $\alpha^{i+1}$ 
such that 
\begin{align*}
S(\alpha^i) \leq c \leq S(\alpha^{i+1})
\end{align*}
by binary search, and then interpolating $\alpha^*$ between the two points. 
Sorting takes $\mathcal{O}(d \, \log d)$.  
Computing $S(\alpha^i)$ at each step of the binary search is $\mathcal{O}(d )$, so $\mathcal{O}(d \, \log d)$ overall.  
Given $\alpha^i$ and $\alpha^{i+1}$, interpolating $\alpha^*$ is $\mathcal{O}(1)$, so the overall algorithm is $\mathcal{O}(d \, \log d)$ as claimed.
\end{proof}

The $k$-support norm is a special case of the box-norm, and as a direct corollary of Theorem \ref{thm:computation-of-theta-norm} and Theorem \ref{thm:theta-norm-d-log-d}, we recover \citep[][Proposition 2.1]{Argyriou2012}.

\begin{corollary}\label{cor:ksup-norm}
For $w \in \mathbb{R}^d$, and $k\leq d$, 
\begin{align*}
\Vert w \Vert_{(k)} &=  \Bigg( \sum_{j=1}^q ( \vert w\vert^{\downarrow}_j )^2  + 
\frac{1}{k-q} \Big(\sum_{j=q+1}^d \vert w^{\downarrow}_j \vert\Big)^2 \Bigg)^{\frac{1}{2}},
\end{align*}
where $q$ is the unique integer in $\{0, k-1\}$ satisfying
\begin{align}
\vert w_{q} \vert \geq \frac{1}{k-q} \sum_{j=q+1}^d \vert w_{j} \vert > \vert w_{q+1} \vert, \label{eqn:k-sup-optimal-q}
\end{align}
and we have defined $w_0=\infty$.  Furthermore, the norm can be computed in $\mathcal{O}(d \log d)$ time. 
\end{corollary}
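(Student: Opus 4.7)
The plan is to specialize Theorem \ref{thm:computation-of-theta-norm} and Theorem \ref{thm:theta-norm-d-log-d} to the parameter choice that makes the box-norm coincide with the $k$-support norm. Recall from the discussion after Proposition \ref{prop:dual-of-theta} that the $k$-support norm is the $\Theta$-norm with $\Theta = \{\theta : 0 < \theta_i \leq 1,\ \sum_i \theta_i \leq k\}$, which matches the box-norm parameter set with $b=1$, $c=k$ and $a \to 0^+$. The whole proof consists in tracking what happens to the three-block structure of the optimal $\theta$ in this limit.

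First, I would show that at an optimum we must take $\ell = 0$. Indeed, in formula \eqref{eqn:solution-of-abc-norm-2} the contribution $\tfrac{1}{a}\Vert w_L\Vert_2^2$ blows up as $a \to 0^+$ unless $w_L = 0$. Equivalently, a feasible $\theta$ for the $k$-support parameter set is forced to be strictly positive everywhere, so the lower active block never forms. The case of genuinely zero components of $w$ is handled exactly as in Remark \ref{rem:wi-not-zero}. With $\ell = 0$, $a = 0$, $b = 1$ and $c = k$, the quantity $p = c - qb - \ell a$ in Theorem \ref{thm:computation-of-theta-norm} becomes $p = k - q$.

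Second, substituting $\ell=0$, $b=1$, $p=k-q$ into \eqref{eqn:solution-of-abc-norm-2} gives
\begin{align*}
\Vert w\Vert_{(k)}^2 = \sum_{j=1}^{q}\bigl(|w|_j^{\downarrow}\bigr)^2 + \frac{1}{k-q}\Bigl(\sum_{j=q+1}^{d}|w|_j^{\downarrow}\Bigr)^{\!2},
\end{align*}
which is the claimed formula once we take square roots. The characterization of $q$ follows from the first inequality in \eqref{eqn:optimal-ell-and-q} with $b=1$:
\begin{align*}
|w_q| \;\geq\; \frac{1}{k-q}\sum_{j=q+1}^{d}|w_j| \;>\; |w_{q+1}|,
\end{align*}
which is exactly \eqref{eqn:k-sup-optimal-q}. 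The range $q \in \{0,\ldots,k-1\}$ is forced because $p = k-q$ must be strictly positive (otherwise the middle block would have zero total mass while carrying nonzero entries of $w$), and uniqueness of $q$ is inherited from uniqueness of the minimizer in Theorem \ref{thm:computation-of-theta-norm}.

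Third, the $\mathcal{O}(d \log d)$ complexity is immediate from Theorem \ref{thm:theta-norm-d-log-d}, since the binary search over the $2d$ breakpoints of $S(\alpha)$ specializes to a search over the $d$ breakpoints $\{1/|w_i|\}$ when $a=0$. The only delicate point, and the one real obstacle, is justifying the use of Theorem \ref{thm:computation-of-theta-norm} at $a = 0$, since that theorem is stated under $0 < a \leq b$. Two clean ways to discharge this are (i) a continuity argument: $\Vert w\Vert_{\mathrm{box}}$ is continuous in $a$ on $[0,b]$ by monotone convergence in \eqref{eqn:theta-primal}, and both sides of the identity in the corollary are continuous in $a$, so the $a \to 0^+$ limit is legitimate; or (ii) rerunning the Lagrangian argument of Theorem \ref{thm:computation-of-theta-norm} verbatim with the box constraint $0 < \theta_i \leq 1$, noting that the piecewise characterization \eqref{eqn:form-of-optimal-theta-abc} of the componentwise minimizer reduces to its top two branches, which automatically yields $\ell = 0$ and the stated formula.
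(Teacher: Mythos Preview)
Your proposal is correct and follows the same approach as the paper, which simply states that the corollary is a direct consequence of Theorem \ref{thm:computation-of-theta-norm} and Theorem \ref{thm:theta-norm-d-log-d} with $a \to 0$, $b=1$, $c=k$. You actually supply more detail than the paper does: the paper does not spell out why $\ell = 0$, why $q \le k-1$, or how to handle the fact that Theorem \ref{thm:computation-of-theta-norm} is stated for $a>0$, whereas you address all three points carefully (and your option (ii), rerunning the componentwise Lagrangian minimization with the open lower bound, is the cleanest justification).
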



\subsection{Proximity Operator}\label{sec:computation-of-prox}
Proximal gradient methods can be used to solve optimization problems of the form 
\begin{align*}
\min_{w} f(w) + \lambda g(w), ~~~ w \in \R^d,
\end{align*}
where $f$ is a convex loss function with Lipschitz continuous gradient, $\lambda >0$ is a regularization parameter, and $g$ is a convex function for which the proximity operator can be computed efficiently, see \citet{Nesterov2007,Combettes2010, Beck2009} and references therein. 
The proximity operator of $g$ with parameter $\rho>0$ is defined as 
\begin{align*}
\prox_{\rho g} (w) = \argmin \left\{
\frac{1}{2} \Vert x-w\Vert^2 + \rho g(x) : x \in {\mathbb R}^d \right\}.
\end{align*}
We now use the infimum formulation of the box-norm to derive the proximity operator of the squared norm.

\begin{theorem}\label{thm:prox-of-general-theta-norm}
The proximity operator of the square of the box-norm at point $w \in \mathbb{R}^d$ with parameter $\frac{\lambda}{2}$ is given by
$\prox_{\frac{\lambda}{2}\Vert \cdot \Vert_{{\rm box}}^2}(w) = (\frac{\theta_1 w_1}{\theta_1+\lambda},\dots, \frac{\theta_d w_d}{\theta_d+\lambda})$, where
\begin{align}
\theta_i &= 
\begin{cases}
    		b, & \text{if } \, \alpha \vert w_i \vert -\lambda  > b, \\
    		\alpha \vert w_i \vert - \lambda, & \text{if } \,  
    		b \geq  \alpha {\vert w_i \vert}- \lambda  \geq a ,\\
    		a, & \text{if } \, a > \alpha\vert w_i \vert  - \lambda,
  	\end{cases} \notag
\end{align}
and $\alpha$ is chosen such that $S(\alpha) := \sum_{i=1}^d \theta_i(\alpha) = c$. 
Furthermore, the computation of the proximity operator can be completed in $\mathcal{O}( d \log d)$ time. 
\end{theorem}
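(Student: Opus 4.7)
The plan is to reduce the proximity operator computation to a form essentially identical to that of Theorem \ref{thm:computation-of-theta-norm}, by lifting the infimal representation of the squared box-norm into the outer minimization defining the prox.

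By definition and the variational formula \eqref{eqn:theta-primal},
\begin{align*}
\prox_{\frac{\lambda}{2}\|\cdot\|_{\rm box}^2}(w) \;=\; \argmin_{x \in \R^d} \inf_{\theta \in \Theta} \left\{ \frac{1}{2}\|x-w\|_2^2 + \frac{\lambda}{2}\sum_{i=1}^d \frac{x_i^2}{\theta_i} \right\}.
\end{align*}
The first step is to interchange the outer $\min$ in $x$ with the $\inf$ in $\theta$. Since the joint objective is continuous, jointly convex in $(x,\theta)$ on $\R^d \times \R_{++}^d$, coercive in $x$, and $\Theta$ is a compact convex set in the open positive orthant (so on any neighbourhood of $\Theta$ the summand is smooth), a standard min--inf swap applies, exactly as in the proof of Proposition \ref{prop:theta-is-norm}. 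Having fixed $\theta$, the inner problem in $x$ is separable and strictly convex; setting the gradient to zero yields
\begin{align*}
x_i \;=\; \frac{\theta_i \, w_i}{\theta_i + \lambda},
\end{align*}
which already gives the claimed form of the proximal point as soon as we identify the minimising $\theta$.

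The second step is to substitute this $x_i(\theta)$ back into the objective. A short computation reduces it (up to an additive constant) to
\begin{align*}
\min_{\theta \in \Theta} \; \sum_{i=1}^d \frac{w_i^2}{\theta_i + \lambda}.
\end{align*}
Introducing the change of variable $\phi_i = \theta_i + \lambda$, the feasible set becomes $\{ \phi : a+\lambda \leq \phi_i \leq b+\lambda, \; \sum_i \phi_i \leq c + d\lambda \}$, which has exactly the same structure as the box-norm parameter set with shifted parameters $(a',b',c') = (a+\lambda,\, b+\lambda,\, c+d\lambda)$. We can therefore invoke the proof of Theorem \ref{thm:computation-of-theta-norm} verbatim: using the Lagrangian with multiplier $1/\alpha^2$ for the sum constraint, the problem separates and the optimal $\phi_i$ is the clipping of $\alpha|w_i|$ to $[a+\lambda, b+\lambda]$, with $\alpha$ chosen so that the sum constraint holds. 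Translating back, $\theta_i = \phi_i - \lambda$ is exactly the clipped expression $\alpha|w_i| - \lambda$ truncated to $[a,b]$, and $\alpha$ is determined by $S(\alpha) = \sum_i \theta_i(\alpha) = c$, matching the statement.

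For the complexity claim, the argument of Theorem \ref{thm:theta-norm-d-log-d} transfers directly. Each $\theta_i(\alpha)$ is a clipped affine function of $\alpha$ with two breakpoints at $\alpha = (a+\lambda)/|w_i|$ and $\alpha = (b+\lambda)/|w_i|$, so $S(\alpha)$ is a nondecreasing piecewise linear function with at most $2d$ critical points. Sorting these breakpoints costs $\mathcal{O}(d \log d)$, a binary search locates the two critical values between which $S$ crosses $c$ in $\mathcal{O}(d \log d)$ more evaluations, and a final linear interpolation recovers $\alpha$. The likely subtle point is the min--inf interchange, but since $\Theta$ is bounded away from the boundary of the positive orthant only in the interior sense, one should be careful to argue that any minimising sequence in $\theta$ has a limit in $\overline{\Theta}$ at which the objective remains finite (this follows because $\theta_i \geq a > 0$); given this, the standard saddle-point interchange from Proposition \ref{prop:theta-is-norm} applies without modification.
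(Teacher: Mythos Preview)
Your proof is correct and follows essentially the same route as the paper: interchange the two minimizations, solve the separable problem in $x$ to get $x_i = \theta_i w_i/(\theta_i+\lambda)$, reduce the remaining problem in $\theta$ to a shifted box-norm computation via $\phi_i = \theta_i + \lambda$, and then recycle Theorems \ref{thm:computation-of-theta-norm} and \ref{thm:theta-norm-d-log-d}. One small remark: the min--inf swap here is between two infima, which always commute, so no minimax argument (or appeal to Proposition \ref{prop:theta-is-norm}) is needed; your extra care about compactness and boundary behaviour is harmless but unnecessary.
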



\begin{proof}
Using the infimum formulation of the norm, we solve
\begin{align*}
\min_{x\in \mathbb{R}^d} \inf_{\theta \in \Theta} \, 
\left\{\frac{1}{2}\sum_{i=1}^d (x_i-w_i)^2 + 
\frac{\lambda}{2}\sum_{i=1}^d \frac{x_i^2}{\theta_i} \right\}.
\end{align*}
We can exchange the order of the optimization and solve for $x$ first.  
The problem is separable and a direct computation yields that $x_i = \frac{\theta_i w_i}{\theta_i + \lambda}$. 
Discarding a multiplicative factor of $\lambda/2$, and noting that the infimum is attained, the problem in $\theta$ becomes
\begin{align*}
\min_{\theta} \bigg\{ 
\sum_{i=1}^d \frac{ w_i^2}{\theta_i+\lambda} : {a \leq \theta_i \leq b, \sum_{i=1}^d \theta_i \leq c} \,\bigg\}.
\end{align*}
Note that this is the same as computing a box-norm in accordance with Proposition \ref{cor:box_as_k_sup_and_l2}.
Specifically, this is exactly like problem \eqref{eqn:norm-objective-supp} after the change of variable $\theta'_i = \theta_i + \lambda$. The remaining part of the proof then follows in a similar manner to the proof of Theorem \ref{thm:computation-of-theta-norm}.
\end{proof}

Algorithm \ref{alg:prox_01} illustrates the computation of the proximity operator for the squared box-norm in $\mathcal{O}(d \log d)$ time.  
This includes the $k$-support as a special case, where we let $a$ tend to zero, and set $b=1$ and $c=k$, which improves upon the complexity of the $\mathcal{O}(d(k + \log d))$ computation provided in \citet{Argyriou2012}, and   
we illustrate the improvement empirically in Table \ref{table:prox_comparison}.
We summarize this in the following corollary.
\begin{corollary}\label{cor:prox-of-ksup}
The proximity operator of the square of the $k$-support norm at point $w$ with parameter $\frac{\lambda}{2}$ is given by
$\prox_{\frac{\lambda}{2}\Vert \cdot \Vert_{\Theta}^2}(w) = x$, 
where $x_i =  \frac{\theta_i w_i}{\theta_i+\lambda}$, and
\begin{align*}
\theta_i &= 
\begin{cases}
    		1, & \text{if } \, \alpha \vert w_i \vert  > \lambda+1   , \\
    		\alpha \vert w_i \vert - \lambda, & \text{if } \,  
    		\lambda +1 \geq \alpha \vert w_i \vert  \geq \lambda\\
    		0, & \text{if } \, \lambda >  \alpha \vert w_i \vert  ,
  	\end{cases}
\end{align*}
where $\alpha$ is chosen such that $S(\alpha) = k$.
Furthermore, the proximity operator can be computed in $\mathcal{O}(d \log d)$ time. 
\end{corollary}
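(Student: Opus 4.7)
The plan is to derive Corollary \ref{cor:prox-of-ksup} as a direct specialization of Theorem \ref{thm:prox-of-general-theta-norm} to the parameter regime corresponding to the $k$-support norm. Recall from Section \ref{sec:box} that the $k$-support norm is the $\Theta$-norm with $\Theta = \{\theta \in \R^d_{++} : \theta_i \leq 1,\ \sum_i \theta_i \leq k\}$, which is the limiting case of the box-norm with $a \to 0$, $b=1$, $c=k$.

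First I would substitute $(a,b,c) = (0,1,k)$ directly into the formulas of Theorem \ref{thm:prox-of-general-theta-norm}. The three branches for $\theta_i$ become: $\theta_i = 1$ when $\alpha |w_i| - \lambda > 1$, i.e.\ $\alpha|w_i| > \lambda + 1$; $\theta_i = \alpha|w_i| - \lambda$ when $\lambda + 1 \geq \alpha |w_i| \geq \lambda$; and $\theta_i = 0$ when $\alpha|w_i| < \lambda$. The coupling equation $S(\alpha) = c$ becomes $S(\alpha) = k$, and the identity $x_i = \theta_i w_i/(\theta_i + \lambda)$ continues to make sense at $\theta_i = 0$, giving $x_i = 0$ on that branch.

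The main delicate point is that Theorem \ref{thm:prox-of-general-theta-norm} was proved under the assumption $0 < a \leq b$, whereas here $a=0$, so I cannot invoke it as a pure black box. I would address this in one of two equivalent ways: (i) repeat the proof of Theorem \ref{thm:prox-of-general-theta-norm} verbatim for the $k$-support parameter set, noting that the per-coordinate minimization $\min\{w_i^2/(\theta_i + \lambda) : 0 \leq \theta_i \leq 1\}$ subject to the coupling constraint is still well posed because the shifted variable $\theta_i + \lambda$ is bounded below by $\lambda > 0$, so no division by zero occurs; or (ii) observe that for fixed $w$ both sides of the claimed identity are continuous in $a$ on $[0,b]$ (using the uniqueness of the minimizer and strict convexity of the shifted objective), and take the limit $a \to 0^+$ in the box-norm formula. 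Approach (i) is cleaner and avoids any limiting argument.

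Finally, the $\mathcal{O}(d \log d)$ complexity claim is inherited directly from Theorem \ref{thm:theta-norm-d-log-d} and the analogous sorting-plus-binary-search scheme used in Theorem \ref{thm:prox-of-general-theta-norm}: the function $\alpha \mapsto S(\alpha)$ is again a nondecreasing piecewise linear function with at most $2d$ breakpoints (the values $\lambda/|w_i|$ and $(\lambda+1)/|w_i|$), and $\alpha$ can be located by sorting the breakpoints, binary searching to bracket the root of $S(\alpha) = k$, and interpolating linearly on the final piece. No step in this procedure is more expensive than the box-norm case, so the overall complexity bound carries over unchanged.
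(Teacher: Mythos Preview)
Your proposal is correct and follows essentially the same approach as the paper: the corollary is obtained by specializing Theorem~\ref{thm:prox-of-general-theta-norm} to the $k$-support regime $b=1$, $c=k$, $a\to 0$, with the $\mathcal{O}(d\log d)$ complexity inherited from the sorting-plus-binary-search scheme. If anything, you are more careful than the paper in flagging the $a=0$ boundary issue and explaining why the argument still goes through (the shifted variable $\theta_i+\lambda$ stays bounded away from zero), whereas the paper simply says ``let $a$ tend to zero'' without further comment.
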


\begin{algorithm}[t]
\caption{Computation of $x= \prox_{\frac{\lambda}{2}\|\cdot \|_{\rm box}^2}\left(w\right)$. \label{alg:prox_01}}
\begin{algorithmic} 
\REQUIRE parameters  $a$, $b$, $c$, $\lambda$.
\STATE \textbf{1.} Sort points $\left\{ \alpha^i \right\}_{i=1}^{2d} = \left\{ \frac{a+\lambda}{ \vert w_j \vert}, \frac{b+\lambda}{ \vert w_j \vert} \right\}_{j=1}^d$  such that $\alpha^i \leq \alpha^{i+1}$;
\STATE \textbf{2.} Identify points $\alpha^i$ and $\alpha^{i+1}$ such that $S(\alpha^i) \leq c$ and $S(\alpha^{i+1})\geq c$ by binary search;
\STATE \textbf{3.} Find $\alpha^*$ between $\alpha^i$ and $\alpha^{i+1}$ such that $S(\alpha^*)=c$ by linear interpolation;
\STATE \textbf{4.} Compute $\theta_i(\alpha^*)$ for $i=1\ldots, d$;
\STATE \textbf{5.} Return $x_i =\frac{\theta_i w_i}{\theta_i+\lambda}$ for $i=1\ldots, d$.
\end{algorithmic}
\end{algorithm}


\section{Spectral Norms}
\label{sec:matrix-norms}
We now turn our focus to the matrix norms.  
For this purpose, we recall that a norm $\Vert\cdot \Vert$ on $\mathbb{R}^{d \times T}$ is called orthogonally invariant if $\Vert W \Vert= \Vert U W V \Vert$, for any orthogonal matrices $U \in \mathbb{R}^{d \times d}$ and $V \in \mathbb{R}^{T \times T}$. 
A classical result by \citet{VonNeumann1937} establishes that a norm is orthogonally invariant if and only if it is of the form $\Vert W \Vert = g(\sigma(W))$, where $\sigma(W)$ is the vector formed by the singular values of $W$ in nonincreasing order, and $g$ is a symmetric gauge function, that is a norm which is invariant under permutations and sign changes of the vector components.  

\begin{lemma}\label{lem:ksup-is-OI}
If~$\Theta$ is a convex bounded subset of the strictly positive orthant in $\mathbb{R}^d$ which is invariant under permutations, then $\|\cdot\|_\Theta$ is a symmetric gauge function.  
\end{lemma}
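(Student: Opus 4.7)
The plan is to verify the three defining properties of a symmetric gauge function: being a norm, being invariant under sign changes, and being invariant under permutations of the components. The first property is already established by Proposition \ref{prop:theta-is-norm} and requires no further argument, so the work reduces to checking the two invariance properties, both of which follow almost directly from the structure of the defining expression \eqref{eqn:theta-primal}.

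For sign invariance, I would simply observe that in \eqref{eqn:theta-primal} each component $w_i$ enters only through $w_i^2$, so replacing $w$ by $(\epsilon_1 w_1, \ldots, \epsilon_d w_d)$ for any $\epsilon_i \in \{-1,+1\}$ leaves the objective, and hence the infimum over $\Theta$, unchanged.

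For permutation invariance, let $\pi$ be any permutation of $\NN{d}$ and set $w' = (w_{\pi(1)}, \ldots, w_{\pi(d)})$. I would start from
\begin{align*}
\|w'\|_\Theta^2 = \inf_{\theta \in \Theta} \sum_{i=1}^d \frac{w_{\pi(i)}^2}{\theta_i}
\end{align*}
and perform the change of variable $\eta_j = \theta_{\pi^{-1}(j)}$, so that $\sum_{i=1}^d w_{\pi(i)}^2/\theta_i = \sum_{j=1}^d w_j^2/\eta_j$. The key observation is that, by the assumed permutation invariance of $\Theta$, the map $\theta \mapsto \eta$ is a bijection of $\Theta$ onto itself, so the infimum over $\theta \in \Theta$ equals the infimum over $\eta \in \Theta$, giving $\|w'\|_\Theta = \|w\|_\Theta$.

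Combining these two invariance properties with the norm property from Proposition \ref{prop:theta-is-norm} yields that $\|\cdot\|_\Theta$ is a symmetric gauge function. There is no real obstacle here; the only conceptual point to state clearly is the bijection $\theta \leftrightarrow \eta$ on $\Theta$ induced by $\pi$, which is exactly where the permutation invariance hypothesis on $\Theta$ is used.
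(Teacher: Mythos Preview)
Your proof is correct and follows essentially the same approach as the paper: invoke Proposition~\ref{prop:theta-is-norm} for the norm property, observe that only $w_i^2$ appears for sign invariance, and use the permutation invariance of $\Theta$ for permutation invariance of the norm. The paper's version is terser (it simply asserts that permutation invariance of $\Theta$ yields the result without writing out the bijection), but the argument is the same.
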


\begin{proof}
Let $g(w) = \Vert w \Vert_{\Theta}$.  
We need to show that $g$ is a norm which is invariant under permutations and sign changes. 
By Proposition \ref{prop:theta-is-norm}, $g$ is a norm, so it remains to show that $g(w) = g(Pw)$ for every permutation matrix $P$, and $g(Jw) = g(w)$ for every diagonal matrix $J$ with entries $\pm 1$.  
The former property follows since the set $\Theta$ is permutation invariant.
The latter property is true because the objective function in \eqref{eqn:theta-primal} involves the squares of the components of $w$. 
\end{proof}

In particular, this readily applies to both the $k$-support norm and the box-norm. We can therefore extend both norms to orthogonally invariant norms, which we term the spectral $k$-support norm and the spectral box-norm respectively, and which we write (with some abuse of notation) as $\Vert W \Vert_{(k)} = \Vert \sigma(W) \Vert_{(k)}$ and $\Vert W \Vert_{\rm box} = \Vert \sigma(W) \Vert_{\rm box}$. 
We note that since the $k$-support norm subsumes the $\ell_1$ and $\ell_2$-norms for $k=1$ and $k=d$ respectively, the corresponding spectral $k$-support norms are equal to the trace and Frobenius norms respectively.

A number of properties of the vector norms translate in the natural manner to the matrix norms. 
We first characterize the unit ball of the spectral $k$-support norm.

\begin{proposition}\label{prop:spectral-unit-ball}
The unit ball of the spectral $k$-support norm is the convex hull of the set of matrices of rank at most k and Frobenius norm no greater than one.
\end{proposition}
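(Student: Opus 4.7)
The plan is to reduce the matrix statement to the corresponding vector statement for the $k$-support norm, using the singular value decomposition as the bridge. Recall from \citep{Argyriou2012} (and implicit in the discussion around equation \eqref{eqn:GLO}) that the unit ball of the vector $k$-support norm is exactly the convex hull of the set of vectors with at most $k$ nonzero components and $\ell_2$-norm at most one. I will combine this with two elementary observations: $\mathrm{rank}(W) \leq k$ if and only if $\sigma(W)$ has at most $k$ nonzero entries, and $\Vert W \Vert_{\mathrm F} = \Vert \sigma(W) \Vert_2$.

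Let $B = \{W \in \R^{d\times T} : \Vert W \Vert_{(k)} \leq 1\}$ and $C = \mathrm{conv}\{W \in \R^{d\times T} : \mathrm{rank}(W) \leq k,\ \Vert W \Vert_{\mathrm F} \leq 1\}$. The inclusion $C \subseteq B$ is immediate: any $W$ with $\mathrm{rank}(W) \leq k$ and $\Vert W \Vert_{\mathrm F} \leq 1$ has $\sigma(W)$ with cardinality at most $k$ and $\ell_2$-norm at most one, hence $\Vert W \Vert_{(k)} = \Vert \sigma(W) \Vert_{(k)} \leq 1$ by the vector characterization; since $B$ is convex as the unit ball of a norm, it contains $C$.

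For the reverse inclusion $B \subseteq C$, take $W \in B$ with SVD $W = U \,\mathrm{diag}(\sigma(W))\, V\trans$, where $U \in \R^{d \times r}$, $V \in \R^{T \times r}$ have orthonormal columns and $r = \min(d,T)$. Since $\sigma(W) \in \R_+^r$ satisfies $\Vert \sigma(W)\Vert_{(k)} \leq 1$, the vector $k$-support characterization yields $\sigma(W) = \sum_i \lambda_i v^{(i)}$ for some $\lambda \in \Delta^{m-1}$ and vectors $v^{(i)} \in \R^r$ with $|\mathrm{supp}(v^{(i)})| \leq k$ and $\Vert v^{(i)} \Vert_2 \leq 1$ (the signs of the $v^{(i)}$ can be absorbed into the columns of $U$, so without loss of generality $v^{(i)} \geq 0$). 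Setting $W_i = U \,\mathrm{diag}(v^{(i)})\, V\trans$, each $W_i$ has rank at most $k$ and Frobenius norm $\Vert v^{(i)} \Vert_2 \leq 1$, while $W = \sum_i \lambda_i W_i$ exhibits $W$ as a convex combination of elements of the generating set, so $W \in C$.

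The main obstacle is a minor bookkeeping issue in the reverse direction: the vector unit ball is generated by possibly \emph{signed} $k$-sparse vectors of $\ell_2$-norm at most one, while singular values are nonnegative. This is handled by flipping signs in the columns of $U$ as indicated above, so the rank and Frobenius bounds on each $W_i$ are preserved. Aside from this, the proof is a clean application of the fact that $\Vert \cdot \Vert_{(k)}$ is a symmetric gauge function (Lemma \ref{lem:ksup-is-OI}) together with the known vector-level unit ball description.
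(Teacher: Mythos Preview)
Your argument is correct and, in fact, cleaner than the paper's. The paper proceeds indirectly: it first shows that $A_k := \mathrm{co}\{W : \mathrm{rank}(W)\le k,\ \|W\|_F\le 1\}$ is the unit ball of \emph{some} norm via the Minkowski functional (checking that $A_k$ is bounded, convex, balanced and absorbing), then argues by orthogonal invariance and von Neumann's theorem that this norm must be a function of $\sigma(W)$, and finally identifies it with the vector $k$-support norm through the vector unit ball characterization $C_k$. Your proof bypasses the Minkowski functional entirely and establishes $B = C$ by direct mutual inclusion using the SVD lift of the vector convex decomposition. Both routes rest on the same vector-level fact (Corollary \ref{prop:vector-ksup-Ck-def}), but yours is shorter and more constructive: it actually exhibits the convex combination of rank-$\le k$ matrices realizing any $W$ with $\|W\|_{(k)}\le 1$.

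One small remark: your sign concern is a red herring. For $W_i = U\,\mathrm{diag}(v^{(i)})\,V\trans$ with $U,V$ having orthonormal columns, both $\mathrm{rank}(W_i) = |\mathrm{supp}(v^{(i)})| \le k$ and $\|W_i\|_F = \|v^{(i)}\|_2 \le 1$ hold regardless of the signs of the entries of $v^{(i)}$, so there is no need to absorb anything into $U$ (and indeed one cannot do so uniformly across all $i$ with a single $U$, which is the only place your wording is slightly loose). Simply drop the parenthetical and the proof stands as written.
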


\begin{proof}
For any $W \in \mathbb{R}^{d\times T}$, define the following sets
\begin{align*}
T_k &= \{W \in \mathbb{R}^{d\times T} : \textrm{rank}(W) \leq k, ~ \Vert W \Vert_F \leq 1\} ,\quad A_k = \textrm{co}(T_k),
\end{align*}
and consider the following functional
\begin{align}
\lambda(W) 
&= \inf \{ \lambda>0: W \in \lambda A_k \}, ~~~ W\in \mathbb{R}^{d \times T}   \label{eqn:k-sup-matrix-def-2}.
\end{align}
We will apply Lemma \ref{lem:minkowski-bounded} in the appendix to the set $A_k$.  
To do this, we need to show that the set $A_k$ is bounded, convex, symmetric and absorbing.
The first three are clearly satisfied. 
To see that it is absorbing, let $W \in \mathbb{R}^{d \times T}$ have singular value decomposition $U \Sigma V\trans$, and let $r=\min(d,T)$. 
If $W$ is zero then clearly $W\in A_k$, so assume it is non zero. 

{For $i \in \N_r$ let $S_i\in\R^{d\times T}$ have entry $(i,i)$ equal to $1$, and all remaining entries zero.
We then have
\begin{align*}
W&=  U \Sigma V\trans 
= U \left(\sum_{i=1}^r \sigma_i S_i  \right) V\trans 
= \left(\sum_{i=1}^d \sigma_i \right)   \sum_{i=1}^r \frac{\sigma_i}{\sum_{j=1}^r \sigma_j}(U S_i V \trans) 
=: \lambda \sum_{i=1}^r \beta_i Z_i.
\end{align*}
Now for each $i$, $\Vert Z_i \Vert_F = \Vert S_i \Vert_F = 1$, and $\textrm{rank}(Z_i) = \textrm{rank}(S_i) = 1$, so $Z_i \in T_k$ for any $k \geq 1$. 
Furthermore $\beta_i \in [0,1]$ and $\sum_{i=1}^r \beta_i=1$, that is $(\beta_1, \ldots, \beta_r) \in \Delta^{r-1}$, so $\frac{1}{\lambda}W$ is a convex combination of $Z_i$, in other words $W \in \lambda A_k$, and we have shown that $A_k$ is absorbing.
It follows that $A_k$ satisfies the hypotheses of Lemma \ref{lem:minkowski-bounded}, where we let $C=A_k$, hence $\lambda$ defines a norm on $\R^{d \times T}$ with unit ball equal to $A_k$.  }

Since the constraints in $T_k$ involve spectral functions, the sets $T_k$ and $A_k$ are invariant to left and right multiplication by orthogonal matrices. 
It follows that $\lambda$ is a spectral function, that is $\lambda(W)$ is defined in terms of the singular values of $W$.
By von Neumann's Theorem \citep{VonNeumann1937} the norm it defines is orthogonally invariant and we have
\begin{align}
\lambda(W) &= \inf \{ \lambda>0: W \in \lambda A_k \} 
= \inf \{ \lambda >0: \sigma(W) \in \lambda C_k \}  
 = \Vert \sigma(W) \Vert_{(k)} \notag
\end{align}
where we have used Corollary \ref{prop:vector-ksup-Ck-def}, which states that $C_k$ is the unit ball of the $k$-support norm. 
It follows that the norm defined by \eqref{eqn:k-sup-matrix-def-2} is the spectral $k$-support norm with unit ball given by $A_k$.
\end{proof}

Referring to the unit ball characterization of the $k$-support norm, we note that the restriction on the cardinality of 
the vectors which define the extreme points of the unit ball naturally extends to a restriction on the rank operator in the matrix setting.
Furthermore, as noted by \citet{Argyriou2012}, regularization using the $k$-support norm encourages vectors to be sparse, but less so than the $\ell_1$-norm. 
In matrix regularization problems, Proposition \ref{prop:spectral-unit-ball} suggests that the spectral $k$-support norm for $k>1$ encourages matrices to have low rank, but less so than the trace norm. 
This is intuitive as the extreme points of the unit ball have rank at most $k$.

As in the case of the vector norm (Proposition \ref{cor:box_as_k_sup_and_l2}), the spectral box-norm 
(or cluster norm -- see below) can be written as a perturbation of the spectral $k$-support norm with a quadratic term.

\begin{proposition}\label{prop:matrx_box_as_k_sup_and_l2}
Let $\Vert \cdot \Vert_{{\rm box}}$ be a matrix box-norm with parameters $a,b,c$ and let $k = \frac{c-da}{b-a}$. Then
\begin{align*}
\Vert W \Vert_{{\rm box}}^2 &= \min_{Z \in \R^{d\times T}}  \left\{ \frac{1}{a} \Vert W - Z \Vert_F^2 + \frac{1}{b-a} \Vert Z \Vert_{(k)}^2 \right\}.
\end{align*}
\end{proposition}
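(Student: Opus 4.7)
The plan is to reduce the matrix identity to the vector identity (Proposition \ref{cor:box_as_k_sup_and_l2}) by working with singular values, exploiting that both $\|\cdot\|_F$ and $\|\cdot\|_{(k)}$ are orthogonally invariant. The vector identity, applied to $\sigma(W)$, gives
\begin{align*}
\|W\|_{\rm box}^2 = \|\sigma(W)\|_{\rm box}^2 = \min_{z \in \R^r}\left\{\frac{1}{a}\|\sigma(W)-z\|_2^2 + \frac{1}{b-a}\|z\|_{(k)}^2\right\},
\end{align*}
where $r = \min(d,T)$. The goal is to show that the matrix minimization on the right hand side has the same value.

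For the lower bound ``$\geq$'' direction of the claim, I would invoke von Neumann's trace inequality, which gives $\langle W, Z \rangle \leq \langle \sigma(W), \sigma(Z)\rangle$ for any $W, Z \in \R^{d \times T}$. Expanding the Frobenius norm yields
\begin{align*}
\|W-Z\|_F^2 = \|W\|_F^2 - 2\langle W,Z\rangle + \|Z\|_F^2 \geq \|\sigma(W)\|_2^2 - 2\langle\sigma(W),\sigma(Z)\rangle + \|\sigma(Z)\|_2^2 = \|\sigma(W)-\sigma(Z)\|_2^2.
\end{align*}
Combined with $\|Z\|_{(k)} = \|\sigma(Z)\|_{(k)}$, this gives, for every $Z \in \R^{d \times T}$,
\begin{align*}
\frac{1}{a}\|W-Z\|_F^2 + \frac{1}{b-a}\|Z\|_{(k)}^2 \geq \frac{1}{a}\|\sigma(W)-\sigma(Z)\|_2^2 + \frac{1}{b-a}\|\sigma(Z)\|_{(k)}^2 \geq \|W\|_{\rm box}^2,
\end{align*}
where the final inequality follows by taking the vector minimum over $\sigma(Z)$ and applying Proposition \ref{cor:box_as_k_sup_and_l2}.

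For the reverse ``$\leq$'' direction, I would construct an explicit $Z^\star$ achieving the bound. Let $z^\star \in \R^r$ denote the vector minimizer for the problem with input $\sigma(W)$, and let $W = U\Sigma V\trans$ be a singular value decomposition of $W$. Because $\sigma(W)$ has nonnegative entries ordered nonincreasingly and both $\|\cdot\|_2^2$ and $\|\cdot\|_{(k)}^2$ are invariant under permutations and sign changes of coordinates, a standard rearrangement/symmetry argument shows $z^\star$ can be taken to have the same sign pattern and ordering as $\sigma(W)$, so that $\|\sigma(W) - z^\star\|_2^2 = \sum_i (\sigma_i(W) - z_i^\star)^2$ with matching indices. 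Define $Z^\star = U \, \mathrm{diag}(z^\star) \, V\trans$ (padding or truncating dimensions as appropriate). Then $\sigma(Z^\star) = z^\star$, so $\|Z^\star\|_{(k)} = \|z^\star\|_{(k)}$; and $W - Z^\star = U(\Sigma - \mathrm{diag}(z^\star))V\trans$, giving $\|W - Z^\star\|_F = \|\sigma(W) - z^\star\|_2$. Plugging into the objective exactly recovers $\|W\|_{\rm box}^2$, closing the argument.

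The main technical subtlety will be the symmetry/ordering argument guaranteeing that the vector minimizer can be chosen aligned with $\sigma(W)$ so that the Frobenius distance collapses to the $\ell_2$ distance on the singular value vectors; once this is in place, the proof is a clean application of von Neumann's trace inequality and Proposition \ref{cor:box_as_k_sup_and_l2}.
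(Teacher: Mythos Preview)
The proposal is correct and follows essentially the same route as the paper: apply von Neumann's trace inequality to bound the matrix objective below by the vector objective at $\sigma(Z)$, then invoke Proposition~\ref{cor:box_as_k_sup_and_l2}. Your explicit construction of $Z^\star = U\,\mathrm{diag}(z^\star)\,V\trans$ together with the rearrangement argument makes precise what the paper compresses into the single remark that ``the inequality is tight if $W$ and $Z$ have the same ordered set of singular vectors.''
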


\begin{proof}
By von Neumann's trace inequality (Theorem \ref{thm:vonneumann} in the appendix) we have
\begin{align*}
\frac{1}{a}  \| W-Z \Vert_F^2 + \frac{1}{b-a}  \Vert Z \|_{(k)}^2 
&= \frac{1}{a} \left( \Vert W \Vert_F^2 + \Vert Z \Vert_F^2 - 2 \langle W,Z \rangle\right) +  \frac{1}{b-a} \Vert Z \Vert_{(k)}^2 \\  
&\geq \frac{1}{a}  \left( \Vert \sigma(W) \Vert_2^2 + \Vert \sigma(Z) \Vert_2^2 - 2 \langle \sigma(W) , \sigma(Z)  \rangle\right) +  \frac{1}{b-a} \Vert \sigma(Z) \Vert_{(k)}^2\\  
&= \frac{1}{a}  \Vert \sigma(W)-\sigma(Z) \Vert_2^2 +  \frac{1}{b-a} \Vert \sigma(Z) \Vert_{(k)}^2 .  \\
\end{align*}

Furthermore the inequality is tight if $W$ and $Z$ have the same ordered set of singular vectors. Hence 
\begin{align*}
\min_{Z \in \R^{d\times T}} \left\{ \frac{1}{a}  \| W-Z \Vert_F^2 + \frac{1}{b-a}  \Vert Z \|_{(k)}^2\right\} 
&= \min_{z \in \R^d} \left\{\frac{1}{a}  \Vert \sigma(W)-z \Vert_2^2 +  \frac{1}{b-a} \Vert z \Vert_{(k)}^2 \right\}
= \|\sigma(W)\|_{{\rm box}}^2,
\end{align*}
where the last equality follows by Proposition \ref{cor:box_as_k_sup_and_l2}.
\end{proof}

In other words, this result shows that the (scaled) squared spectral box-norm can be seen as the Moreau envelope  
of a squared spectral $k$-support norm. 


\subsection{Proximity Operator for Orthogonally Invariant Norms} 
\label{sec:centered-prox}
The computational considerations outlined in Section \ref{sec:computation-of-norm-and-prox} can be naturally extended to the matrix setting by using von Neumann's trace inequality stated in the appendix. Here we comment on the computation of the proximity operator, which is important for our numerical experiments in Section \ref{sec:numerics}. 
The proximity operator of an orthogonally invariant norm $\Vert \cdot \Vert = g( \sigma(\cdot))$ is given by
\begin{align*}
\prox_{\Vert \cdot \Vert} (W)= U \text{diag}(\prox_{g}(\sigma(W))) V\trans, ~~~ W \in \mathbb{R}^{m \times T},
\end{align*}
where $U$ and $V$ are the matrices formed by the left and right singular vectors of $W$ \citep[see e.g.][Prop. 3.1]{Argyriou2011}. Using this result we can employ proximal gradient methods to solve matrix regularization problems using the square of the spectral $k$-support and box-norms.


\section{Multitask Learning}
\label{sec:MTL}
In this section, we address multitask learning, a framework in which spectral regularizers have successfully been used to learn a set of regression or binary classification tasks. 
Within this setting each column of the matrix $W$ represents one of the task weight vectors.  
By leveraging the commonalities between the tasks, learning can often be improved compared to solving each task in isolation \citep[see e.g.][and references therein]{Evgeniou2005, Argyriou2006, Argyriou2008, Jacob2009-CLUSTER,Cavallanti}.  
A natural assumption that arises in applications is that the tasks are clustered. 
The cluster norm was introduced by \citet{Jacob2009-CLUSTER} as a means to favour this structure. 
We show that this norm is equivalent to the spectral box-norm and then address the issue of centering the norm.

\subsection{Clustering the Tasks}
A general approach to multitask learning is based on the regularization problem
\begin{align*}
\min_{W \in \mathbb{R}^{d \times T}} \calL(W) + \lambda \Omega(W)
\end{align*}
where $W = [w_1, \ldots, w_T]$ is the $d \times T$ matrix whose columns represent the task vectors, $\Omega$ is a regularizer which incorporates prior knowledge of sharing between tasks and $\calL$ is the compound empirical error. 
That is, $\calL(W) = \frac{1}{Tn} \sum_{t=1}^T \sum_{i =1}^{n} \ell(y_{i}^t, \lb w_t, x_i^t \rb)$
where $(x_1^t,y_1^t),\dots,(x_n^t,y_n^t) \in  \mathbb{R}^d \times \mathbb{R}$ are the training points for task $t$ ({for simplicity we assume that each task has the same number $n$ of training points) and $\ell$ is a convex loss function.
  
\citet{Jacob2009-CLUSTER} consider a composite penalty which encourages the tasks to be clustered into $Q<T$ groups. To introduce their setting we require some more notation.
Let $\mathcal{J}_q \subseteq \NN{T}$ be the set of tasks in cluster $q \in \NN{Q}$ and let $T_q = \vert \mathcal{J}_q \vert \geq 0$ be the number of tasks in cluster $q$, so that $\sum_{q=1}^Q T_q = T$.  
The clustering uniquely defines the $T \times T$ normalized connectivity matrix $M$ where $M_{st} =\frac{1}{T_q}$ if $s,t \in {\cal J}_q$ and $M_{st}=0$ otherwise. 
We let $\bar{w} = \frac{1}{T}\sum_{t=1}^{T} w_t$ be the mean weight vector, $\bar{w}_q = \frac{1}{T_q}\sum_{t\in \mathcal{J}_q} w_t$ be the mean weight vector of tasks in cluster $q$ and define the $T \times T$ orthogonal projection matrices $U = 1 1\trans / T$ and $\Pi = I-U$. 
Note that $W\Pi = [w_1 - {\bar w},\dots,w_T - {\bar w}]$. 
Finally, let $r=\min(d,T)$.

Using this notation, we introduce the three seminorms
\begin{align}
\nonumber
\Omega_{\rm m}(W) &= T \Vert \bar{w} \Vert^2 = \tr \left( WUW\trans \nonumber \right)\\
\Omega_{\rm b}(W) &= \sum_{q=1}^Q T_q \Vert \bar{w}_q - \bar{w} \Vert^2 = \tr  \left(W(M-U)W\trans\right) \nonumber \\
\Omega_{\rm w}(W) &= \sum_{q=1}^Q \sum_{t \in \mathcal{J}_q} \Vert w_t - \bar{w}_q \Vert^2 
=\tr  \left(W(I-M)W\trans \right),\nonumber
\end{align}
each of which captures a different aspect of the clustering: $\Omega_{\rm m}$ penalizes the total {\em mean} of the weight vectors, $\Omega_{\rm b}$ measures how close to each other the clusters are ({\em between} cluster variance), and $\Omega_{\rm w}$ measures the compactness of the clusters ({\em within} cluster variance). 
Scaling the three penalties by positive parameters $\epsilon_{\rm m}$, $\epsilon_{\rm b}$ and $\epsilon_{\rm w}$ respectively, 
we obtain the composite penalty $\epsilon_{\rm m} \Omega_{\rm m} + \epsilon_{\rm b} \Omega_{\rm b} + \epsilon_{\rm w} \Omega_{\rm w}$. 
The first term $\Omega_{\rm m}$ does not depend on the connectivity matrix $M$, and it can be included in the error term. The remaining two terms depend on $M$, which in general may not be known {\em a-priori}. \citet{Jacob2009-CLUSTER} propose to learn the clustering by minimizing with respect to matrix $M$, under the assumption that $\epsilon_{\rm w} \geq \epsilon_{\rm b}$; {this assumption is reasonable as we care more about enforcing a small variance of parameters within the clusters than between them.} 
Using the elementary properties that $M-U = M \Pi= \Pi M \Pi$ and $I-M = (I-M) \Pi = \Pi (I-M) \Pi$ and letting $\widetilde{M} = M \Pi$, we rewrite 
\begin{align}
\epsilon_{\rm b} \Omega_{\rm b}(W) + \epsilon_{\rm w} \Omega_{\rm w}(W) 
= \tr \left( {W} \Pi \big( \epsilon_{\rm b} \widetilde{M} + \epsilon_{\rm w}(I-\widetilde{M}) \big) \Pi {W}\trans \right)
= \tr \left( {W} \Pi {\Sigma}^{-1} \Pi {W}\trans\right) 
\label{eq:gggg}
\end{align}
where we have defined $\Sigma^{-1} = \epsilon_{\rm b} \widetilde{M} + \epsilon_{\rm w}(I-\widetilde{M})$. 
Since $\widetilde{M}$ is an orthogonal projection, the matrix $\Sigma$ is well defined and we have
\begin{align}
\Sigma = (\epsilon_{\rm b}^{-1} - \epsilon_{\rm w}^{-1}){\widetilde M} + \epsilon_{\rm w}^{-1}I. 
\label{eq:pppp}
\end{align}
The 
expression in the right hand side of equation \eqref{eq:gggg} is jointly convex in $W$ and $\Sigma$ \citep[see e.g.][]{Boyd2004}, however the set of matrices $\Sigma$ defined by equation \eqref{eq:pppp}, generated by letting ${\widetilde M} = M\Pi$ vary, is nonconvex, because $M$ takes values on a nonconvex set. 
To address this, \citet{Jacob2009-CLUSTER} relax the constraint on matrix ${\widetilde M}$ to the set $\{0 \preceq {\widetilde M} \preceq I,~\tr {\widetilde M} \leq Q-1\}$. 
This in turn induces the convex constraint set for $\Sigma$
\begin{align*}
{\cal S}_{Q,T} = \left\{\Sigma \in \R^{T \times T} :  \Sigma=\Sigma\trans,~\epsilon_{\rm w}^{-1} I \preceq \Sigma \preceq  \epsilon_{\rm b}^{-1} I, ~\tr \, \Sigma \leq  (\epsilon_{\rm b}^{-1} - \epsilon_{\rm w}^{-1}) (Q-1) + \epsilon_{\rm w}^{-1} T \right\}.
\end{align*}
In summary \citet{Jacob2009-CLUSTER} arrive at the optimization problem
\begin{align}
\min_{W \in \mathbb{R}^{d \times T}} {\bar \calL}(W) + \lambda \Vert W \Pi \Vert_{\rm cl}^2 \label{eqn:cluster-norm-problem-2}
\end{align}
where ${\bar \calL}(W) = \calL(W) + \lambda \epsilon_{\rm m} \, \tr ( W U W\trans)$ and $\|\cdot\|_{\rm cl}$ is the {\em cluster norm} defined by the equation
\begin{align}
\Vert W \Vert_{\rm cl} = \sqrt{\inf_{\Sigma \in  {\cal S}_{Q,T}}  \tr (\Sigma^{-1} W\trans W )}
\label{eq:CN}.
\end{align}


\subsection{The Cluster Norm and the Spectral Box-Norm}\label{sec:cluster-and-theta}
We now discuss the cluster norm in the context of the spectral box-norm. 
\citet{Jacob2009-CLUSTER} state that the cluster norm of $W$ equals what we in this paper have termed the spectral box-norm, with parameters $a = {\epsilon_{\rm w}}^{-1}$, $b = \epsilon_{\rm b}^{-1}$ and 
$c = (T-Q+1) \epsilon_{\rm w}^{-1} + (Q-1) \epsilon_{\rm b}^{-1}$. 
Here we prove this fact. 
Denote by $\lambda_i(\cdot)$ the eigenvalues of a matrix which we write in non increasing order $\lambda_1(\cdot) \geq \lambda_2(\cdot) \geq \ldots \geq \lambda_d(\cdot)$. 
Note that if $\theta_i$ are the eigenvalues of $\Sigma$ then $\theta_i = \lambda_{d-i+1}(\Sigma^{-1})$. 
We have that 
\begin{align*}
 \tr (\Sigma^{-1} W\trans W) 
 \geq \sum_{i=1}^{r} \lambda_{d-i+1}(\Sigma^{-1}) \lambda_i(W\trans W) = \sum_{i=1}^{r} 
 \frac{\sigma_i^2(W)}{\theta_i} 
\end{align*}
where the inequality follows by Lemma \ref{lem:olkin} (stated in the appendix) for $A=\Sigma^{-1}$ and $B=W\trans W \succeq 0$. 
Since this inequality is attained whenever $\Sigma = U {\rm diag}(\theta)U$, where $U$ are the eigenvectors of $W\trans W$, we see that the cluster norm coincides with the spectral box-norm, that is $\Vert W \Vert_{\rm cl} = \|\sigma(W)\|_\Theta$ for $\Theta=\{\theta\in [a,b]^r: \sum_{i=1}^r \theta_i \leq c\}$. 
In light of our observations in Section \ref{sec:matrix-norms}, we also see that the spectral $k$-support norm is a special case of the cluster norm, where we let $a$ tend to zero, $b=1$ and $c=k$, where $k=Q-1$. 
More importantly the cluster norm is a perturbation of the spectral $k$-support norm. 
Moreover, the methods to compute the norm and its proximity operator (cf. Theorems \ref{thm:computation-of-theta-norm} and \ref{thm:prox-of-general-theta-norm}) can directly be applied to the cluster norm using von Neumann's trace inequality (see Theorem \ref{thm:vonneumann} in the appendix).

\subsection{Optimization with Centered Spectral $\Theta$-Norms}\label{sec:centered-norms}
Centering a matrix has been shown to improve learning in other multitask learning problems, for example  \citet{Evgeniou2005} reported improved results using the trace norm. 
It is therefore valuable to address the problem of how to solve a regularization problem of the type
\begin{align}
\min_{W \in \mathbb{R}^{d \times T}} {\bar \calL}(W) + \lambda \Vert W \Pi \Vert_{\Theta}^2 \label{eqn:THETA}
\end{align}
in which the regularizer is applied to the matrix $W \Pi = [w_1- {\bar w},\dots,w_T - {\bar w}]$. 
To this end, 
let $\Theta$ be a bounded and convex subset of $\mathbb{R}_{++}^{r}$ which is invariant under permutation. 
We have already noted that the function defined, for every $W \in \R^{d \times T}$, as
\begin{align*}
\|W\|_\Theta : = \|\sigma(W)\|_\Theta,
\end{align*}
is an orthogonally invariant norm. In particular, problem \eqref{eqn:THETA} includes regularization with the centered cluster norm outlined above. 

Note that right multiplication by the centering operator $\Pi$, is invariant to a translation of the columns of the matrix by a fixed vector, that is, for every $z \in \R^d$, we have $[w_1+z,\dots,w_T+z]\Pi  = W \Pi $. The quadratic term ${\epsilon_{\rm m}} \tr ( W U W\trans)$, which is included in the error, implements square norm regularization of the mean of the tasks, which can help to prevent overfitting. However, in the remainder of this section this term plays no role in the analysis, which equally applies to the case that $\epsilon_{\rm m} =0$.

In order to solve the problem \eqref{eqn:THETA} with a centered regularizer the following lemma is key.
\begin{lemma}
\label{lem:infimum-of-cluster}
Let $r = \min(d,T)$ and let $\Theta$ be a bounded and convex subset of $\mathbb{R}_{++}^{r}$ which is invariant under permutation. For every $W =[w_1,\dots,w_T] \in {\mathbb R}^{d \times T}$, it holds that
\begin{align*}
\|W\Pi\|_{\Theta} = \min_{z \in {\mathbb R}^d} \|[w_1-z,\ldots,w_T-z]\|_{\Theta}.
\end{align*}
\end{lemma}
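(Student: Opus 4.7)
The plan is to show separately that the two sides of the claimed identity dominate each other, with equality achieved at $z=\bar{w} = \tfrac{1}{T}\sum_t w_t$. The candidate $z=\bar{w}$ gives a direct calculation: since $\bar{w}1^\top = \tfrac{1}{T} W 1 1^\top$, we have $[w_1-\bar{w},\ldots,w_T-\bar{w}] = W - \bar{w}1^\top = W\left(I - \tfrac{1}{T}11^\top\right) = W\Pi$. Hence $\|W\Pi\|_\Theta$ is attained in the minimization, giving $\min_z \|[w_1-z,\ldots,w_T-z]\|_\Theta \leq \|W\Pi\|_\Theta$.

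For the reverse inequality, I would rely on two facts: first, the projection identity $1^\top \Pi = 1^\top - \tfrac{1}{T}(1^\top 1)1^\top = 0$, which implies that for any $z \in \R^d$,
\begin{align*}
(W - z 1^\top)\Pi = W\Pi - z(1^\top \Pi) = W\Pi;
\end{align*}
second, the fact that the $\Theta$-norm is orthogonally invariant (Lemma \ref{lem:ksup-is-OI}), so by von Neumann's characterization we can write $\|\cdot\|_\Theta = g(\sigma(\cdot))$ with $g$ a symmetric gauge function on $\R^r$. Setting $A = W - z 1^\top$ and using the standard singular value inequality $\sigma_i(A\Pi) \leq \|\Pi\|_{\rm op}\, \sigma_i(A) = \sigma_i(A)$ (valid since $\Pi$ is an orthogonal projection, hence has operator norm at most one), combined with the monotonicity of symmetric gauge functions on the nonnegative orthant (componentwise dominance implies dominance of $g$), one obtains
\begin{align*}
\|W\Pi\|_\Theta = \|A\Pi\|_\Theta = g(\sigma(A\Pi)) \leq g(\sigma(A)) = \|W - z 1^\top\|_\Theta.
\end{align*}
Taking the minimum over $z$ yields $\|W\Pi\|_\Theta \leq \min_z \|[w_1-z,\ldots,w_T-z]\|_\Theta$, and combining with the first inequality concludes the proof.

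The only nontrivial step is the submultiplicativity-type bound $\|A\Pi\|_\Theta \leq \|A\|_\Theta$ for orthogonally invariant norms, i.e.\ that multiplication by a contraction cannot increase a unitarily invariant norm. This is a standard consequence of Ky Fan's dominance theorem, but I would spell it out by using $\sigma_i(A\Pi) \leq \sigma_i(A)$ together with the elementary observation that a symmetric gauge function $g$ satisfies $g(x) \leq g(y)$ whenever $0 \leq x_i \leq y_i$ for all $i$ (write $x$ as a convex combination of sign-flipped copies of $y$ restricted coordinatewise, using the sign and permutation invariance built into the definition of a symmetric gauge function).
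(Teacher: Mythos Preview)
Your proof is correct and takes a genuinely different route from the paper's. The paper exploits the variational representation $\|W\|_\Theta^2 = \inf_{D \in \Theta^{(d)}} \tr(D^{-1} WW^\top)$ (obtained via Lemma~\ref{lem:olkin}), rewrites $\|W\Pi\|_\Theta^2$ as $\inf_D \sum_{t}(w_t-\bar w)^\top D^{-1}(w_t-\bar w)$, notes that for each fixed $D$ the quadratic $z\mapsto \sum_t (w_t-z)^\top D^{-1}(w_t-z)$ is minimized at $z=\bar w$, and then interchanges the infimum over $D$ with the minimum over $z$. Your argument bypasses the infimum representation entirely: you identify the candidate minimizer $z=\bar w$ directly, and for the reverse inequality you use the invariance $(W-z1^\top)\Pi = W\Pi$ together with the contraction property $\|A\Pi\|_\Theta\le\|A\|_\Theta$, justified via $\sigma_i(A\Pi)\le\sigma_i(A)$ and monotonicity of symmetric gauge functions on the nonnegative orthant. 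The paper's route keeps everything inside the $\Theta$-machinery developed earlier and makes the interchange of optimizations explicit; yours is more self-contained and in fact works for any orthogonally invariant norm, not only those arising as $\Theta$-norms.
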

\begin{proof}
Given the set $\Theta$ we define the set $\Theta^{(T)} = \{\Sigma \in {\bf S}_{++}^T, ~\lambda(\Sigma) \in \Theta\}$ and $\Theta^{(d)} = \{D \in  {\bf S}_{++}^d: \lambda(D) \in \Theta\}$. 
It follows from Lemma \ref{lem:olkin} that

\begin{align*}
\|W\|_\Theta^2 \equiv \|\sigma(W)\|_\Theta^2 
= \inf_{\Sigma \in \Theta^{(T)}} \tr \left( \Sigma^{-1} W\trans W \right)
= \inf_{D \in \Theta^{(d)}} \tr \left( D^{-1} WW\trans \right). 
\end{align*}

Using the second identity and recalling that $W\Pi = [w_1-\bar{w}, \ldots, w_T-\bar{w}]$,  we have that
\begin{align*}
\Vert W \Pi \Vert_{\Theta}^2
&= \inf_{D \in \Theta^{(d)}} \tr ( ( W \Pi)\trans D^{-1} (W \Pi )) \\
&= \inf_{D \in \Theta^{(d)}} \sum_{t=1}^T (w_t- \bar{w})\trans D^{-1} (w_t - \bar{w}) = \inf_{D \in \Theta^{(d)}} \min_{z \in \mathbb{R}^d} \sum_{t=1}^T (w_t- z)\trans D^{-1} (w_t - z)
\end{align*}
where in the last step we used the fact that the quadratic form $\sum_{t=1}^T (w_t- z)\trans D^{-1} (w_t- z)$ is minimized at $z=\bar{w}$. 
The result now follows by interchanging the infimum and the minimum in the last expression and using the definition of the $\Theta$-norm.
\end{proof}

Using this lemma, we rewrite problem \eqref{eqn:cluster-norm-problem-2} as
\begin{align*}
\min_{W \in \mathbb{R}^{d \times T}} \min_{z \in \mathbb{R}^d}{\bar {\cal L}}(W) + \lambda \|[w_1- z,\dots,w_T-z]\|_{\Theta}.
\end{align*}
Letting $\ww_t = w_t -z$, and $\WW = [v_1, \ldots, v_T]$, we obtain the equivalent problem 
\begin{align}
\min_{(\WW,z) \in \mathbb{R}^{d \times T} \times  \mathbb{R}^d}  
{\bar {\calL}}(V+ z 1\trans) + \lambda \Vert \WW \Vert_\Theta^2.\label{eqn:cluster-norm-optimization}
\end{align}
This problem is of the form $f(V,z) + \lambda g(V,z)$, where $g(V,z) =  \|V\|_\Theta$. 
Using this formulation, we can directly apply the proximal gradient method using the proximity operator computation for the vector norm, since $\prox_g(V_0,z_0) = (\prox_{\lambda \|\cdot\|_\Theta}(V_0),z_0)$. 
This observation establishes that, whenever the proximity operator of the spectral $\Theta$-norm is available, we can use proximal gradient methods with minimal additional effort to perform optimization with the corresponding centered spectral $\Theta$-norm. 
For example, this is the case with the trace norm, the spectral $k$-support norm and the spectral box-norm or cluster norm.

\section{Numerical Experiments}
\label{sec:numerics}
\citet{Argyriou2012} demonstrated the good estimation properties of the vector $k$-support norm compared to the Lasso and the {elastic net}.  
In this section, we investigate the matrix norms and report on their statistical performance in matrix completion and multitask learning experiments on simulated as well as benchmark real datasets. 
We also offer an interpretation of the role of the parameters in the box-norm and we empirically verify the improved performance of the proximity operator computation of Algorithm \ref{alg:prox_01} (see Table \ref{table:prox_comparison}).

We compare the spectral $k$-support norm (\emph{k-sup}) and the spectral box-norm (\emph{box}) to the baseline trace norm (\emph{trace}) \citep[see e.g.][]{Argyriou2006,Mazumder2010,Srebro2005,Toh2011}, matrix elastic net (\emph{el.net}) \citep{Li2012} and, in the case of multitask learning, the Frobenius norm (\emph{fr}), which we recall is equivalent to the spectral $k$-support norm when $k=d$.  
As we highlighted in Section \ref{sec:centered-norms}, centering a matrix can lead to improvements in learning.  
For datasets which we expect to exhibit clustering we therefore also apply centered versions of the norms, \emph{c-fr, c-trace, c-el.net, c-k-sup, c-box}.\footnote{As we described in Section \ref{sec:cluster-and-theta}, the cluster norm regularization problem from \citet{Jacob2009-CLUSTER} is equivalent to regularization using the box-norm with a squared $\ell_2$ norm of the mean column vector included in the loss function.  
The centering operator is invariant to constant shifts of the columns, which allows the matrix to have unbounded Frobenius norm when using a centered regularizer.  
The additional quadratic term regulates this effect and can prevent against overfitting. 
We tested the effect of the quadratic term on the centered norms, however the impact on performance was only incremental, and it introduced a further parameter requiring validation.  
On the real datasets in particular, the impact was not significant compared to simple centering, so we do not report on the method below. }

We report test error and standard deviation, matrix rank ($r$) and optimal parameter values for $k$ and $a$, which are determined by validation.
We used a $t$-test to determine the statistical significance of the difference in performance between the regularizers, at a level of $p<0.001$. 

To solve the optimization problem we used an accelerated proximal gradient method (FISTA), \citep[see e.g.][]{Beck2009,Nesterov2007}, using the percentage change in the objective as convergence criterion, with a tolerance of $10^{-5}$ ($10^{-3}$ for real matrix completion experiments).  

As is typical with spectral regularizers such as the trace norm, we found that the spectrum of the learned matrix exhibited a rapid decay to zero.  
In order to explicitly impose a low rank on the final matrix, we included a thresholding step at the end of the optimization.  
For the matrix completion experiments, the thresholding level was chosen by validation. 
Matlab code used in the experiments is available at \url{http://www0.cs.ucl.ac.uk/staff/M.Pontil/software.html}.

\begin{table}[th]
\caption{Comparison of proximity operator algorithms for the $k$-support norm (time in seconds), $k = 0.05 d$. Algorithm 1 is our method, Algorithm 2 is the method in \cite{Argyriou2012}.}
\label{table:prox_comparison}
\begin{center}
\begin{small}
\begin{tabular}{crrrrrr}
\toprule 
$d$ & 1,000 & 2,000 & 4,000 & 8,000 & 16,000 & 32,000  \\ 
\midrule 
Algorithm 1 & 0.0011 & 0.0016 & 0.0026 & 0.0046 & 0.0101 & 0.0181  \\ 
Algorithm 2 & 0.0443 & 0.1567 & 0.5907 & 2.3065 & 9.0080 & 35.6199  \\ 
\bottomrule
\end{tabular}
\end{small}
\end{center}
\end{table}

\subsection{Simulated Data}

{\bf Matrix Completion.} We applied the norms to matrix completion on noisy observations of low rank matrices. 
Each $d \times d$ matrix was generated as $W=A B\trans+E$, where $A,B \in \mathbb{R}^{d\times r}$, $r \ll d$, and the entries of $A$, $B$ and $E$ were set to be i.i.d. standard Gaussian.  
We set $d=100$, $r\in \{5,10\}$ and we sampled uniformly a percentage $\rho \in \{10\%, 10\%, 20\%, 30\%\}$ of the entries for training, and used a fixed 10\% for validation. 
Following \citet{Mazumder2010} the error was measured as 
\begin{align*}
\textrm{error}=\frac{\| w_{\text{true}} - w_{\text{predicted}}\|^2}{\|w_{\text{true}}\|^2},
\end{align*}
and averaged over 100 trials.
The results are summarized in Table \ref{table:mc-synthetic}.
With thresholding, all methods recovered the rank of the true noiseless matrix.  
The spectral box-norm generated the lowest test errors in all regimes, with the spectral $k$-support norm a close second, and both were significantly better than trace and elastic net.

\begin{table}[th]
\caption{Matrix completion on simulated datasets, without (left) and with (right) thresholding. }
\label{table:mc-synthetic}
\centering
\setlength\tabcolsep{5pt}
\begin{minipage}[ht]{0.95\linewidth}
\centering
\begin{small}
\begin{tabular}{llrrrrrrrr}
\toprule
   dataset & norm  & test error & $r$ & $k$ & $a$  & test error   & $r$ & $k$ & $a$  \\
\midrule
rank 5       & trace  & 0.8184 (0.03)  & 20    & -   &-      & 0.7799 (0.04)             & 5     & -   &- \\ 
$\rho$=10\%  & el.net  & 0.8164 (0.03)  & 20    &  -  & -     & 0.7794 (0.04)             & 5     &  -  & -\\
             & k-sup  & 0.8036 (0.03)  & 16    & 3.6 & -     & 0.7728 (0.04)             & 5     & 4.23 & -\\
             & box & 0.7805 (0.03)  & 87    & 2.9 & 1.7e-2  & 0.7649 (0.04)       & 5     & 3.63 & 8.1e-3  \\ 
\midrule
rank 5       & trace   & 0.5764 (0.04) & 22   &-        & -    & 0.5209 (0.04)         & 5   &-        & -     \\  
$\rho$=15\%  & el.net   & 0.5744 (0.04) & 21     &-      & -    & 0.5203 (0.04)             & 5     &-      & -     \\ 
             & k-sup   & 0.5659 (0.03) & 18     & 3.3    &-    & 0.5099 (0.04)           & 5     & 3.25    &-\\ 
             & box  & 0.5525 (0.04)   & 100   & 1.3 & 9e3   & 0.5089 (0.04)       & 5   & 3.36 & 2.7e-3\\ 
\midrule
rank 5       & trace  & 0.4085 (0.03)  & 23    &  -  & -     & 0.3449 (0.02)             & 5     &  -  & - \\
$\rho$=20\%  & el.net  & 0.4081 (0.03)  & 23    &  -  & -     & 0.3445 (0.02)             & 5     &  -  & - \\ 
             & k-sup  & 0.4031 (0.03)  & 21    & 3.1 & -     & 0.3381 (0.02)             & 5     & 2.97 & - \\ 
             & box & 0.3898 (0.03)  & 100   & 1.3 & 9e-3  & 0.3380 (0.02)         & 5     & 3.28 & 1.9e-3    \\ 
\midrule
\midrule
rank 10      & trace  & 0.6356 (0.03)  & 27    &-  & -       & 0.6084 (0.03)             & 10    &-  & -  \\ 
$\rho$=20\%  & el.net  & 0.6359 (0.03)  & 27    & - & -       & 0.6074 (0.03)             & 10    & - & -  \\  
             & k-sup  & 0.6284 (0.03)  & 24    & 4.4 &-      & 0.6000 (0.03)             & 10    & 5.02 &- \\ 
             & box & 0.6243 (0.03)  & 89    & 1.8  & 9e-3  & 0.6000 (0.03)       & 10    & 5.22  & 1.9e-3    \\ 
\midrule
rank 10      & trace  & 0.3642 (0.02)  & 36    &  -  & -     & 0.3086 (0.02)             & 10    &  -  & - \\ 
$\rho$=30\%  & el.net  & 0.3638 (0.02)  & 36    &  -  & -     & 0.3082 (0.02)             & 10    &  -  & - \\ 
             & k-sup  & 0.3579 (0.02)  & 33    & 5.0 &-      & 0.3025 (0.02)             & 10    & 5.13 &-  \\ 
             & box & 0.3486 (0.02)  & 100   & 2.5 & 9e-3  & 0.3025 (0.02)         & 10    & 5.16 & 3e-4      \\ 
\bottomrule
\end{tabular}
\end{small}
\end{minipage}
\end{table}

\vspace{0.1truecm}
\noindent
{\bf Role of Parameters. }In the same setting we investigated the role of the parameters in the box-norm.  
As previously discussed, parameter $b$ can be set to 1 without loss of generality. 
Figure \ref{fig:a-SNR} shows the optimal value of parameter $a$ chosen by validation for varying signal to noise ratios (SNR), keeping $k$ fixed.  
We see that for greater noise levels (smaller SNR), the optimal value for $a$ increases, which further suggests that the noise is filtered out by higher values of the parameter. 
Figure \ref{fig:c-rank} shows the optimal value of $k$ chosen by validation for matrices with increasing rank, keeping $a$ fixed, and using the relation $k=\frac{c-da}{b-a}$.  
We note that as the rank of the matrix increases, the optimal $k$ value increases, which is expected since it is an upper bound on the sum of the singular values. 

\begin{figure}
\begin{minipage}[b]{0.48\linewidth}
\centering
\includegraphics[width=0.99\linewidth]{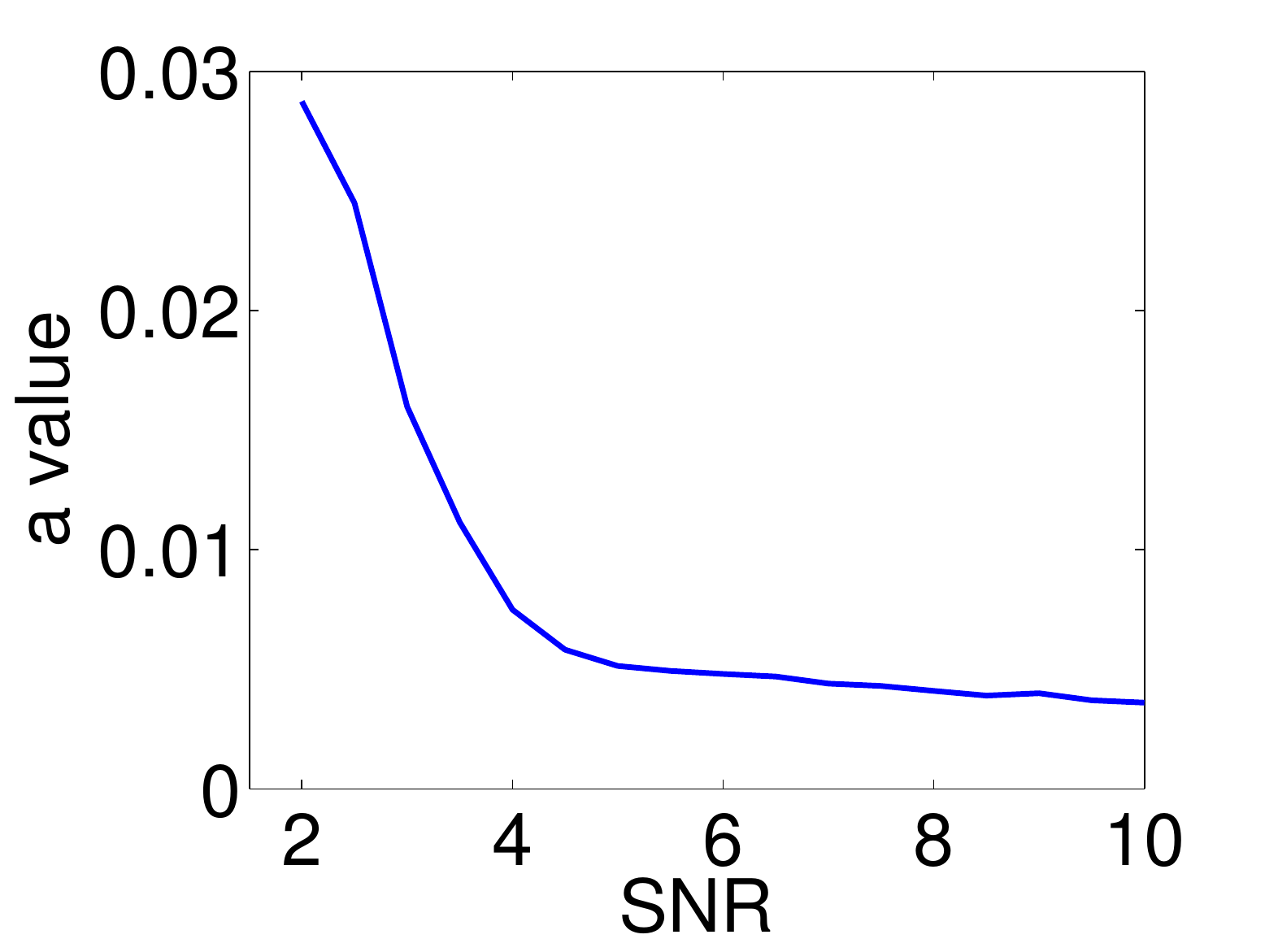}
\caption{Impact of signal to noise ratio on value of $a$.}
\label{fig:a-SNR}
\end{minipage}
\quad
\begin{minipage}[b]{0.48\linewidth}
\centering
\includegraphics[width=0.99\linewidth]{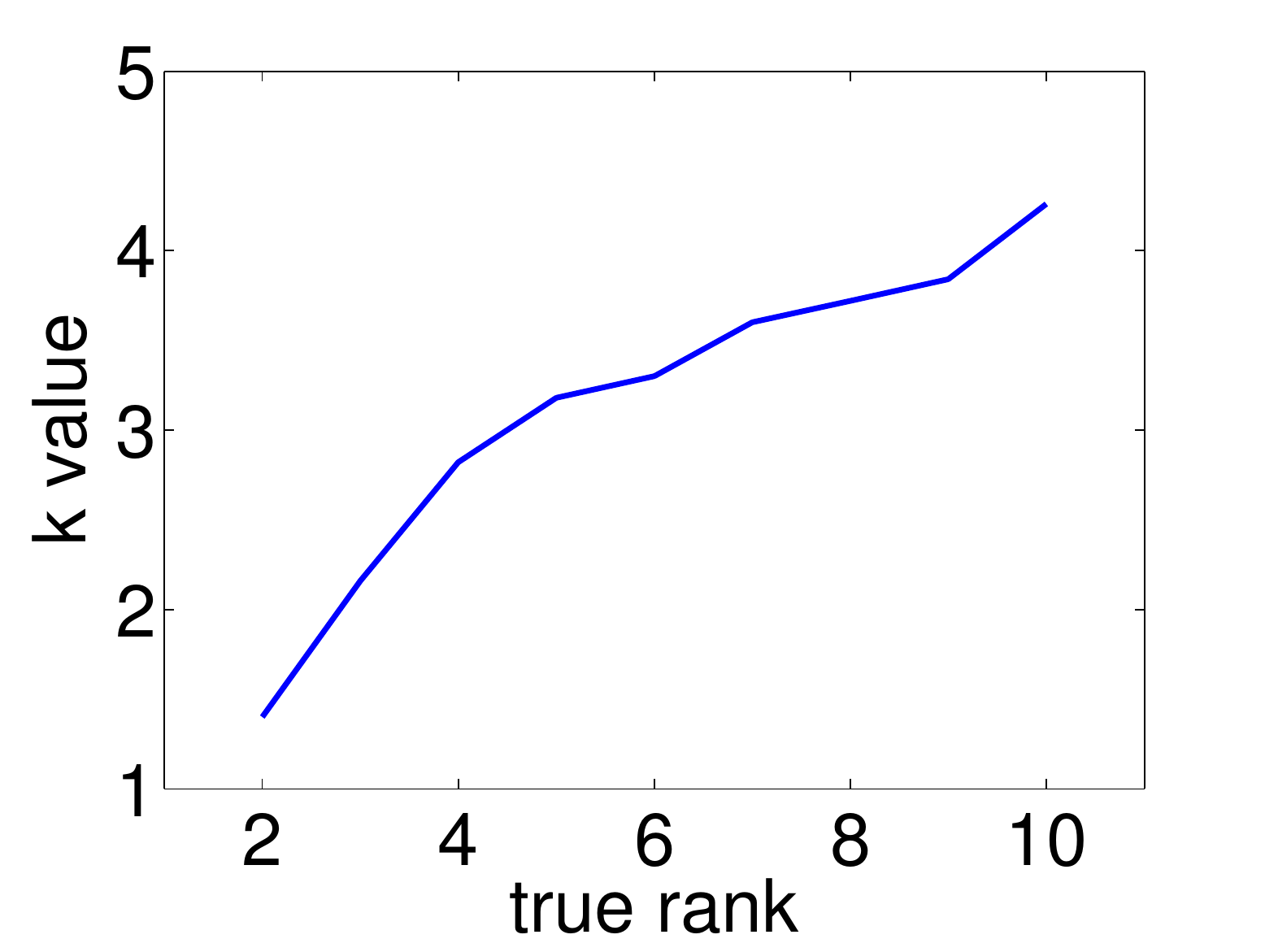}
\caption{Impact of matrix rank on value of $k$.}
\label{fig:c-rank}
\end{minipage}
\end{figure}

\vspace{0.1truecm}
\noindent
{\bf Clustered Learning.} 
We tested the centered norms on a synthetic dataset which exhibited a clustered structure. 
We generated a $100 \times 100$, rank 5, block diagonal matrix, where the entries of each $20\times 20$ block were set to a random integer chosen uniformly in $\{1, \ldots, 10\}$, with additive noise.  
Table \ref{table:simulated-block-diagonal} illustrates the results averaged over 100 runs.  
Within each group of norms, the box-norm and the $k$-support norm outperformed the trace norm and elastic net, and centering improved performance for all norms. 
Figure \ref{fig:block-matrix} illustrates a sample matrix along with the 
solution found using the box and trace norms.

\begin{table}
\caption{Clustered block diagonal matrix, before (left) and after (right) thresholding. }
\label{table:simulated-block-diagonal}
\vspace{0.1truecm}
\centering
\setlength\tabcolsep{5pt}
\begin{minipage}[th]{0.95\linewidth}
\centering
\begin{small}
\begin{tabular}{llrrrrrrrr}
\toprule
dataset & norm & test error    & $r$ & $k$ & $a$   & test error    & $r$ & $k$ & $a$        \\ 
\midrule
$\rho$=10\%    	& trace  		& 0.6529 (0.10) 		& 20 	&    - 		& -  		& 0.6065 (0.10) 	& 5   	& -     & -  	 	\\ 
               	& el.net    	& 0.6482 (0.10)		  	& 20	&    -  	& -  		& 0.6037 (0.10) 	& 5	  	& -  	& -  	       	\\  
              	& k-sup     	& 0.6354 (0.10) 		& 19 	& 2.72     	& -       	& 0.5950 (0.10) 	& 5   	& 2.77 	& - 		 \\
                & box    	& 0.6182 (0.09) 		& 100 	& 2.23 		& 1.9e-2  	& 0.5881 (0.10) 	& 5   	& 2.73 	& 4.3e-3	  \\ 
                & c-trace     	& 0.5959 (0.07) 		& 15 	& - 		& -   		& 0.5692 (0.07) 	& 5   	& - 	& -    	 	\\ 
                & c-el.net    	& 0.5910 (0.07) 		& 14 	& - 		& -  		& 0.5670 (0.07) 	& 5   	& -    	& -   	 	\\ 
                & c-k-sup    	& 0.5837 (0.07) 		& 14 	& 2.03   	& -      	& 0.5610 (0.07) 	& 5   	& 1.98 	& -    	  \\  
                & c-box    	& 0.5789 (0.07) 		& 100 	&  1.84	 	& 1.9e-3 	& 0.5581 (0.07) 	& 5   	& 1.93 	& 9.7e-3      \\      
\midrule
\midrule
$\rho$=15\%    	& trace       	& 0.3482 (0.08) 		& 21 	& - 	  	& -   		& 0.3048 (0.07) 	& 5		&    -      & -              \\ 
               	& el.net       	& 0.3473 (0.08) 		& 21	& -  		& -    		& 0.3046 (0.07) 	& 5		&    -      & -          \\
               	& k-sup       	& 0.3438 (0.07)  		& 21	& 2.24     	& -    		& 0.3007 (0.07) 	& 5		& 2.89      &-         \\ 
               	& box      	& 0.3431 (0.07) 		& 100	& 2.05 		& 8.7e-3  	& 0.3005 (0.07) 	& 5		& 2.57 		& 1.3e-3      \\ 
               	& c-trace     	& 0.3225 (0.07) 		& 19 	& - 		& -     	& 0.2932 (0.06) 	& 5   	& -         & - 		\\ 
               	& c-el.net     	& 0.3215 (0.07) 		& 18 	& - 		& -   		& 0.2931 (0.06) 	& 5   	& -         & - 		\\  
               	& c-k-sup     	& 0.3179 (0.07) 		& 18 	& 1.89		& -        	& 0.2883 (0.06) 	& 5   	& 2.36      & -         \\ 
               	& c-box   	& 0.3174 (0.07) 		& 100 	& 1.90 		& 2.2e-3  	& 0.2876 (0.06) 	& 5   	& 1.92  	& 3.8e-3   \\ 
\bottomrule
\end{tabular}
\end{small}
\end{minipage}
\end{table}

\begin{figure}
\caption{Clustered matrix and recovered solutions. From left to right: true, noisy, trace norm, box-norm}
\label{fig:block-matrix}
\centering
\begin{minipage}{0.24\linewidth}
\centering
  \includegraphics[width=1.3\linewidth]{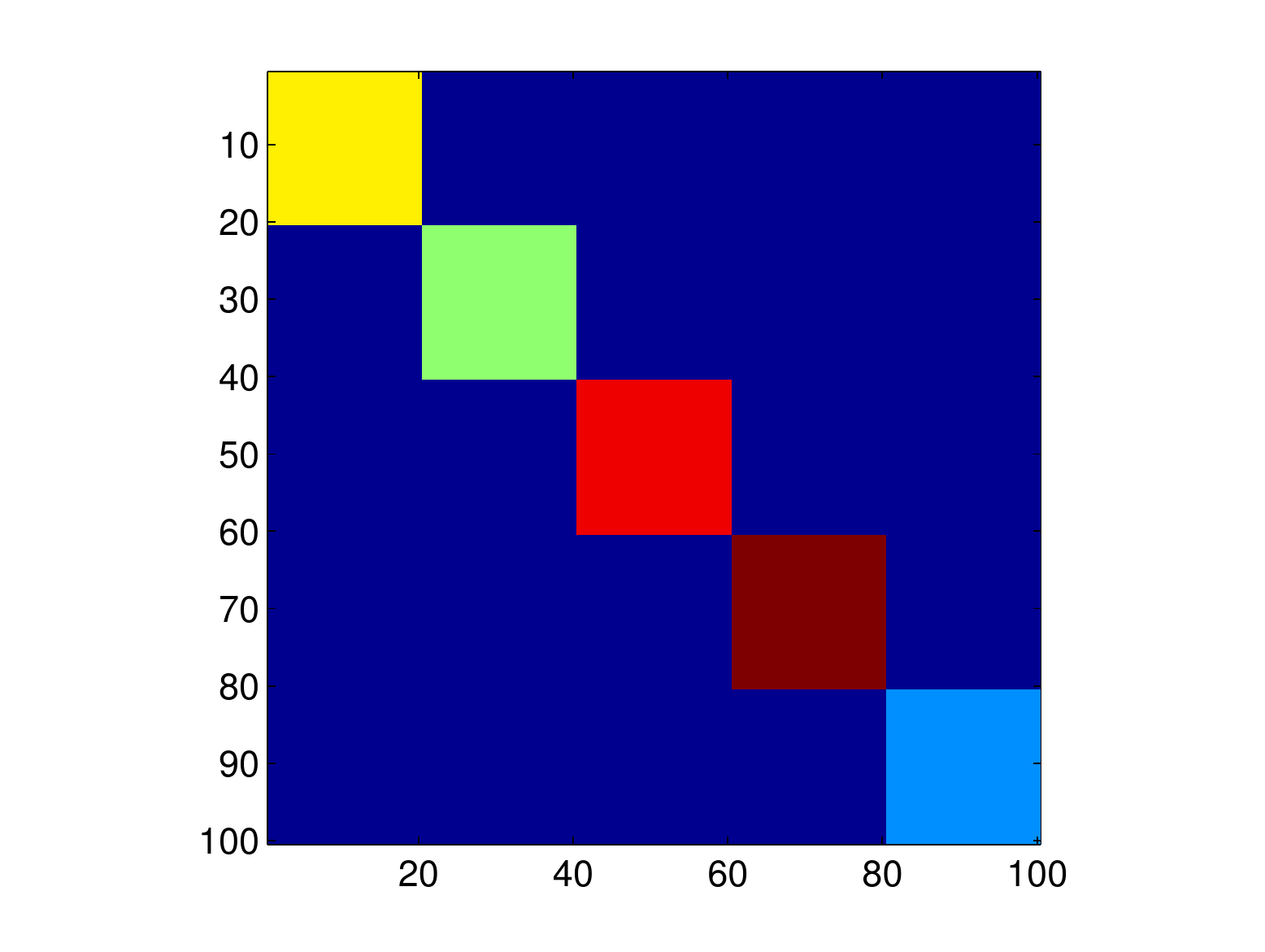}
\end{minipage}
\begin{minipage}{0.24\linewidth}
\centering
  \includegraphics[width=1.3\linewidth]{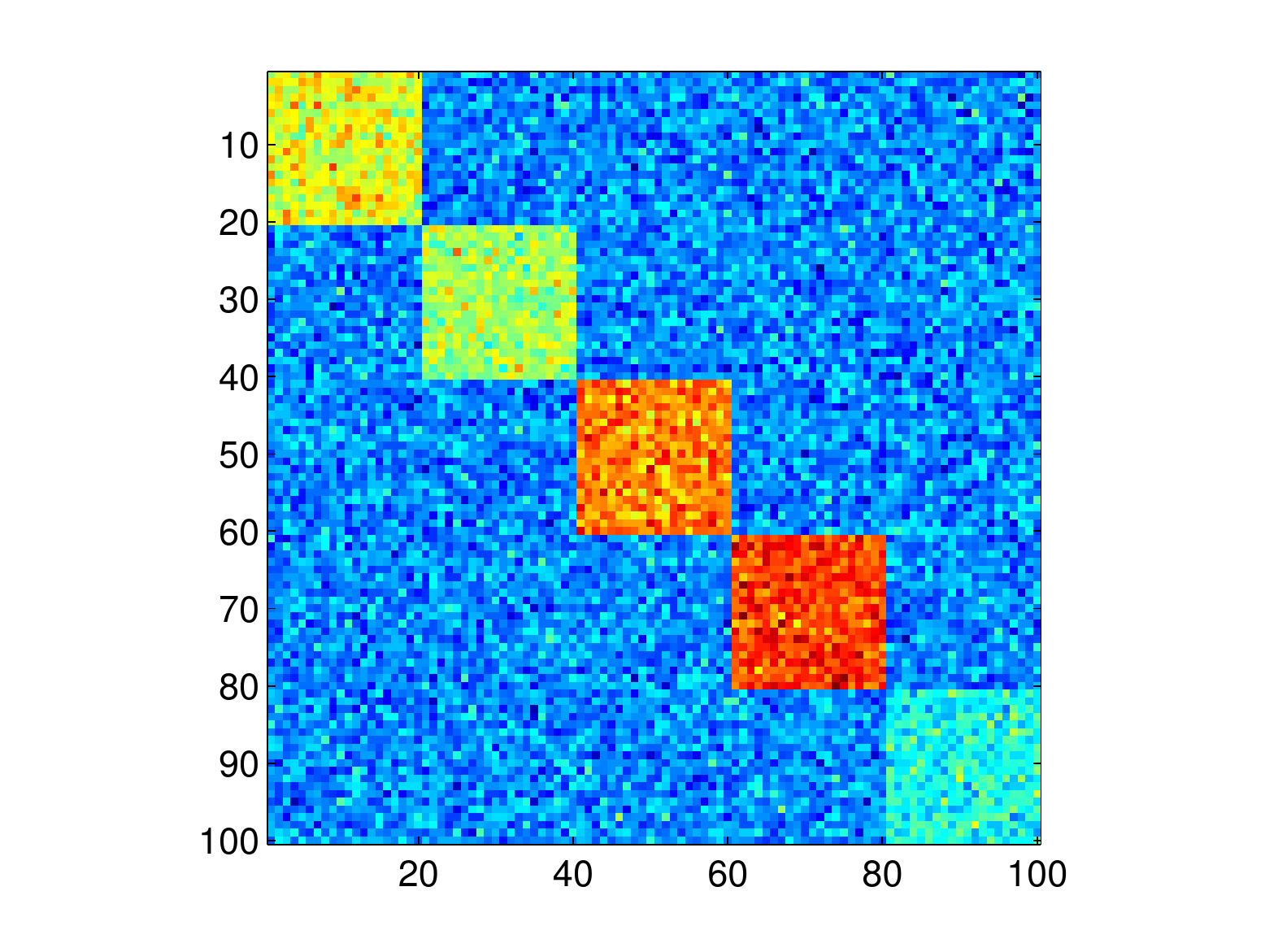}
\end{minipage}
\begin{minipage}{0.24\linewidth}
\centering
  \includegraphics[width=1.3\linewidth]{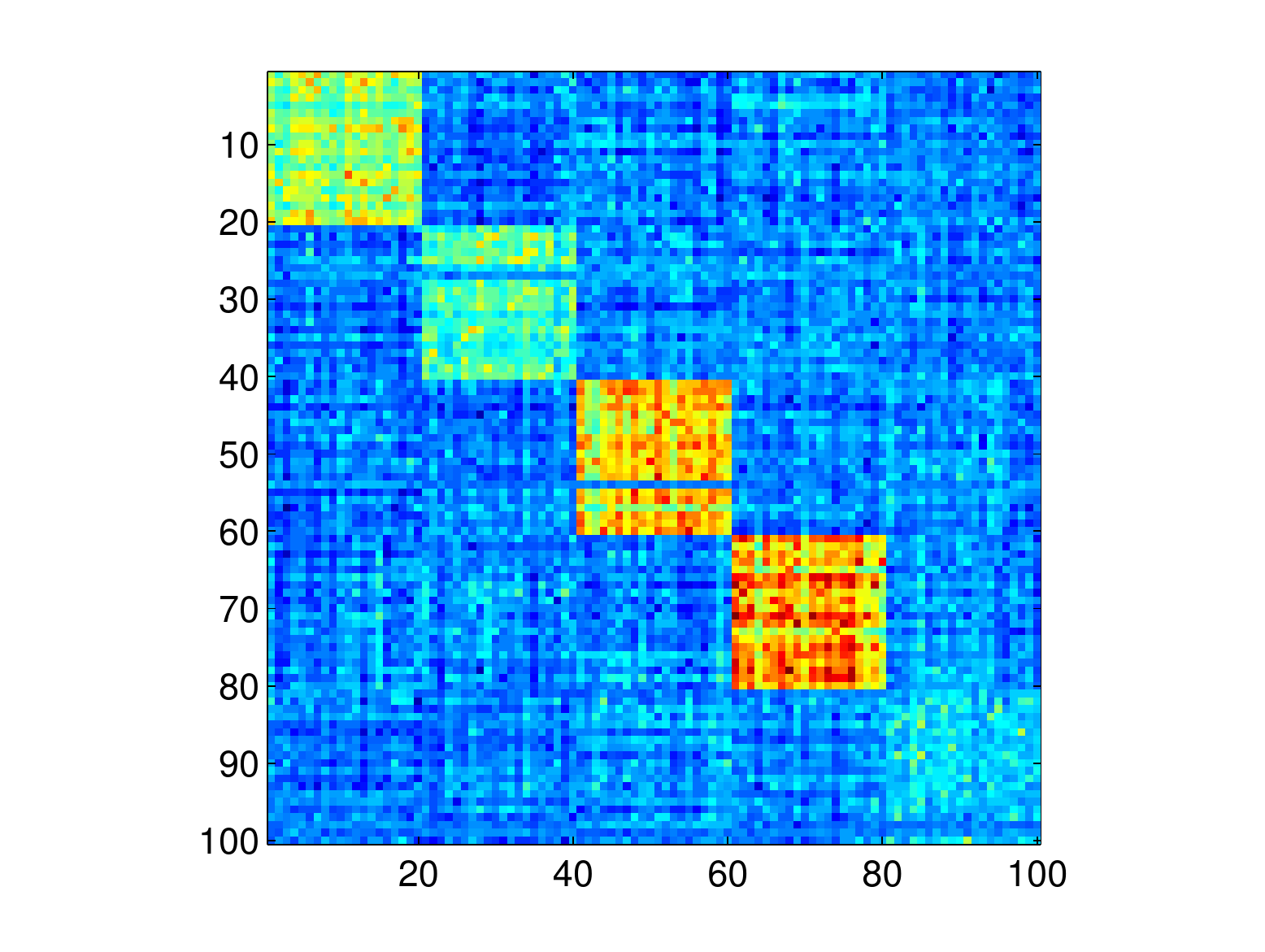}
\end{minipage}
\begin{minipage}{0.24\linewidth}
\centering
  \includegraphics[width=1.3\linewidth]{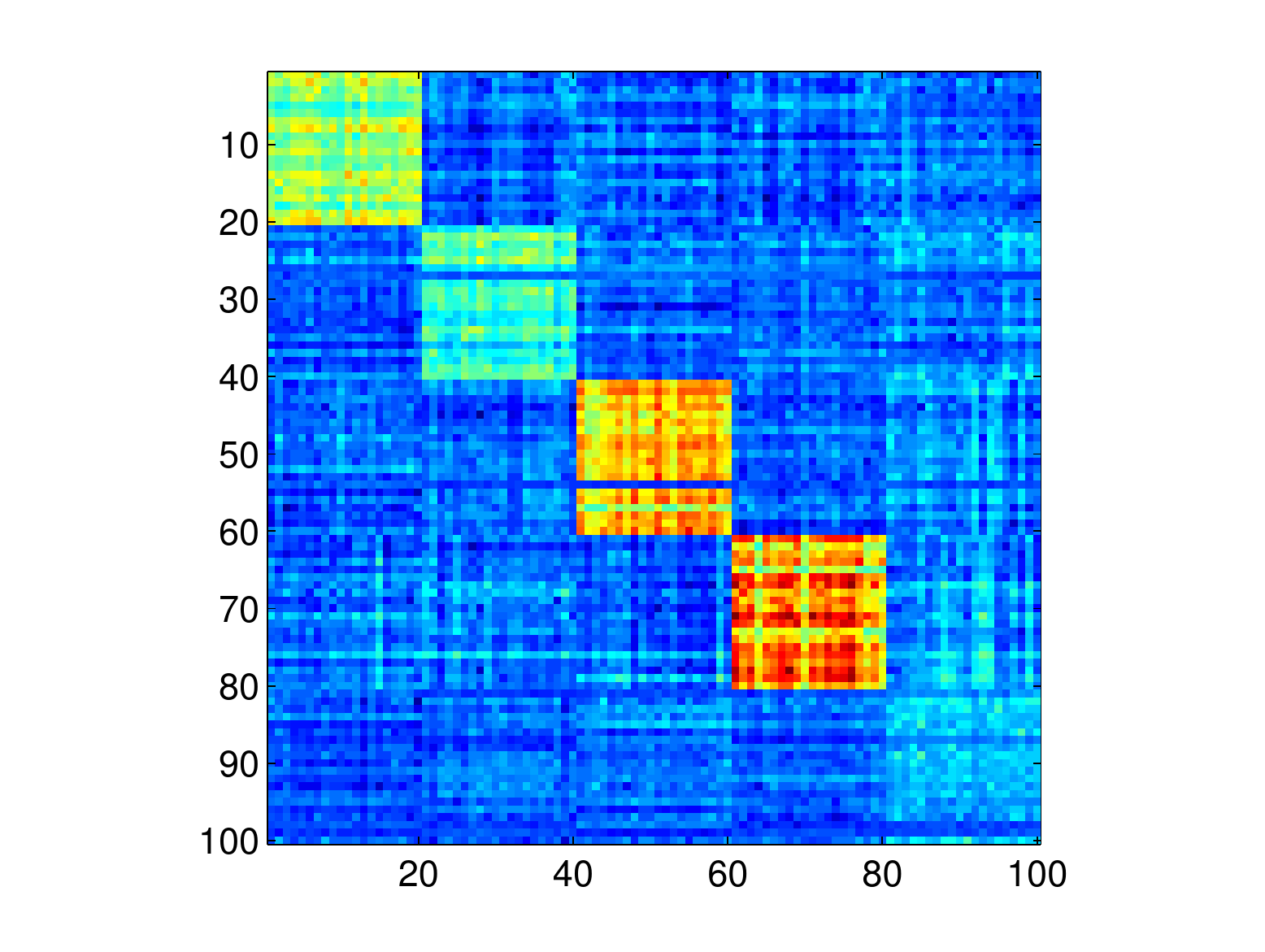}
\end{minipage}
\end{figure}

\subsection{Real Data}
{\bf Matrix Completion (MovieLens and Jester). }
In this section we report on the performance of the norms on real datasets.
We observe a subset of the (user, rating) entries of a matrix and the task is to predict the unobserved ratings, with the assumption that the true matrix is low rank (or approximately low rank). 
In the first instance we considered the MovieLens datasets\footnote{MovieLens datasets are available at {\em http://grouplens.org/datasets/movielens/}.}.
These consist of user ratings of movies, the ratings are integers from 1 to 5, and all users have rated a minimum number of 20 films. 
Specifically we considered the following datasets: 
\begin{itemize}
\item \emph{MovieLens 100k}: 943 users and 1,682 movies, with a total of 100,000 ratings;
\item \emph{MovieLens 1M}: 6,040 users and 3,900 movies, with a total of 1,000,209 ratings.  
\end{itemize}
We also considered the Jester \footnote{Jester datasets are available at {\em http://goldberg.berkeley.edu/jester-data/}.} datasets, which consist of user ratings of jokes, where the ratings are real values from $-10$ to $10$:
\begin{itemize}
\item \emph{Jester 1}: 24,983 users and 100 jokes, all users have rated a minimum of 36 jokes;  
\item \emph{Jester 2}: 23,500 users and 100 jokes, all users have rated a minimum of 36 jokes;
\item \emph{Jester 3}: 24,938 users and 100 jokes, all users have rated between 15 and 35 jokes. 
\end{itemize}
Following \citet{Toh2011}, for MovieLens we uniformly sampled $\rho=50\%$ of the available entries for each user for training, and for Jester 1, Jester 2 and Jester 3 we sampled 20, 20 and 8 ratings per user respectively, and we again used 10\% for validation.
The error was measured as normalized mean absolute error, 
\begin{align*}
\textrm{NMAE}=\frac{\Vert w_{\text{true}}-w_{\text{predicted}}\Vert^2}{\#\text{observations}/(r_{\max} -r_{\min})},
\end{align*}
where $r_{\max}$ and $r_{\min}$ are upper and lower bounds for the ratings \citep{Toh2011}, 
averaged over 50 runs. 
The results are outlined in Table \ref{table:mc-real}.
In the thresholding case, the spectral box-norm and the spectral $k$-support norm showed the best performance, and in the absence of thresholding, the spectral $k$-support norm showed slightly improved performance.
Comparing to the synthetic datasets, this suggests that the parameter $a$ did not provide any benefit in the absence of noise. 
We also note that without thresholding our results for trace norm regularization on MovieLens 100k agreed with those in \citet{Jaggi2010}.

\begin{table}
\caption{Matrix completion on real datasets, without (left) and with (right) thresholding.}
\label{table:mc-real}
\centering
\setlength\tabcolsep{4pt}
\begin{minipage}[th]{0.95\linewidth}
\centering
\begin{small}
\begin{tabular}{llrrrrrrrr}
\toprule
 dataset    & norm  & test error    & $r$   & $k$ & $a$ & test error    & $r$   & $k$ & $a$\\
\midrule
MovieLens   & trace    & 0.2034    & 87    &   -   &-  & 0.2017    & 13    &   -   &-          \\
100k        & el.net    & 0.2034    & 87    &    -  &-   & 0.2017    & 13    &   -   &-           \\  
$\rho=50\%$ & k-sup    & 0.2031    & 102   & 1.00  &-   & 0.1990    & 9     & 1.87  &- \\               
            & box   & 0.2035    & 943   & 1.00  & 1e-5 & 0.1989    & 10    & 2.00  & 1e-5\\               
\midrule
MovieLens   & trace    & 0.1821    & 325   &   -   &-  & 0.1790    & 17    &   -   &-        \\ 
1M          & el.net    & 0.1821    & 319   &   -   &-   & 0.1789    & 17    &   -   &-        \\
$\rho=50\%$ & k-sup    & 0.1820    & 317   & 1.00  &-  & 0.1782    & 17    & 1.80  &-  \\ 
            & box   & 0.1817    & 3576  & 1.09  & 3e-5  & 0.1777    & 19    & 2.00  & 1e-6 \\             
\midrule            
Jester 1    & trace    & 0.1787    & 98    &    -  &-  & 0.1752    & 11    &  -    &-     \\
20 per line & el.net    & 0.1787    & 98    &   -   &- & 0.1752    & 11    &  -    &-          \\
            & k-sup    & 0.1764    & 84    & 5.00  &-  & 0.1739    & 11    & 6.38  &-     \\
            & box   & 0.1766    & 100   & 4.00  & 1e-6   & 0.1726   & 11    & 6.40  & 2e-5 \\           
\midrule       
Jester2     & trace    & 0.1767      & 98    &    -&-  & 0.1758    & 11    &    -  &-          \\  
20 per      & el.net  & 0.1767  & 98    &   -&-      & 0.1758    & 11    &   -   &-     \\
  line      & k-sup    & 0.1762    & 94      & 4.00 &-  & 0.1746    & 11    & 4.00  &-      \\
            & box   & 0.1762      & 100     & 4.00 & 2e-6  & 0.1745    & 11    & 4.50  & 5e-5 \\            
\midrule            
Jester 3    & trace    & 0.1988    & 49    &   -   &-   & 0.1959    & 3     &   -   &-        \\ 
8 per line  & el.net    & 0.1988    & 49    &   -   &-       & 0.1959    & 3     &   -   &-    \\  
            & k-sup    & 0.1970    & 46    & 3.70  &-    & 0.1942    & 3     & 2.13  &-   \\ 
            & box   & 0.1973    &  100  & 5.91  & 1e-3  & 0.1940    & 3     & 4.00  & 8e-4 \\ 
\bottomrule
\end{tabular}
\end{small}
\end{minipage}
\end{table}

\vspace{0.1truecm}
\noindent
{\bf Multitask Learning (Lenk and Animals with Attributes). }
In our final set of experiments we considered two multitask learning datasets, where we expected the data to exhibit clustering.  
The \emph{Lenk personal computer} dataset \citep{Lenk1996} consists of 180 ratings of 20 profiles of computers characterized by 14 features (including a bias term).  
The clustering is suggested by the assumption that users are motivated by similar groups of features. 
We used the root mean square error of true vs. predicted ratings, normalised over the tasks, averaged over 
100 runs.
We also report on the Frobenius norm, which in the multitask learning framework corresponds to independent task learning.
The results 
are outlined in Table \ref{table:lenk-mtl}.  
The centered versions of the spectral $k$-support norm and spectral box-norm outperformed the other penalties in all regimes.
Furthermore, the results clearly indicate the importance of centering, as discussed for the trace norm in \citet{Evgeniou2007}.

\begin{table}
\caption{Multitask learning clustering on Lenk dataset. }
\label{table:lenk-mtl}
\vspace{0.1truecm}
\centering
\setlength\tabcolsep{8pt}
\begin{minipage}[th]{0.95\linewidth}
\centering
\begin{small}
\begin{tabular}{lccc}
\toprule
    norm  & test error  & $k$ & $a$  \\ 
\midrule
 fr     		& 3.7931 (0.07)     		&   -  	& -      		\\
  trace    & 1.9056 (0.04)     	 	&  -    	&-      	     	\\ 
          el.net      	& 1.9007 (0.04)        	& -  	&   - 		    \\  
      	 k-sup  		& 1.8955 (0.04)     	 	& 1.02  	& - 		 		\\ 
      	 box  		& 1.8923 (0.04)      	& 1.01 	& 5.5e-3 		\\ 
         c-fr     	& 1.8634 (0.08)    	 	& -     	&-    	   		\\      
        c-trace     	& 1.7902 (0.03)   	 	& -     	&-    	   		\\ 
         c-el.net     	& 1.7897 (0.03)    	 	& -     	&-     	  		\\ 
         c-k-sup    	& 1.7777 (0.03)     	 	& 1.89 	&-    	     	\\
         c-box    	& 1.7759 (0.03)    	 	& 1.12  	& 8.6e-3 \\  
\bottomrule
\end{tabular}
\end{small}
\end{minipage}
\end{table}

The \emph{Animals with Attributes} dataset \citep{Lampert2009} consists of 30,475 images of animals from 50 classes.  
Along with the images, the dataset includes pre-extracted features for each image. 
The dataset has been analyzed in the context of multitask learning. 
We followed the experimental protocol from \citep{Kang2011}, however we used an updated feature set, and we considered all 50 classes. 
Specifically, we used the DeCAF feature set provided by \citet{Lampert2009} rather than the SIFT bag of word descriptors.  
These updated features were obtained through a deep convolutional network and represent each image by a 4,096-dimensional vector \citep{Donahue2014}. 
As the smallest class size is 92
we selected the first $n=92$ examples of each of the $T=50$ classes, 
used PCA (with centering) on the resulting data matrix to reduce dimensionality ($d=1,718$)
retaining a variance of 95\%, and obtained a dataset of size $4,600 \times 1,718$. 
For each class the examples were split into training, validation and testing datasets, with a split of 50\%, 25\%, 25\% respectively, and we averaged the performance over 50 runs. 

We used the logistic loss, yielding the error term
$$
{\cal L}(W) = \sum_{t=1}^T \sum_{i=1}^{Tn} \log \left( 1+ \exp(-y_{t,i} \langle w_t, x_i\rangle )\right)
$$
where $W=[w_1, \ldots, w_T]$, $x_1, \ldots, x_{Tn}$ are the inputs and $y_{t,i} = 1$ if $x_i$ is in class $t$, and $y_{t,i} =-1$ otherwise.

The predicted class for testing example $x$ was $\argmax_{t=1}^T \langle  w_t, x \rangle$ and the accuracy was measured as the percentage of correctly classified examples, also known as multi-class error. 
The results without centering are outlined in Table \ref{table:lenk-awa}. 
The corresponding results with centering showed the same relative performance, but worse overall accuracy, which is reasonable as the data is not expected to be clustered, and we omit the results here.

The spectral $k$-support and box-norms gave the best results, outperforming the Frobenius norm and the matrix elastic net, which in turn outperformed the trace norm.
We highlight that in contrast to the Lenk experiments, the Frobenius norm, corresponding to independent task learning, was competitive.  
Furthermore, the optimal values of $k$ for the spectral $k$-support norm and spectral box-norm were high (38 and 33, respectively) relative to the maximum rank of 50, corresponding to a relatively high rank solution. 
The spectral $k$-support norm and spectral box-norm nonetheless outperformed the other regularizers. 
Notice also that the spectral $k$-support norm requires the same number of parameters to be tuned as the matrix elastic net, which suggests that it somehow captures the underlying structure of the data in a more appropriate manner. 

We finally note as an aside that using the SIFT bag of words descriptors provided by \citet{Lampert2009}, which represent the images as a $2,000$-dimensional histogram of local features, we replicated the results for independent task learning (Frobenius norm regularization) and single-group learning (trace norm regularization) of \citet{Kang2011} for the subset of 20 classes considered in their paper.

\begin{table}
\caption{Multitask learning clustering on Animals with Attributes dataset, no centering. }
\label{table:lenk-awa}
\vspace{0.1truecm}
\centering
\setlength\tabcolsep{8pt}
\begin{minipage}[th]{0.95\linewidth}
\centering
\begin{small}
\begin{tabular}{lccc}
\toprule
  norm  & test error  & $k$ & $a$  \\ 
\midrule
\midrule
 fr     	& 38.3428 (0.74)     		&   -  	& -      		\\ 
  	 tr    & 37.4285 (0.76)     	 	&  -    	&-      	     	\\  
      	el.net      	& 38.2857 (0.73)        	& -  	&   - 		    \\ 
      	 k-sup  		& 38.8571 (0.71)     	 	& 37.8  	& - 		 		\\ 
      	 box  		& 38.9100 (0.65)      	& 32.8 	& 2.1e-2 		\\ 
\bottomrule
\end{tabular}
\end{small}
\end{minipage}
\end{table}

\section{Extensions}
\label{sec:extensions}
In this section we outline a number of extensions to topics in this paper.

\subsection{$k$-Support $p$-Norms}
A natural extension of the $k$-support norm follows by applying a $p$-norm, rather than the Euclidean norm, in the infimum convolution definition of the $k$-support norm.  
In the dual norm, we then obtain the corresponding $q$-norm, where $\frac{1}{p}+\frac{1}{q}=1$.
\begin{definition}\label{def:k-sup-p-norm}
The $k$-support $p$-norm is defined for $w \in \R^d$ as 
\begin{align}
\Vert w \Vert_{(k,p)} &= \inf \left\{ \sum_{g \in \G_k} \Vert v_g \Vert_p : \supp(v_g) \subseteq g, \sum_{g \in \G_k} v_g = w \right\}. \label{eqn:k-sup-p-norm}
\end{align}
\end{definition}

The following corollary follows along the same lines as the proof of Proposition \ref{prop:geo2} in the appendix.  
\begin{corollary}\label{prop:k-sup-p-norm-and-dual}
The $(p,k)$-support norm is well defined and its unit ball is the convex hull of the set $\{w \in \R^d: \|w\|_p \leq 1,~ {\rm card}(w) \leq k\}$. Furthermore, its dual norm is given by 
\begin{align*}
\Vert u \Vert_{*,(k,q)} = \left(\sum_{i=1}^k (\vert u \vert^{\downarrow}_i)^q\right)^\frac{1}{q},~~~ u \in \R^d.
\end{align*}
\end{corollary}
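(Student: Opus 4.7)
The plan is to follow the template of Proposition \ref{prop:geo2} almost verbatim, taking advantage of the fact that the infimum in \eqref{eqn:k-sup-p-norm} is precisely of the Minkowski-functional type. First I would set
\begin{align*}
C = {\rm co}\left(\bigcup_{g \in \G_k} \left\{v \in \R^d : \supp(v) \subseteq g,~\|v\|_p \leq 1\right\}\right),
\end{align*}
and observe that the union inside the convex hull coincides with $\{w \in \R^d : \|w\|_p \leq 1,~{\rm card}(w) \leq k\}$, since having cardinality at most $k$ is equivalent to the support being contained in some $g \in \G_k$. The set $C$ is bounded, convex, symmetric (because each constituent set is symmetric), and absorbing (any $w$ can be scaled so that every component-decomposition lies in $C$). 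By the standard Minkowski functional lemma used in the proof of Proposition \ref{prop:geo2}, its gauge defines a norm on $\R^d$ whose unit ball is $C$.

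The next step is to identify this gauge with the right-hand side of \eqref{eqn:k-sup-p-norm}. By definition, the gauge of $C$ at $w$ equals the infimum of $\sum_{g \in \G_k} \lambda_g$ over decompositions $w = \sum_{g \in \G_k} \lambda_g z_g$ with $\lambda_g \geq 0$, $\supp(z_g)\subseteq g$, $\|z_g\|_p \leq 1$. Setting $v_g = \lambda_g z_g$, so that $\lambda_g = \|v_g\|_p$ in the tight case and $\lambda_g \geq \|v_g\|_p$ in general, this infimum equals $\inf \sum_{g \in \G_k} \|v_g\|_p$ subject to $\supp(v_g)\subseteq g$ and $\sum_g v_g = w$. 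Thus the infimum convolution in \eqref{eqn:k-sup-p-norm} is a well-defined norm with unit ball $C$, proving the first two claims.

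For the dual norm, I would use the standard fact that the dual of the Minkowski functional of a symmetric convex set $C$ is the support function $\|u\|_{*,(k,p)} = \sup_{w \in C} \langle u, w\rangle$, and that taking the supremum over a convex hull reduces to the supremum over the generating set. Therefore
\begin{align*}
\|u\|_{*,(k,p)} = \sup_{g \in \G_k} \sup\left\{\langle u, v\rangle : \supp(v)\subseteq g,~\|v\|_p \leq 1\right\} = \sup_{g \in \G_k} \|u_g\|_q,
\end{align*}
where $u_g$ denotes the restriction of $u$ to coordinates in $g$ and the inner supremum is the standard $\ell_p/\ell_q$ duality (H\"older). Since $\|u_g\|_q = \bigl(\sum_{i \in g} |u_i|^q\bigr)^{1/q}$ is maximized over $g \in \G_k$ by choosing $g$ to be the indices of the $k$ largest entries of $|u|$, we obtain $\|u\|_{*,(k,q)} = \bigl(\sum_{i=1}^k (|u|^\downarrow_i)^q\bigr)^{1/q}$, as required.

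The main obstacle, as in Proposition \ref{prop:geo2}, is verifying that the infimum representation truly coincides with the Minkowski functional (rather than only giving one inequality), and checking that the infimum is attained or at least that optimal sequences can be constructed with $\lambda_g = \|v_g\|_p$; this is the one place where the argument is not purely formal. Once this identification is in place, the unit-ball description and the dual-norm formula follow with no further computation.
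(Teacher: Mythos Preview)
Your proposal is correct and follows essentially the same approach the paper indicates: the paper simply says the corollary ``follows along the same lines as the proof of Proposition \ref{prop:geo2} in the appendix,'' and your sketch is precisely that adaptation---define $C$ as the convex hull of the $k$-sparse $\ell_p$-balls, show it is bounded, balanced and absorbing so that $\mu_C$ is a norm, identify $\mu_C$ with the infimal-convolution expression via the change of variables $v_g = \lambda \alpha_g z_g$, and then read off the dual as the support function of $C$ using $\ell_p/\ell_q$ duality and maximizing over $g\in\G_k$. The only minor simplification relative to Proposition \ref{prop:geo2} is that here the norm is \emph{defined} by the infimal convolution, so you do not need the final step of matching dual norms with a $\Theta$-norm; the identification $\mu_C = \|\cdot\|_{(k,p)}$ already finishes the argument.
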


We discuss the special cases $p\in \{1,2,\infty\}$.
The case $p=2$ is the $k$-support norm of \cite{Argyriou2012} discussed above. 
For $p=1$ we have $\Vert u \Vert_{*,(k,q)} =\|u\|_\infty$, hence the $(k,1)$-support norm coincides with the $\ell_1$ norm for every $k \in \NN{d}$. 
{The case $p=\infty$ is more interesting; specifically the dual norm is the well-known Ky-Fan norm \citep[see e.g.][]{Bhatia1997}.}
\begin{align*}
\Vert u \Vert_{*,(k,\infty)} = \sum_{i=1}^k |u|^{\downarrow}_i.
\end{align*}
Using the fact that the primal norm is the dual of the dual, we obtain by a direct computation that
\begin{align*}
\Vert w \Vert_{(k,\infty)} = \max\left(\|w\|_\infty,\frac{1}{k} \|w\|_1\right).
\end{align*}

It is clear that the $(p,k)$-support norm is a symmetric gauge function. Hence we we can define the spectral $(p,k)$-support norm as $\Vert W \Vert_{(k,p)} = \Vert \sigma(W) \Vert_{(k,p)}$, for $W \in \R^{d \times T}$. Since the dual of any orthogonally invariant norm is given by $\|\cdot\|_* = \Vert \sigma(\cdot) \Vert_*$ \citep[see e.g.][]{Lewis1995}, we conclude that the dual spectral $(k,p)$-support norm is given by $\Vert U \Vert_{*,(k,p)} = \Vert \sigma(U) \Vert_{*,(k,p)}$, for every $U \in \R^{d \times T}$.
Furthermore, the unit ball of the spectral $(p,k)$-support norm is equal to the convex hull of the set $\{W \in \R^{d \times T}: \rank(W) \leq k, \, \Vert \sigma(W) \Vert_{p} \leq 1 \}$.

\subsection{Kernels}
The ideas discussed in this paper can be used in the context of multiple kernel learning in a natural way \citep[see e.g.][and references therein]{MicPon07}. Let $K_j$, $j \in \N_s$, be prescribed reproducing kernels on a set $X$ , and $H_j$ the corresponding reproducing kernel Hilbert spaces with norms $\|\cdot\|_{j}$. We
consider the problem
$$ \min \left\{ \sum_{i=1}^n \ell\Big(y_i,\sum_{\ell=1}^s f_\ell(x_i)\Big)
+ \lambda \big\|\big(\|f_1\|_{1},\dots,\|f_s\|_{s}\big)\big\|_\Theta^2 : f_1 \in H_1,\dots,f_s \in H_s\right\}.
$$
The choice $\Theta  = \{\theta \in {\mathbb R}^d: 0 < \theta_i \leq 1,~\sum_{i=1}^d \theta_i \leq k\}$, when $k \leq s$, is particularly interesting. It gives rise to a version of multiple kernel learning in which at least $k$ kernels are employed.

\subsection{Rademacher complexity}
We briefly comment on the Rademacher complexity of the spectral $k$-support norm, namely
$$
\E \sup_{\|W\|_{(k)} \leq 1} \frac{1}{Tn} \sum_{t=1}^T \sum_{i=1}^n \epsilon_i^t \lb w_t,x_i^t\rb 
$$
where the expectations is taken with respect to  i.i.d.~Rademacher random variables $\epsilon_i^t$, $i \in \N_n,~t \in \N_T$ and the $x_i^t$ are either prescribed or random datapoints associated with the different regression tasks.
 The Rademacher complexity can be used to derive uniform bounds on the estimation error and excess risk bounds \citep[see][for a discussion]{Bartlett2002,Koltchinskii2002}.
Although a complete analysis is beyond the scope of the present paper, we remark that the Rademacher complexity of the unit ball of the spectral $k$-support is a factor of $\sqrt{k}$ larger than the Rademacher complexity bound for the trace norm provided in \citep[Proposition 6][]{Maurer2013}. 
This follows from the fact that the dual spectral $k$-support norm is bounded by $\sqrt{k}$ times the operator norm.
Of course the unit ball of the spectral $k$-support norm contains the unit ball of the trace norm, so the associated excess risk bounds need to be compared with care.




\section{Conclusion}
\label{sec:conclusion}
We studied the family of box-norms, and showed that the $k$-support norm belongs to this family. 
We noted that these can be naturally extended from the vector to the matrix setting. 
We also provided a connection between the $k$-support norm and the cluster norm, which essentially coincides with the spectral box-norm. 
We further observed that the cluster norm is a perturbation of the spectral $k$-support norm, and we were able to compute the norm and the proximity operator of the squared norm. 
We also provided a method to solve regularization problems using centered versions of the norms and we considered a number of extensions to the box-norm framework. 

Our experiments indicate that the spectral box-norm and $k$-support norm consistently outperform the trace norm and the matrix elastic net on various matrix completion problems. 
Furthermore, we studied the application of centering to clustering problems in multitask learning, and found that this improved performance.  
With a single parameter, compared to two for the spectral box-norm, and three for the cluster norm, our results suggest that the spectral $k$-support norm represents a powerful yet straightforward alternative to the trace norm for low rank matrix learning.
In future work we would like to complete the analysis of the Rademacher complexity for the norms in this paper, and derive associated statistical oracle inequalities. 
We would also like to investigate the family of $\Theta$-norms for more general parameter sets.


\acks{We would like to thank Andreas Maurer, Charles Micchelli and especially Andreas Argyriou for useful discussions.  
This work was supported in part by EPSRC Grant EP/H027203/1.}

\appendix{}

\section{}
\label{sec:app-aux}

In this appendix, we discuss some auxiliary results which are used in the main body of the paper.

Let $X$ be a finite dimensional vector space. 
Recall that a subset $C$ of $X$ is called {\em balanced} if $\alpha C \subseteq C$ whenever $\vert \alpha \vert \leq 1$. 
Furthermore, $C$ is called {\em absorbing} if for any $x \in X$, $x \in \lambda C$ for some $\lambda >0$.  
\begin{lemma}\label{lem:minkowski-bounded}
Let $C\subseteq X$ be a bounded, convex, balanced, and absorbing set.  
The Minkowski functional $\mu_C$ of $C$, defined, for every $w \in X$, as
\begin{align*}
\mu_C(w) = \inf \left\{\lambda : \lambda >0, ~\frac{1}{\lambda} w \in C\right\}
\end{align*}
is a norm on $X$. 
\end{lemma}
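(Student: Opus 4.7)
The plan is to verify the three norm axioms (definiteness, absolute homogeneity, triangle inequality) directly from the hypotheses on $C$, after first checking that $\mu_C$ takes finite nonnegative values. Since $C$ is absorbing, for every $w\in X$ there exists $\lambda>0$ with $w\in\lambda C$, so the set over which the infimum is taken is nonempty and $\mu_C(w)\in[0,\infty)$; nonnegativity is immediate from the definition. I would also note right away that a balanced set contains $0$, so $\mu_C(0)=0$.

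For absolute homogeneity, I would use the balanced property of $C$. Fix $\alpha\ne 0$; since $|\alpha|/\alpha$ has modulus one, the equivalence $\alpha w/\lambda\in C\iff |\alpha|w/\lambda\in C$ holds. Setting $\mu=\lambda/|\alpha|$ then gives
\begin{align*}
\mu_C(\alpha w)=\inf\{\lambda>0:\alpha w/\lambda\in C\}=\inf\{|\alpha|\mu:\mu>0,\ w/\mu\in C\}=|\alpha|\mu_C(w),
\end{align*}
with the $\alpha=0$ case being trivial. For the triangle inequality, I would invoke convexity: given $w,v\in X$ and $\epsilon>0$, pick $\lambda_1<\mu_C(w)+\epsilon$ and $\lambda_2<\mu_C(v)+\epsilon$ with $w/\lambda_1,v/\lambda_2\in C$, and observe that $(w+v)/(\lambda_1+\lambda_2)$ is the convex combination $\tfrac{\lambda_1}{\lambda_1+\lambda_2}(w/\lambda_1)+\tfrac{\lambda_2}{\lambda_1+\lambda_2}(v/\lambda_2)$, which lies in $C$; hence $\mu_C(w+v)\le\lambda_1+\lambda_2<\mu_C(w)+\mu_C(v)+2\epsilon$, and sending $\epsilon\downarrow 0$ closes this step.

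The one step where boundedness of $C$ is essential, and which I expect to be the main (minor) obstacle, is showing that $\mu_C(w)=0$ forces $w=0$. I would fix any norm $\|\cdot\|$ on the finite-dimensional space $X$ and use boundedness to obtain a constant $M>0$ with $\|x\|\le M$ for all $x\in C$. If $\mu_C(w)=0$ then for every $\epsilon>0$ there exists $\lambda\in(0,\epsilon)$ with $w/\lambda\in C$, so $\|w\|\le M\lambda<M\epsilon$; letting $\epsilon\downarrow 0$ yields $w=0$. Combining the four pieces establishes that $\mu_C$ is a norm on $X$.
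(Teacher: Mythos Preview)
Your proof is correct and follows essentially the same approach as the paper: balancedness for absolute homogeneity, convexity for the triangle inequality via the convex combination $\tfrac{\lambda_1}{\lambda_1+\lambda_2}(w/\lambda_1)+\tfrac{\lambda_2}{\lambda_1+\lambda_2}(v/\lambda_2)$, and boundedness for positive definiteness. The only difference is that you spell out the $\epsilon$-arguments explicitly where the paper leaves them implicit.
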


\begin{proof}
We give a direct proof that $\mu_C$ satisfies the properties of a norm.  
See also e.g. \cite[\S 1.35]{Rudin1991} for further details.
Clearly $\mu_C(w) \geq 0$ for all $w$, and $\mu_C(0)=0$. Moreover, as $C$ is bounded,
$\mu_C(w) > 0$ whenever $w\ne 0$. 

Next we show that $\mu_C$ is one-homogeneous. For every $\alpha \in \R$, $\alpha \ne 0$, let $\sigma = {\rm sign}(\alpha)$ and note that
\begin{align*}
\mu_C(\alpha w) &= \inf \left\{\lambda>0 : \frac{1}{\lambda} \alpha w \in C \right\}\\
&= \inf \left\{\lambda>0 : \frac{|\alpha|}{\lambda}  \sigma w \in C \right\}\\
&= \vert \alpha \vert \inf \left\{\lambda>0 : \frac{1}{\lambda} 	w \in \sigma C \right\}\\
&= \vert \alpha \vert \inf \left\{\lambda>0 : \frac{1}{\lambda} 	w \in  C \right\}= \vert \alpha \vert \mu_C(w),
\end{align*}
where we have made a change of variable and used the fact that $\sigma C = C$.

Finally, we prove the triangle inequality. For every $v,w \in X$, if $v/\lam \in C$ and $w/\mu \in C$ then setting $\gamma =  \lam/(\lam+\mu)$, we have
$$
\frac {v+w}{\lam +\mu} =\gamma \frac{v}{\lam} + (1-\gamma)  \frac{w}{\mu}
$$
and since $C$ is convex, then $\frac {v+w}{\lam +\mu} \in C$. We conclude that $\mu_C(v+w) \leq \mu_C(v) + \mu_C(w)$. The proof is completed. 
\end{proof}
Note that for such set $C$, the unit ball of the induced norm $\mu_C$ is $C$. 
Furthermore, if $\|\cdot\|$ is a norm then its unit ball satisfies the hypotheses of Lemma \ref{lem:minkowski-bounded}. 


Using this lemma we can prove Proposition \ref{prop:geo2}.
\vspace{.2truecm}

\begin{proof}{\bf of Proposition \ref{prop:geo2}}
Let $A_{\ind} = \{w \in \R^d: \|w\|_{\gamma^\ind} \leq 1,~\supp(w) \subseteq \supp(\gamma^\ind)\}$, and define 
$$
C ={\rm co} \bigcup\limits_{\ell=1}^m A_\ell.
$$
Note that $C$ is bounded and balanced, since each set $A_\ell$ is so. Furthermore, the hypothesis that $\sum_{\ell=1}^m \gamma^\ell > 0$ ensures that $C$ is absorbing. 
Hence, by Lemma \ref{lem:minkowski-bounded} the Minkowski functional $\mu_C$ defines a norm. 
We rewrite $\mu_C(w)$ as
$$
\mu_C(w) =  
\inf \left\{ \lambda : \lambda > 0,~w = \lambda \sum_{\ind=1}^m \alpha_{\ind} z_\ind,~z_\ind \in A_{\ind},~\alpha \in \Delta^{m-1}\right\} 
$$
where the infimum is over $\lambda$, the vectors $z_\ind \in\R^d$ and the vector $\alpha=(\alpha_1,\dots,\alpha_m)$, and recall $\Delta^{m-1}$ denotes the unit simplex in $\R^m$. 

The rest of the proof is structured as follows. We first show that $\mu_C(w)$ coincides with the right hand side of equation \eqref{Alabel}, which we denote by $\nu(w)$. Then we show that $\|w\|_\Theta = \mu_C(w)$ by observing that both norms have the same dual norm. 

Choose any vectors 
$v_1,\dots,v_m \in \R^d$ 
which satisfies the constraint set in the right hand side of \eqref{Alabel} and set $\alpha_\ell = \Vert v_{\ind} \Vert_{\gamma^{\ind}} /(\sum_{k=1}^m \Vert v_{k} \Vert_{\gamma^k})$ and $z_\ell = v_\ind /\|v_\ind\|_{\gamma^\ell}$. We have
$$
w = \sum_{\ind=1}^m v_{\ind} = \Big( \sum_{k=1}^m \Vert v_k \Vert_{\gamma^k} \Big) \sum_{\ind=1}^m 
\alpha_{\ind} z_{\ind}.
$$
This implies that $\mu_C(w) \leq \nu(w)$. Conversely, if $w = \lambda \sum_{\ell=1}^m \alpha_\ell z_\ell$ for some $z_\ell \in A_\ell$ and $\alpha\in \Delta^{m-1}$, then letting $v_\ind = \lambda \alpha_\ell z_\ell$ we have
\begin{align*}
 \sum_{\ind = 1}^m  \Vert v_{\ind} \Vert_{\gamma^{\ind}} 
&= \sum_{\ind = 1}^m  \Vert  \lambda \alpha_{\ind} z_{\ind} \Vert_{\gamma^{\ind}} 
=  \lambda  \sum_{\ind = 1}^m \alpha_{\ind}\Vert  z_{\ind} \Vert_{\gamma^{\ind}} 
\leq \lambda.
\end{align*}

Next, 
we show that both norms have the same dual norm. 
We noted in Proposition \ref{prop:theta-is-norm} that the dual norm of $\|\cdot\|_\Theta$ takes the form \eqref{eqn:theta-dual}. When $\Theta$ is the interior of ${\rm co} \{\gamma^1,\dots,\gamma^m\}$, this can be written as
\begin{align*}
\|u\|_{*,\Theta} = \sup_{\theta \in \Theta} \sqrt{\sum_{i=1}^d \theta_i u_i^2} = \max_{\ind=1}^m  \sqrt{\sum_{i=1}^d \gamma^\ind_i u_i^2}.
\end{align*}
We now compute the dual of the norm $\mu_C$, 
\begin{align}
\max_{w\in C}  \lb w,u\rb  = \max \left\{ \lb w,u\rb  : w\in \cup_{\ind=1}^m A_{\ind} \right\}  =  \max_{\ind=1}^m \max_{w \in A_{\ind}} \lb w,u\rb =\max_{\ind=1}^m \sqrt{\sum_{i=1}^d \gamma^\ind_i u_i^2}. \label{eqn:dual-is-max}
\end{align}
It follows that the norms share the same dual norm, hence $\mu_C(\cdot)$ coincides with $\|\cdot\|_\Theta$.
\end{proof}

The above proof reveals that the unit ball of the dual norm of $\Vert \cdot \Vert_{\Theta}$ is given by an intersection of ellipsoids in $\R^d$.  
Indeed equation \eqref{eqn:dual-is-max} provides that
\begin{align}
\left\{u \in \R^d: \Vert u \Vert_{*, \Theta} \leq 1 \right\} 
&= \left\{u \in \R^d: \max_{\ind=1}^m \sum_{i=1}^d \gamma^\ind_i u_i^2 \leq 1 \right\} \notag\\
&= \left\{u \in \R^d:  \sum_{i=1}^d \gamma^\ind_i u_i^2 \leq 1, \forall \ind \in \NN{m} \right\} \notag\\
&= \bigcap_{\ind \in \NN{m}}  \left\{u \in \R^d:  \sum_{i=1}^d \gamma^\ind_i u_i^2 \leq 1 \right\} \label{eqn:dual-unit-ball}.
\end{align}
Notice that for each $\ind \in \NN{m}$, the set $\left\{u \in \R^d:  \sum_{i=1}^d \gamma^\ind_i u_i^2 \leq 1 \right\}$ defines a (possibly degenerate) ellipsoid in $X$, where the $i$-th semi-principal axis has length $1/\sqrt{\gamma^\ind_i}$ (which is infinite if $\gamma^\ind_i=0$) and the unit ball of the dual $\Theta$-norm is given by the intersection of $m$ such ellipsoids. 

The following result, which is discussed in \cite[Section 2]{Argyriou2012} is key for the proof of Proposition~\ref{prop:spectral-unit-ball}. 
\begin{corollary}\label{prop:vector-ksup-Ck-def}
The unit ball of the vector $k$-support norm is equal to the convex hull of the set $\{w \in \mathbb{R}^d : {\rm card}(w) \leq k, \Vert w \Vert_2\leq 1\}$.
\end{corollary}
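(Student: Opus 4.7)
The plan is to derive this as a direct specialization of Proposition~\ref{prop:geo2}, using the representation of the $k$-support norm as a $\Theta$-norm already identified in the excerpt.

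First I would recall that, as established immediately after equation \eqref{eqn:GLO}, the $k$-support norm coincides with $\|\cdot\|_\Theta$ for
\begin{align*}
\Theta = \left\{\theta \in \R_{++}^d : \theta = \sum_{g \in \G_k} \lambda_g 1_g,~\lambda \in \Delta^{|\G_k|-1}\right\}.
\end{align*}
This puts the norm into the exact form required by Proposition~\ref{prop:geo2}, with the generating vectors $\{\gamma^\ell\}_{\ell=1}^m$ taken to be the family of indicators $\{1_g : g \in \G_k\}$. The hypothesis $\sum_\ell \gamma^\ell \in \R_{++}^d$ of Proposition~\ref{prop:geo2} holds because every coordinate $i \in \NN{d}$ belongs to at least one $g \in \G_k$ (any singleton $\{i\}$ works when $k \geq 1$).

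Next I would apply Proposition~\ref{prop:geo2} to conclude that the unit ball of $\|\cdot\|_{(k)}$ is the convex hull of
\begin{align*}
\bigcup_{g \in \G_k} \left\{w \in \R^d : \supp(w) \subseteq \supp(1_g),~\|w\|_{1_g} \leq 1\right\}.
\end{align*}
Now $\supp(1_g) = g$, and for $w$ supported in $g$ the seminorm $\|w\|_{1_g} = \sqrt{\sum_{i \in g} w_i^2}$ reduces to $\|w\|_2$. Hence each set in the union is exactly $\{w : \supp(w) \subseteq g,~\|w\|_2 \leq 1\}$.

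Finally I would observe that taking the union over all $g \in \G_k$ produces exactly $\{w \in \R^d : {\rm card}(w) \leq k,~\|w\|_2 \leq 1\}$: any $w$ with $\card(w) \leq k$ has its support contained in some $g \in \G_k$ (take $g = \supp(w)$ padded if necessary), and conversely any $w$ supported in some $g \in \G_k$ satisfies $\card(w) \leq |g| \leq k$. Taking the convex hull of this set yields the claim. There is no real obstacle here; the corollary is essentially a bookkeeping consequence of identifying the generators $\gamma^\ell = 1_g$ and recognizing that $\|\cdot\|_{1_g}$ restricted to vectors supported in $g$ is the Euclidean norm.
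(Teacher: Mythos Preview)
Your proof is correct and follows essentially the same route as the paper: the paper invokes Corollary~\ref{cor:GLO} with $\G = \G_k$ and then identifies the union $\bigcup_{g \in \G_k}\{w:\supp(w)\subseteq g,\ \|w\|_2\leq 1\}$ with $\{w:\card(w)\leq k,\ \|w\|_2\leq 1\}$, which is exactly what you do after unpacking Proposition~\ref{prop:geo2} directly with $\gamma^\ell = 1_g$.
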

\begin{proof}
The result follows directly by Corollary \ref{cor:GLO} for $\G = \G_k$ observing that in this case
$\bigcup_{g \in \G_k} \left\{w \in \R^d: \supp(w) \subseteq g,\|w\|_2 \leq 1\right\}= \{w \in \R^d: \card(w) \leq k,\|w\|_2\leq 1\}$.
\end{proof}

The next result is due to \citet{VonNeumann1937}; see also \citet{Lewis1995}.
\begin{theorem}[Von Neumann's trace inequality]\label{thm:vonneumann}
For any $d\times m$ matrices $X$ and $Y$, 
\begin{align*}
\tr (X Y\trans) \leq \langle \sigma(X), \sigma(Y) \rangle
\end{align*}
and equality holds if and only if $X$ and $Y$ admit a simultaneous singular value decomposition, that is, $
X = U {\rm diag}(\sigma(X))V\trans$, $Y= U {\rm diag}(\sigma(Y))V\trans$, where $U\in \mathbb{R}^{d \times d}$ and $V\in\mathbb{R}^{m \times m}$ are orthogonal matrices. 
\end{theorem}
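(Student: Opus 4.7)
The plan is to reduce the inequality to a comparison between a quadratic form on permutation matrices and one on a doubly stochastic matrix, via two classical tools: Birkhoff's theorem and the rearrangement inequality. Write singular value decompositions $X = U_1 \Sigma_X V_1\trans$ and $Y = U_2 \Sigma_Y V_2\trans$, where after zero-padding we may treat $\Sigma_X$ and $\Sigma_Y$ as square diagonal matrices of size $n = \max(d,m)$ with entries $\sigma_i(X)$ and $\sigma_i(Y)$ on the diagonal (extending the singular-value sequences by zeros if necessary). Setting $A = U_2\trans U_1$ and $B = V_1\trans V_2$, both of which are orthogonal, cyclicity of the trace gives
\begin{align*}
\tr(XY\trans) = \tr(\Sigma_X B \Sigma_Y A) = \sum_{i,j=1}^{n} \sigma_i(X)\,\sigma_j(Y)\,Q_{ij}, \qquad Q_{ij} := B_{ij} A_{ji}.
\end{align*}

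The heart of the proof is then to show $\sum_{i,j} \sigma_i(X)\sigma_j(Y) Q_{ij} \le \sum_i \sigma_i(X)\sigma_i(Y)$. I would use the AM-GM bound $|Q_{ij}| \le \tfrac12(B_{ij}^2 + A_{ji}^2)$ and define $D_{ij} = \tfrac12(B_{ij}^2 + A_{ji}^2)$. Since $A$ and $B$ are orthogonal, each of $\sum_j B_{ij}^2$, $\sum_j A_{ji}^2$, $\sum_i B_{ij}^2$, $\sum_i A_{ji}^2$ equals $1$, so $D$ is doubly stochastic. By Birkhoff's theorem $D = \sum_k \lambda_k P_k$ is a convex combination of permutation matrices. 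Combined with nonnegativity of the singular values this gives
\begin{align*}
\sum_{i,j} \sigma_i(X)\sigma_j(Y) Q_{ij}
\le \sum_{i,j} \sigma_i(X)\sigma_j(Y) D_{ij}
= \sum_k \lambda_k \sum_i \sigma_i(X)\,\sigma_{\pi_k(i)}(Y),
\end{align*}
where $\pi_k$ is the permutation corresponding to $P_k$. Because both singular value sequences are arranged in nonincreasing order, the rearrangement inequality yields $\sum_i \sigma_i(X)\sigma_{\pi_k(i)}(Y) \le \sum_i \sigma_i(X)\sigma_i(Y)$ for each $k$, and since $\sum_k \lambda_k = 1$ the desired inequality $\tr(XY\trans) \le \langle \sigma(X), \sigma(Y)\rangle$ follows.

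For the equality characterization, the easy direction is direct: if $X = U\, \mathrm{diag}(\sigma(X))\, V\trans$ and $Y = U\, \mathrm{diag}(\sigma(Y))\, V\trans$ share $U,V$, then $\tr(XY\trans) = \tr(\mathrm{diag}(\sigma(X))\,\mathrm{diag}(\sigma(Y))) = \langle \sigma(X),\sigma(Y)\rangle$. The harder direction is to show that equality forces a simultaneous SVD, and this is the main obstacle of the proof. Tracing back through the chain of inequalities, equality requires: (i) $Q_{ij} = |Q_{ij}|$, so $B_{ij}A_{ji} \ge 0$; (ii) equality in AM-GM, i.e.\ $B_{ij}^2 = A_{ji}^2$, whence $B_{ij} = A_{ji}$; and (iii) the doubly stochastic matrix $D$ must place all its mass on permutations $\pi$ for which $\sigma_i(X)\sigma_{\pi(i)}(Y) = \sigma_i(X)\sigma_i(Y)$ for every $i$, i.e.\ permutations that only exchange indices with equal singular values in the relevant block. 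The delicate step is to leverage (ii)--(iii) together with the orthogonality of $A$ to rebuild a common pair of orthogonal matrices $(U,V)$ for $X$ and $Y$: on each joint eigenspace determined by a level set of $\sigma(X)$ or $\sigma(Y)$, one is free to rotate the left or right singular vectors, and one exploits this freedom to align the SVDs of $X$ and $Y$ while preserving the diagonal form of their singular-value matrices.
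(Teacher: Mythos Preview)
The paper does not supply its own proof of this theorem: it is stated in the appendix as a classical result attributed to \citet{VonNeumann1937} (with a pointer to \citet{Lewis1995}) and is used without proof. There is therefore no argument in the paper to compare your proposal against.

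That said, your approach to the inequality is a standard and correct one. The reduction via SVD to $\tr(\Sigma_X B \Sigma_Y A)=\sum_{i,j}\sigma_i(X)\sigma_j(Y)B_{ij}A_{ji}$, the AM--GM bound $|B_{ij}A_{ji}|\le \tfrac12(B_{ij}^2+A_{ji}^2)$ yielding a doubly stochastic matrix, Birkhoff's theorem, and the rearrangement inequality together give the bound cleanly. One small caveat worth making explicit: the zero-padding to size $n=\max(d,m)$ should be done so that $U_1,U_2,V_1,V_2$ remain orthogonal of the appropriate sizes (i.e.\ embed $\Sigma_X,\Sigma_Y$ as $n\times n$ diagonal matrices and extend the orthogonal factors accordingly), otherwise $A$ and $B$ need not be square orthogonal.

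For the equality characterization, your sketch is on the right track but is not yet a proof. The equality conditions (i)--(ii) you list only follow on indices $(i,j)$ with $\sigma_i(X)\sigma_j(Y)>0$; on the remaining block the AM--GM slack is irrelevant, so you cannot conclude $B_{ij}=A_{ji}$ globally. The genuine work is then to show that the freedom to rotate within each singular-value level set of $X$ and of $Y$ suffices to produce common orthogonal $U,V$; this block-by-block alignment argument is where a full proof is needed, and you have only indicated its shape.
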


{The following inequality is given in \citet[Sec. 9 H.1.h]{Marshall1979}.  

\begin{lemma}
\label{lem:olkin}
If $A,B \in {\bf S}^d_+$, then it holds
\begin{align*}
\tr(AB) = \sum_{i=1}^d \lambda_i(AB) \geq \sum_{i=1}^d \lambda_i(A) \lambda_{d-i+1}(B).
\end{align*}
\end{lemma}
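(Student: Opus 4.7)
The plan is to establish the equality $\tr(AB)=\sum_i \lambda_i(AB)$ first and then the lower bound, treating them separately. The equality is essentially a remark: although $AB$ need not be symmetric, it is similar to $A^{1/2}BA^{1/2}$, which is in ${\bf S}^d_+$; hence the eigenvalues of $AB$ are real and nonnegative, and their sum equals $\tr(AB)$ by the standard relation between trace and eigenvalues.

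For the lower bound I would diagonalize $A$. Write $A=U\,\diag(\lambda(A))\,U\trans$ for some orthogonal $U$, and set $C=U\trans B U$. Then $C \in {\bf S}^d_+$ with the same eigenvalues as $B$, and a direct computation gives
\begin{equation*}
\tr(AB) = \tr\!\bigl(\diag(\lambda(A))\,C\bigr) = \sum_{i=1}^d \lambda_i(A)\,C_{ii}.
\end{equation*}
The Schur-Horn theorem then tells us that the diagonal vector $(C_{11},\ldots,C_{dd})$ is majorized by $\lambda(B)$; equivalently, if $c^{\uparrow}$ denotes the nondecreasing rearrangement of $(C_{11},\ldots,C_{dd})$ and $\lambda^{\uparrow}(B)=(\lambda_d(B),\ldots,\lambda_1(B))$, then $\sum_{i=1}^{k}c^{\uparrow}_i\ge \sum_{i=1}^{k}\lambda^{\uparrow}_i(B)$ for every $k\in\NN{d}$, with equality at $k=d$.

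To finish I would apply two successive manipulations. First, since $\lambda_1(A)\ge\cdots\ge \lambda_d(A)\ge 0$, the rearrangement inequality yields $\sum_i \lambda_i(A)\,C_{ii}\ \ge\ \sum_i \lambda_i(A)\,c^{\uparrow}_i$, because pairing a nonincreasing sequence with a nondecreasing rearrangement minimizes the inner product. Second, writing $z_i=c^{\uparrow}_i-\lambda^{\uparrow}_i(B)$ and $Z_k=\sum_{i=1}^k z_i$, Abel summation gives
\begin{equation*}
\sum_{i=1}^d \lambda_i(A)\,z_i \;=\; \sum_{i=1}^{d-1}\bigl(\lambda_i(A)-\lambda_{i+1}(A)\bigr)\,Z_i + \lambda_d(A)\,Z_d,
\end{equation*}
which is nonnegative because $Z_k\ge 0$ for all $k$, $Z_d=0$, and $\lambda_i(A)-\lambda_{i+1}(A)\ge 0$. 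Combining the two steps yields the claim $\tr(AB)\ge \sum_i \lambda_i(A)\lambda_{d-i+1}(B)$.

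The only nontrivial ingredient is the Schur-Horn majorization of the diagonal of a symmetric matrix by its spectrum; this is classical, but if one wanted to avoid citing it one could instead derive the weaker one-sided inequality $\sum_{i=1}^k C_{ii}\le \sum_{i=1}^k \lambda_i(B)$ directly from Ky Fan's maximum principle (the sum of the $k$ largest eigenvalues equals the maximum of $\tr(P\trans B P)$ over $d\times k$ matrices with orthonormal columns). Everything else is rearrangement and an Abel summation, so I expect the invocation of Schur-Horn (or Ky Fan) to be the main conceptual step, while the rest is routine.
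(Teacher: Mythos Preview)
Your proof is correct. The paper does not actually prove this lemma; it simply cites Marshall and Olkin (1979, Sec.~9.H.1.h). The route you take---diagonalize $A$, invoke Schur--Horn to majorize the diagonal of $U\trans B U$ by $\lambda(B)$, then combine the rearrangement inequality with an Abel summation---is the standard argument one finds in that reference, so there is nothing materially different to compare.

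One small technical point: the claim that $AB$ is \emph{similar} to $A^{1/2}BA^{1/2}$ requires $A$ to be invertible. For singular $A$ the cleaner statement is that $XY$ and $YX$ always share the same nonzero eigenvalues (take $X=A^{1/2}$, $Y=A^{1/2}B$), which already shows the spectrum of $AB$ is real and nonnegative; alternatively a limiting argument handles the singular case. This does not affect the trace identity itself (trace equals the sum of eigenvalues for any square matrix) or the remainder of your argument.
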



}

\end{document}